\documentclass[twoside]{article}

\usepackage[accepted]{aistats2020}
\usepackage{amsmath}
\usepackage{amssymb}
\usepackage{graphicx}
\usepackage{dsfont}
\usepackage{algorithm}
\usepackage{color}
\usepackage{setspace}
\usepackage{appendix}
\usepackage{comment}
\usepackage{multirow}
\usepackage{caption}
\usepackage{subcaption}
\usepackage{natbib}

\usepackage{url}
\usepackage{xr}

\usepackage{amsthm}
\newtheorem{theorem}{Theorem}
\newtheorem{lemma}{Lemma}

\newtheorem{proposition}{Proposition}

\def\ham{h_{a,M}}
\def\Bham{B(\ham)}

\DeclareMathOperator*{\argmax}{argmax}
\DeclareMathOperator*{\argmin}{argmin}

\allowdisplaybreaks

%
%




\begin{document}

%

%

\twocolumn[

\aistatstitle{On Thompson Sampling for Smoother-than-Lipschitz Bandits}

\aistatsauthor{ James A. Grant \And David S. Leslie }

\aistatsaddress{ STOR-i Centre for Doctoral Training, \\ Lancaster University. \And  Department of Mathematics and Statistics, \\ Lancaster University 
and PROWLER.io. } 
]

\begin{abstract}
  Thompson Sampling is a well established approach to bandit and reinforcement learning problems. However its use in continuum armed bandit problems has received relatively little attention. We  provide the first bounds on the regret of Thompson Sampling for continuum armed bandits under weak conditions on the function class containing the true function and sub-exponential observation noise. Our bounds are realised by analysis of the eluder dimension, a recently proposed measure of the complexity of a function class, which has been demonstrated to be useful in bounding the Bayesian regret of Thompson Sampling for simpler bandit problems under sub-Gaussian observation noise. We derive a new bound on the eluder dimension for classes of functions with Lipschitz derivatives, and generalise previous analyses in multiple regards. 
\end{abstract}

\section{Introduction}

Thompson Sampling (TS) \citep{Thompson1933,Russo2018} is a Bayesian approach to sequential decision making problems that has been widely applied and found to have both strong empirical performance and desirable theoretical properties. A major advantage of TS is it can typically be extended to new problems in a straightforward manner, with empirical success and without a need to tune parameters or rely on detailed theory to design an algorithmic structure. Two of its shortcomings, however, are that it may be more challenging to analyse theoretically than related approaches, and that for complex problems it may often only be implemented approximately, since it relies on draws from the distribution on the reward function. As a result of these challenges, theoretical guarantees on TS are mostly limited to parametric bandit problems.

\citet{RussoVanRoy2014} introduced a general analytical technique, based on a measure of problem complexity called the eluder dimension, and applied it to analyse the performance of TS on a family of parametric bandit problems. In this paper we show how this eluder-dimension-based analysis can be generalised substantially. We provide new order-optimal performance guarantees for TS on non-parametric continuum-armed bandit problems whose reward functions have a number of Lipschitz derivatives. These guarantees provide insights into the performance of exact TS which significantly advance current understanding, and also serve as empirical benchmarks and analytical tools for future analyses of approximate TS.

\subsection{Bandit Problems}

Multi-armed bandit (MAB) problems \citep{Lattimore2018} are classic models of exploration-exploitation dilemmas in sequential decision making problems. Among the most general of these is the stochastic Continuum-Armed Bandit (CAB) problem \citep{Agrawal1995}. The CAB models a scenario in which a decision-maker repeatedly selects \emph{actions}, represented by elements $a$ of an \emph{action set} $\mathcal{A} \subseteq \mathbb{R}^d$. Taking an action grants the decision-maker a \emph{reward} which is a noisy perturbation of some function $f: \mathcal{A} \rightarrow \mathbb{R}$, called the \emph{reward function}, at the selected action $a$. The decision-maker's objective is to maximise the sum of the rewards they receive over some finite number of actions, without knowledge of $f$.

Effective strategies toward realising this objective will exhibit an appropriate balance between selecting `exploratory' actions, which aim to learn the function $f$ across $\mathcal{A}$ to gain confidence in the location of its maximum, and `exploitative' actions, which target regions where $f$ is empirically suggested to take large values in order to maximise the sum of rewards. This need to balance between exploration and exploitation is present in simpler bandit problems (e.g. those  where the set $\mathcal{A}$ is finite, or where the function $f$ is known to have a simple parametric form). However in the more general CAB setting, where we have limited assumptions on $f$, realising this balance has historically been more challenging.

\subsection{Thompson Sampling}

Thompson Sampling (TS), also referred to as \emph{posterior sampling}, is a Bayesian approach to sequential decision making problems which aims to achieve an appropriate balance between exploration and exploitation through randomisation \citep{Thompson1933,Russo2018}. 

Over a sequence of rounds $t \in \mathbb{N}$, the decision-maker utilising TS selects actions by sampling a function $\tilde{f}_t$ from their current posterior belief on the form of the true reward function $f$, and then selecting an action $a_t \in \mathcal{A}$ which maximises $\tilde{f}_t$ - i.e. an action that would be expected to contribute optimally to the cumulative reward if $\tilde{f}_t$ were the true reward function. Figure \ref{fig::TS_example} illustrates a single step of TS on a CAB.

\begin{figure*}[htbp]
  \includegraphics[width=\textwidth,height=3.5cm]{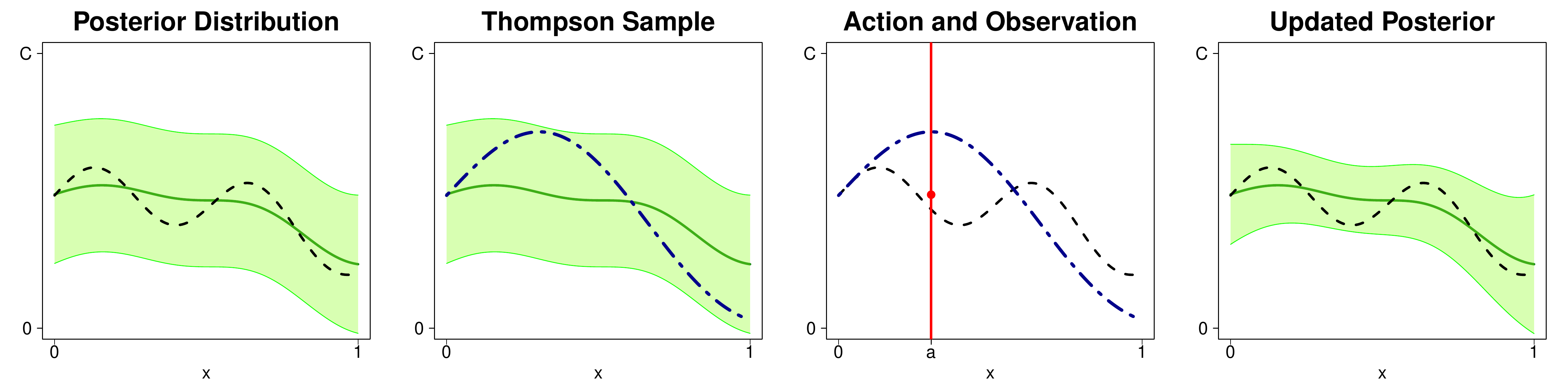}
  \caption{Illustrative example of Thompson Sampling in a round $t \in \mathbb{N}$. The first (leftmost) pane displays a credible interval of a posterior $p_{t-1}$ on $\mathcal{F}$ in green, and a true reward function $f_0 \in \mathcal{F}$ in black. In the second pane the blue curve represents a function $\tilde{f}_t$ sampled from $p_{t-1}$. In the third pane, the choice of action $a_t \in \argmax_{a \in [0,1]}\tilde{f}_t(a)$ is highlighted in red, along with a reward observation $R(a_t)$ highlighted as a red dot. Finally, the fourth pane displays the posterior $\pi_t$ updated on the basis of $(a_t,R(a_t))$.} \label{fig::TS_example}
\end{figure*}

TS therefore encourages exploration since the posterior distribution has more uncertainty on the value of $f$ in regions of $\mathcal{A}$ where few actions have been selected. TS gradually favours exploitation as the posterior distribution naturally contracts around $f$ and sampled functions have maxima in similar locations to $f$. 

TS has been shown empirically to be highly effective in a wide range of bandit problems \citep{ChapelleLi2011,Russo2018}, and theoretical results \cite[etc.]{May2012,Kaufmann2012thompson,Agrawal2012,RussoVanRoy2014} have confirmed this in numerous settings where the reward function may be written in terms of a finite set of parameters. The idea of TS extends readily to nonparametric reward functions, but has received little attention in the literature. We believe that this is, in part, due to the challenges of theoretical analysis and precise inference in complex Bayesian models.

Recently, tools have been developed that mean these challenges are not as insurmountable as they once were. Algorithms for approximate Bayesian inference, such as sequential Monte Carlo and variational inference, have become increasingly sophisticated in recent years, to the point that high quality approximations to TS are now feasible \citep{LuVanRoy2017,UrteagaWiggins2018a,UrteagaWiggins2018b}. 

On the theoretical side, \cite{RussoVanRoy2014} introduce a general analytical approach for deriving performance guarantees for TS in bandit problems. This method is based on characterising the entropy of the function class in which possible reward functions are contained, via a quantity called the \emph{eluder dimension}. In \cite{RussoVanRoy2014} this technique was successfully used to analyse the performance of TS on bandit problems with (generalised) linear reward functions.

\citeauthor{RussoVanRoy2014}'s technique can be applied much more widely.  In this paper, we show that the method for deriving performance guarantees in terms of the eluder dimension can be extended to CABs whose reward functions are members of non-parametric function classes. We show that TS achieves order optimal performance subject to sufficient conditions on the smoothness of these functions (that they have infinitely many Lipschitz derivatives). We further formalise the framework in which this is achievable in the following subsection.

\subsection{Model} \label{sec::lipTSmodel}
We specify a general CAB problem as a tuple $(\mathcal{A},f_0,p_\eta)$, where $\mathcal{A}$ is the set of available actions, $f_0:\mathcal{A} \rightarrow \mathbb{R}$ is the unknown reward function, and $p_\eta$ is the distribution of the reward noise. We model $f_0$ as being a sample from $p_0$, a non-parametric prior on a function class $\mathcal{F}$ whose nature we will specify later.

In a sequence of rounds $t \in [T] \subseteq \mathbb{N}$, the decision-maker selects an action $a_t \in\mathcal{A}$ and receives a reward $R_t=f_0(a_t)+\eta_t$, which is a noisy perturbation of the reward function at $a_t$ with noise terms $\eta_t$ distributed according to $p_\eta$. Let $\mathcal{H}_{t}=\sigma(a_1,R_1,\dots,a_{t},R_{t})$ be the $\sigma$-algebra induced by the history of the first $t$ actions and rewards. We assume that for $t \in [T]$, $\eta_t$ is $(\sigma^2,b)$-sub-exponential conditioned on $a_t$, meaning \begin{equation}
\mathbb{E}\big(e^{\lambda\eta_t}|\mathcal{H}_{t-1},a_t \big) \leq e^{\frac{\lambda^2 \sigma^2}{2}}, \quad \forall \enspace |\lambda| \leq \frac{1}{b}. \label{eq::subexpprop}
\end{equation} The noise terms $\eta_t$ are also assumed to be conditionally independent given the actions $a_t$, $t \in [T]$.

We are interested in the performance of TS as a policy to select actions $a_t$ for $t \in [T]$. Let $p_{t}$ denote the posterior distribution on $f_0$ conditioned on $\mathcal{H}_t$ and let $\tilde{f}_t$ be a sample from $p_{t}$. The TS approach is the one which chooses an action $a_t \in \argmax_{a \in \mathcal{A}} \tilde{f}_{t-1}(a)$ in round $t$, breaking ties arbitrarily if the maximiser is non-unique.

We principally concern ourselves with the Bayesian regret of TS in $T$ rounds, given as 
\begin{equation}
BR(T)= \mathbb{E}_{p_0}\bigg(\sum_{t=1}^T \max_{a \in \mathcal{A}} f_0(a)-f_0(a_t)\bigg), \label{eq::BRdef}
\end{equation} where $\mathbb{E}_{p_0}$ denotes expectation with respect to the prior $p_0$. In particular, we are interested in bounding the Bayesian regret as a function of $T$ for particular $\mathcal{A}$ and $\mathcal{F}$, and the order with respect to $T$ that such bounds possess. We will also derive lower bounds on the frequentist regret of any algorithm. The frequentist regret, \begin{displaymath}
Reg(T) = T \max_{a \in \mathcal{A}}f_0(a) -\sum_{t=1}^T \mathbb{E}\big(f(a_t) \, | \, f=f_0\big),
\end{displaymath} is similar to the Bayesian regret \eqref{eq::BRdef}, the only difference being that the expectation is (as the name suggests) a frequentist expectation conditioned on a fixed reward function $f_0$, whereas the Bayesian regret additionally takes the expectation of the frequentist regret with respect to this reward function $f_0$. Frequentist regret bounds which are available for any function $f_0$ may then be seen as uniform bounds on the Bayesian regret for any prior. We will assess the performance of TS by considering the gap (if any) between the order of the upper and lower bounds. We note that while the analytical tools to upper bound the frequentist regret of non-parametric TS are not currently available, the Bayesian regret is still a useful, and indeed natural, metric to consider in the Bayesian framework.

\subsection{Related Work}
Numerous authors have studied the frequentist regret of TS in bandit problems, with varying assumptions on the action set, feedback mechanism and reward noise distribution \citep{May2012,Agrawal2012,Kaufmann2012thompson,Korda2013,Komiyama2015,Wang2018}. None of these works address the fully nonparametric CAB. 

Study of the Bayesian regret of TS originated with \cite{RussoVanRoy2014}. Using the eluder dimension measure of the complexity of the reward function class they derived a bound on the Bayesian regret of TS for general action sets and parametric reward function classes. They specialise this to bandit problems with (generalised) linear reward functions. Quadratic functions and applications in model-based reinforcement learning are considered by \cite{OsbandVanRoy2014}. Our paper considers a more substantial extension of this technique to reward functions with Lipschitz derivatives.

As already noted, one challenge in deploying Thompson sampling is that sampling from the requisite posterior distributions can only be carried out approximately, thus rendering the theoretical results obsolete. Recently, \cite{PhanEtAl2019} have studied the regret of approximations of TS, demonstrating a link between the (assumed to be fixed) error of the approximation and the regret of TS for $K$-armed bandit problems. We do not address this aspect of the theory in the current article.

The main alternatives to TS in CAB problems, are upper confidence bound (UCB) approaches. These methods, which follow from ideas in \cite{LaiRobbins1985} and \cite{AuerEtAl2002} for simpler $K$-armed bandits, encourage exploration by making decisions with respect to optimistic estimates of the reward function. Certain UCB methods have been shown to have order-optimal regret bounds in certain CAB problems. These approaches typically employ an \emph{adaptive discretisation} structure, where the action space available at time $t$ is limited to some $\mathcal{A}_t \subset \mathcal{A}$ to force an appropriate level of exploration. 

In particular, the `zooming algorithm' of \cite{Kleinberg2008} maintains a finite set of `active arms' in $\mathcal{A}$ and only selects actions from within this set. The size of this set is gradually increased by adding arms with high exploitative or exploratory value. The frequentist regret of the zooming algorithm can be shown to be bounded as $O(T^{2/3})$ for the CAB with Lipschtiz reward function and sub-Gaussian noise. \cite{LuEtAl2019} extend these results to heavy-tailed reward noise distributions. This rate is known to be optimal, as \cite{Kleinberg2005} demonstrate that the best achievable regret is $\Omega(T^{2/3})$ across all possible problem instances. 

A similar approach is the Hierarchical Online Optimisation (HOO) algorithm of \cite{Bubeck2011}, which discretises the action space according to a tree-based algorithm. In \cite{Bubeck2011} a yet more general 
bandit problem is studied where the action set may be any appropriate metric space. HOO is shown to have frequentist regret bounded with order $O(T^{(d'+1)/(d'+2)})$ where $d'>0$ is a parameter related to the covering number of the metric space, and nature of the possible reward functions. Recent works of \cite{Slivkins2019} and \cite{KleinbergEtAl2019} provide more extensive summaries of bandits on metric spaces.

Apart from this, the special case of a CAB problem with sub-Gaussian noise whose reward function is a sample from a Gaussian process (GP), sometimes referred to as \emph{GP optimisation}, has received particular attention. This setting is more restrictive than ours, but is popular because of its intersection with common modelling assumptions in Bayesian optimisation \citep{ShahriariEtAl2016}. The GP-UCB approach of \cite{SrinivasEtAl2009,SrinivasEtAl2012} exploits the closed-form of the GP posterior to calculate an upper confidence function (a combination of the mean and variance of the posterior GP) at each round which is optimised to select actions and enjoys optimal order regret. In this setting both GP-UCB and a GP-based variant of TS can be shown to have $O(\sqrt{T\log(T)})$ Bayesian regret \citep{SrinivasEtAl2012,RussoVanRoy2014}, which is optimal for the problem up to a logarithmic factor. 



\subsection{Key Contributions and Structure}
Our main contribution is a bound on the Bayesian regret of Thompson Sampling applied to Continuum-armed Bandits where the reward function is a sample from a prior distribution on the class of bounded functions functions with $M \in \mathbb{N}$ Lipschitz smooth derivatives and the reward noise is sub-exponentially distributed. As far as we are aware this is the first analysis of the performance of TS based on nonparametric inference that considers such a general framework. We derive a {$O(T^{(2M^2+11M+10)/(4M^2+14M+12)})$} 
Bayesian regret bound, 
which approaches $O(\sqrt{T})$ as $M\to\infty$. 

In the process of proving this result we give the first bound on the $\epsilon$-eluder dimension of Lipschitz function classes, and we extend bounds on the Bayesian regret of Thompson Sampling for bandit problems with (generalised) linear reward function to the sub-exponential reward noise setting.

{Furthermore we derive an $\Omega(T^{(M+2)/(2M+3)})$ lower bound on regret. There is thus an $O(T^{(3M+2)/(4M^2+14M+12)})$ gap between the lower and upper bounds, which is small for large $M$. It is an open question as to whether this gap is due to TS being suboptimal, or whether the upper (or lower) bounds we have derived are not tight.}

The remainder of the material is organised as follows. In Section \ref{sec::genBRbounds} we present an extension of \citet{RussoVanRoy2014}'s general bound on the Bayesian regret. We specialise this to problems where the reward function class has Lipschitz derivatives in Section \ref{sec::SpecificFunClasses}, and conclude with a discussion in Section 4. Proofs are relegated to the Appendices.

\section{General Bound on the Bayesian Regret} \label{sec::genBRbounds}
We first give a bound on the Bayesian regret for general function classes, $\mathcal{F}$, and action sets, $\mathcal{A}$ - including the CAB whose reward function has Lipschitz derivatives. Our result is similar to, but more general than, Proposition 10 of \cite{RussoVanRoy2014}. Their result holds only under sub-Gaussian noise on the reward observations, and has less flexibility in terms of being able to tune the terms based on the properties of $\mathcal{F}$. Our result has such added flexibility and applies to sub-exponential rewards.

Both our bound and that of \cite{RussoVanRoy2014} are expressed in terms of measures of the complexity of the function class $\mathcal{F}$. This is natural, since in more complex function classes, it will be more challenging to learn the true function.
Specifically, two notions of the complexity of $\mathcal{F}$ are of interest, the $\epsilon$\emph{-eluder dimension}, and \emph{ball-width function}, which we introduce below. 

Firstly, to define the $\epsilon$-eluder dimension, we introduce the notion of $\epsilon$-dependence. An action $a \in \mathcal{A}$ is called $\epsilon$\emph{-dependent} of actions $a_{1:n}=\{a_1,\dots,a_n\} \in \mathcal{A}$ with respect to $\mathcal{F}$ if any pair of functions $f,\tilde{f} \in \mathcal{F}$ satisfying $\sqrt{\sum_{i=1}^n (f(a_i)-\tilde{f}(a_i))^2}\leq \epsilon$ also satisfies $f(a)-\tilde{f}(a) \leq \epsilon$ for some $\epsilon>0$. An action $a$ is $\epsilon$\emph{-independent} of $a_{1:n}$ if it is not $\epsilon$-dependent of $a_{1:n}$. The $\epsilon$\emph{-eluder dimension} $dim_E(\mathcal{F},\epsilon)$ is the length of the longest sequence of elements in $\mathcal{A}$, such that for some $\epsilon' \geq \epsilon$, every element is $\epsilon'$-independent of its predecessors. 

Informally, the eluder dimension is a measure of the `wigglyness' 
of the functions in $\mathcal{F}$, as it quantifies how long a sequence of actions may be such that at each action, there exist two functions in $\mathcal{F}$ that take well-separated values, but have similar (enough) values for all actions taken previously. We will later show that the more Lipschitz derivatives the functions in a function class have, the smaller  its eluder dimension is.

Second, we introduce a \emph{ball-width function} $\beta^*_n$. This ball-width function defines the size of high-probability confidence sets in the function class $\mathcal{F}$, in terms of $n$, a number of reward observations.
\citet{RussoVanRoy2014} introduce an analogous function, in their equation (8), for the case of sub-Gaussian noise.  The properties of sub-exponential distributions mean that our function is necessarily more complex, but its interpretation is the same. In particular $\beta^*_n$ depends on $N(\alpha,\mathcal{F},||\cdot||_\infty)$, the $\alpha$-covering number of the function class $\mathcal{F}$ with respect to the uniform norm, $||\cdot||_\infty$. Furthermore it depends on $\sigma^2$ and $b$, the sub-exponential parameters of the reward noise distribution, free parameters $\alpha,\delta>0$ which will be chosen to optimise the regret bound, and $\lambda$, 
which retains its interpretation as the free parameter in Equation \eqref{eq::subexpprop}. 

The ball-width function has the following form: 
\begin{align}\label{eq::bwf}
\beta^*_n(\mathcal{F}&,\delta,\alpha,\lambda) = \frac{2\alpha}{1-2\lambda\sigma^2}\times \\
\bigg[&\frac{\log(N(\alpha,\mathcal{F},||\cdot||_\infty)/\delta)}{2\lambda\alpha} 
+ n (4C+\alpha)(1-\lambda\sigma^2) \nonumber \\  &+ \sum_{i \leq \lfloor n_0 \rfloor}\sqrt{2\sigma^2\log(4i^2/\delta)} 
+ \sum_{i \geq \lceil n_0 \rceil}^n 2b\log(4i^2/\delta)\bigg],\nonumber 
\end{align} where $n_0=\sqrt{\frac{\delta}{4}\exp\frac{\sigma^2}{2b^2}}$.  

Together, the eluder dimension and ball-width function characterise a bound on the Bayesian regret of TS applied to the general bandit problem with reward function drawn from $\mathcal{F}$ and actions selected from $\mathcal{A}$. This bound is given in the following theorem.

\begin{theorem} \label{prop::generaleluderregret}
Consider Thompson sampling with prior $p_0$ on a function class $\mathcal{F}$ applied to the bandit problem $(\mathcal{A},f_0,p_\eta)$ where the reward function $f_0$ is drawn from $p_0$, all functions $f \in \mathcal{F}$ are $f:\mathcal{A} \rightarrow [0,C]$ for some $C>0$, and the reward noise distribution $p_\eta$ is $(\sigma^2,b)$-sub-exponential.
For all problem horizons $T \in \mathbb{N}$, nonincreasing functions $\kappa:\mathbb{N} \rightarrow \mathbb{R}_+$, and parameters $\alpha>0, \delta\leq 1/(2T),$ and  $|\lambda|\leq (2Cb)^{-1}$, it is the case that \begin{align}
BR(T) &\leq T\kappa(T)+ (dim_E(\mathcal{F},\kappa(T))+1)C \nonumber \\
&+ 4\sqrt{dim_E(\mathcal{F},\kappa(T))\beta^*_T(\mathcal{F},\alpha,\delta,\lambda)T}. \label{eq::generalbayesregret}
\end{align}
\end{theorem}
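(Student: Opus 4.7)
The plan is to adapt the Russo--Van Roy (2014) proof template for the Bayesian regret of Thompson Sampling to the sub-exponential noise setting. The strategy rests on three pillars: constructing high-probability confidence sets whose squared radius in the empirical norm is $\beta_T^*$, decomposing the per-round regret using the fundamental symmetry of TS, and then controlling the cumulative confidence widths using the eluder dimension.

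First I would construct a sequence of data-dependent confidence sets $\mathcal{F}_t \subseteq \mathcal{F}$, each centred on a least-squares estimate, and show that $f_0 \in \mathcal{F}_t$ with probability at least $1 - \delta$ for each $t$. The ball-width function $\beta_T^*$ is precisely the allowed squared $\ell_2$ distance of members of $\mathcal{F}_t$ from this estimate under the empirical norm $\|g\|_{E,t}^2 = \sum_{s<t} g(a_s)^2$. In the sub-Gaussian case treated by \cite{RussoVanRoy2014} this radius arises from a single Hoeffding-type bound combined with an $\alpha$-covering argument on $\mathcal{F}$ in the sup-norm, giving the $\log N(\alpha, \mathcal{F}, \|\cdot\|_\infty)$ term. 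For the sub-exponential case I would replace the Hoeffding step with a Bernstein/Freedman-type martingale concentration inequality, whose two-regime structure---sub-Gaussian behaviour when the Chernoff parameter satisfies $|\lambda| \leq 1/b$, exponential tails otherwise---is precisely what yields the two residual sums in \eqref{eq::bwf}: the $\sqrt{2\sigma^2\log(4i^2/\delta)}$ terms below the threshold $n_0$ and the $2b\log(4i^2/\delta)$ terms above it. The free parameter $\lambda$ is retained throughout so that it can be tuned in applications.

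Next I would carry out the standard TS decomposition. Define the upper confidence function $U_t(a) = \sup_{f \in \mathcal{F}_t} f(a)$, which is $\mathcal{H}_{t-1}$-measurable, and the width $w_t(a) = \sup_{f, \tilde f \in \mathcal{F}_t}(f(a) - \tilde f(a))$. Because $\tilde f_t \sim p_{t-1}$ has the same conditional distribution as $f_0$ given $\mathcal{H}_{t-1}$, the pairs $(a_t, \tilde f_t)$ and $(a^*, f_0)$ (with $a^* \in \arg\max_a f_0(a)$) are also identically distributed given $\mathcal{H}_{t-1}$, yielding $\mathbb{E}[U_t(a^*) \mid \mathcal{H}_{t-1}] = \mathbb{E}[U_t(a_t) \mid \mathcal{H}_{t-1}]$. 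Combined with $U_t(a^*) \geq f_0(a^*)$ on $\{f_0 \in \mathcal{F}_t\}$, this gives the per-round bound
\begin{equation*}
\mathbb{E}[f_0(a^*) - f_0(a_t)] \leq \mathbb{E}[w_t(a_t) \mathbf{1}\{f_0 \in \mathcal{F}_t\}] + C \cdot \mathbb{P}(f_0 \notin \mathcal{F}_t),
\end{equation*}
and summing over $t$ with $\delta \leq 1/(2T)$ contributes at most $C/2$ from the failure events, absorbed into the $(dim_E+1)C$ term.

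Finally I would control $\sum_{t=1}^T w_t(a_t)$ using the eluder-dimension pigeonhole argument. Splitting rounds according to whether $w_t(a_t) \leq \kappa(T)$ (contributing at most $T\kappa(T)$) or $w_t(a_t) > \kappa(T)$, the definition of $\epsilon$-dependence implies that among the large-width rounds the number of times the width exceeds a level $\tau > \kappa(T)$ is at most $dim_E(\mathcal{F}, \kappa(T)) \beta_T^* / \tau^2$, since otherwise one could extract a long $\tau$-independent subsequence contradicting the bound on $dim_E$. Summing via Cauchy--Schwarz yields the $4\sqrt{dim_E(\mathcal{F},\kappa(T)) \beta_T^* T}$ term together with an additive $(dim_E(\mathcal{F},\kappa(T))+1)C$ correction from the first $dim_E$ rounds where no nontrivial width bound is available. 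The main obstacle is the sub-exponential concentration step: carefully threading the Chernoff parameter $\lambda$ through a Bernstein-type martingale argument while preserving compatibility with the covering-based union bound is what dictates the precise piecewise form of $\beta_T^*$ around $n_0$, and does not follow from a black-box replacement in the original sub-Gaussian proof.
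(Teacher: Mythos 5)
Your proposal follows essentially the same route as the paper's proof: high-probability least-squares confidence sets of squared empirical radius $\beta_T^*$ built from a sub-exponential martingale concentration bound combined with an $\alpha$-covering/discretisation argument, the posterior-symmetry regret decomposition of Russo and Van Roy, and the eluder-dimension pigeonhole control of the cumulative confidence widths split at the level $\kappa(T)$. The only cosmetic discrepancy is that in the paper the two-regime sums around $n_0$ in $\beta_T^*$ arise from the sub-exponential tail bound on the individual noise terms $|\eta_i|$ inside the discretisation-error lemma, rather than from the martingale concentration step itself, but this does not affect the validity of your outline.
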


The bound \eqref{eq::generalbayesregret} is useful because it characterises the regret in terms of the eluder dimension and ball-width function of the function class $\mathcal{F}$. Each of these may be bounded in terms of $T$ based on the properties of $\mathcal{F}$. Through judicious choice of $\kappa, \alpha$, and $\delta$ as functions of $T$, we can derive regret bound expressions which are sublinear in $T$. We will do so in Section \ref{sec::SpecificFunClasses}.

As mentioned previously, Proposition 10 of \cite{RussoVanRoy2014} constructs a similar bound to \eqref{eq::generalbayesregret}. The material difference between the bounds is that in \cite{RussoVanRoy2014} $\kappa(T)$ is effectively fixed to $T^{-1}$, which unnecessarily constrains the results which can be obtained for specific function classes. By allowing for other choices of $\kappa(T)$ we have greater flexibility and can achieve tighter bounds.

In the supplementary material we provide a proof of Theorem \ref{prop::generaleluderregret}. Central to the proof is a decomposition of the Bayesian regret of TS in terms of the widths of a sequence of high probability confidence sets for $f_0$. These sets are centred on a least squares estimator of the reward function. Crucially, their widths can be written in terms of the ball-width function and eluder dimension regardless of whether the estimator itself has a convenient analytical form.

We proceed, in the following section, to specify the bound \eqref{eq::generalbayesregret} in the settings where $\mathcal{F}$ is the class of functions with $M \in \mathbb{N}$ Lipschitz derivatives. In \cite{RussoVanRoy2014}, the analogue of \eqref{eq::generalbayesregret} is extended only to (generalised) linear function classes. Our results are therefore substantially more general, since we consider non-parametric function classes, which include the (generalised) linear classes as special cases. Nevertheless, in the supplementary material, we demonstrate that our results for sub-exponential noise can explicitly be extended to these (generalised) linear function classes, with no increase in the order of the regret bound.

\section{Bounds for Smoother-than-Lipschitz Function Classes} \label{sec::SpecificFunClasses}

In this section we consider the specification of the general result to classes of functions with Lipschitz derivatives. For any $C,L>0$ and $M \in \mathbb{N}$, we define $\mathcal{F}_{C,M,L}$ as the class of $C$-bounded functions, $f:[0,1] \rightarrow [0,C]$, with $M$ $L$-Lipschitz smooth derivatives. Functions in $\mathcal{F}_{C,M,L}$ satisfy \begin{equation*}
|f^{(m)}(a)-f^{(m)}(a')| \leq L|a-a'|, \enspace  \forall a,a' \in [0,1], \label{eq::lipfunclass}
\end{equation*} for each $m \leq M$. Note that when $M=0$ this is simply the class of bounded Lipschitz functions. 

For larger $M$, including $M=\infty$, all polynomial functions are trivially included within an $\mathcal{F}_{C,M,L}$, as are appropriately weighted combinations of sufficiently smooth basis functions. Functions sampled from GPs with smooth kernels can also be shown to be members of $\mathcal{F}_{C,M,L}$, since the derivative of a GP is also a GP \cite[Section 9.4]{RasmussenWilliams2006}. We note also that each $\mathcal{F}_{C,M,L}$ may be represented as a ball within a corresponding Sobolev space, and some readers may find it instructive to think of this interpretation. 

\subsection{Regret Upper Bound}

Our main result, below, is a bound on the Bayesian regret of TS applied where $f_0$ is drawn from a prior on $\mathcal{F}_{C,M,L}$. 

\begin{theorem} \label{thm::LipschitzBayesReg}
Consider Thompson sampling with prior $p_0$ applied to the bandit problem $([0,1],f_0,p_\eta)$ where $f_0$ is drawn from a prior $p_0$ on $\mathcal{F}_{C,M,L}$ and $p_\eta$ is sub-exponential.
For all problem horizons $T \in \mathbb{N}$, we have that the Bayesian regret is bounded as \begin{equation}
BR(T) = O(T^{(2M^2+11M+10)/(4M^2+14M+12)}). \label{eq::lipregret}
\end{equation}
\end{theorem}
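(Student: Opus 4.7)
The plan is to instantiate the general bound of Theorem~\ref{prop::generaleluderregret} with $\mathcal{F} = \mathcal{F}_{C,M,L}$ and $\mathcal{A} = [0,1]$. The work therefore splits into three subtasks: (i) derive a bound on the $\epsilon$-eluder dimension of $\mathcal{F}_{C,M,L}$; (ii) bound the $\alpha$-covering number of $\mathcal{F}_{C,M,L}$ in the uniform norm, which feeds into the ball-width function~(\ref{eq::bwf}); and (iii) choose $\kappa(T)$, $\alpha$, $\delta$ and $\lambda$ as functions of $T$ so as to balance the three summands of~(\ref{eq::generalbayesregret}).

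For step (i), suppose $a_1,\ldots,a_n \in [0,1]$ is an $\epsilon$-independent sequence. Then for each $i$ there exist $f_i,\tilde f_i \in \mathcal{F}_{C,M,L}$ whose difference $g_i$ satisfies $|g_i(a_i)| > \epsilon$ and $|g_i(a_j)| \leq \epsilon$ for all $j < i$. Since $g_i$ inherits $M$ Lipschitz smooth derivatives, its $(M+1)$-th derivative is bounded almost everywhere by $2L$. On a subinterval $I \subset [0,1]$ of length $h$ containing at least $M+1$ earlier points on which $|g_i| \leq \epsilon$, a polynomial interpolation argument combined with a Rolle-theorem iteration bounds $|g_i|_\infty$ on $I$ by the sum of an interpolation term of order $\epsilon$ and a remainder of order $L h^{M+1}$. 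Choosing $h$ so that the remainder is below $\epsilon$ caps the number of independent points per such subinterval, and a packing argument over $[0,1]$ then yields a bound $dim_E(\mathcal{F}_{C,M,L}, \epsilon) = O(\epsilon^{-a_M})$ for an exponent $a_M$ decreasing in $M$.

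For step (ii), the Kolmogorov--Tikhomirov estimate applies: $\log N(\alpha, \mathcal{F}_{C,M,L}, \|\cdot\|_\infty) = O(\alpha^{-1/(M+1)})$. Substituting this into~(\ref{eq::bwf}), fixing $\delta = 1/(2T)$ and $\lambda$ at any constant value with $|\lambda| \leq (2Cb)^{-1}$, and setting $\alpha = T^{-(M+1)/(M+2)}$ so that the $\log N/\lambda$ and $T\alpha$ contributions agree, gives $\beta^*_T = \tilde{O}(T^{1/(M+2)})$. Writing $\kappa(T) = T^{-p}$, the bound of Theorem~\ref{prop::generaleluderregret} becomes $\tilde{O}\bigl(T^{1-p} + T^{a_M p} + T^{(a_M p + 1 + 1/(M+2))/2}\bigr)$. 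Balancing the first and third terms (the second is subdominant for the relevant $a_M$) fixes $p = (M+1)/((a_M+2)(M+2))$, and inserting the $a_M$ from step (i) produces the stated exponent $(2M^2+11M+10)/(4M^2+14M+12)$.

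The principal obstacle is step (i): prior eluder-dimension bounds~\citep{RussoVanRoy2014,OsbandVanRoy2014} address only finite-dimensional parametric classes, where independence of a new action reduces to a linear-algebraic condition. Extending to non-parametric smooth classes requires a genuinely new combinatorial argument, and the interpolation sketch above becomes delicate when the earlier $a_j$'s are clustered, since the Lebesgue constant of the interpolation may be large; a careful hierarchical treatment of points at multiple scales is likely needed to convert the local Rolle--interpolation bound into a sharp global packing bound. The sub-exponential noise terms added to~(\ref{eq::bwf}) are a secondary bookkeeping concern that is easily absorbed in step (iii) once step (i) is in hand.
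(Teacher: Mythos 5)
Your overall architecture matches the paper's: instantiate Theorem~\ref{prop::generaleluderregret} with $\mathcal{F}=\mathcal{F}_{C,M,L}$, bound the eluder dimension by $O((\epsilon/L)^{-1/(M+1)})$, use the Kolmogorov--Tikhomirov covering estimate with $\alpha=T^{-(M+1)/(M+2)}$ to get $\beta^*_T=O(T^{1/(M+2)})$, and tune $\kappa(T)=T^{-p}$. The genuine gap is exactly where you suspect it: step (i), which is the paper's main technical contribution (Theorem~\ref{prop::Lipeluder}). Your Rolle/interpolation sketch does not close it: knowing $|g_i(a_j)|\leq\epsilon$ at $M+1$ nodes in an interval of length $h$, together with $\|g_i^{(M+1)}\|_\infty\leq 2L$, controls $|g_i|$ on that interval only up to the Lebesgue constant of the node configuration, which is unbounded when the $a_j$ cluster; so the claimed cap on the number of independent points per subinterval does not follow, and the ``hierarchical multi-scale treatment'' you defer to is precisely the missing argument. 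The paper avoids interpolation entirely: it shows that any $g\in\mathcal{G}_{C,M,L}$ with $g(a)>\epsilon'$ must satisfy $|g|>\epsilon'/3$ on a set of Lebesgue measure at least of order $(\epsilon/L)^{1/(M+1)}$, by constructing extremal functions $h_{a,M}$ (whose derivatives vanish at prescribed points and whose top derivative varies as fast as the Lipschitz constraint allows) and arguing these minimise that measure; then at most eight earlier actions can lie in that set without forcing $\sum_{j<i}g(a_j)^2>(\epsilon')^2$, and a packing argument over $[0,1]$ gives the $(\epsilon/L)^{-1/(M+1)}$ bound. Without either that construction or a completed multi-scale version of your argument, the proof is incomplete.

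A secondary issue: your final balancing does not reproduce the stated exponent. With $a_M=1/(M+1)$, your $p=(M+1)^2/((M+2)(2M+3))$ gives $1-p=(2M^2+10M+10)/(4M^2+14M+12)$, which agrees with the stated $(2M^2+11M+10)/(4M^2+14M+12)$ only at $M=0$ (e.g.\ $T^{22/30}$ versus the paper's $T^{23/30}$ at $M=1$). Because your exponent is smaller, the resulting bound would still imply the theorem, and in fact the paper's own choice $\kappa(T)=T^{-\frac{1}{2}(2M^2+3M+2)/(2M^2+7M+6)}$ leaves the $T\kappa(T)$ term dominant rather than sitting at the exact balance point; but the assertion that your computation ``produces the stated exponent'' is not correct as written, and you should verify which of the three terms dominates for your chosen $p$ before reading off the rate.
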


The consequence of this result is more transparent when we consider particular values of $M$. We have Bayesian regret of order $O(T^{5/6})$ when the reward function is Lipschitz and of order $O(T^{23/30})$ when it has a Lipschitz first derivative. As the number of Lipschitz derivatives $M \rightarrow \infty$ the order of the Bayesian regret approaches $O(\sqrt{T})$. We discuss these results in relation to lower bounds in Section \ref{sec::lowerbound}

\emph{Proof of Theorem \ref{thm::LipschitzBayesReg}}: The proof of Theorem \ref{thm::LipschitzBayesReg} relies on bounding the eluder dimension and ball-width function for the function class $\mathcal{F}_{C,M,L}$. The following theorem provides the necessary bound on the eluder dimension of Lipschitz function classes. 

\begin{theorem} \label{prop::Lipeluder}
For $M \in \mathbb{N}$, and $C,L,\epsilon>0$ the $\epsilon$-eluder dimension of $\mathcal{F}_{C,M,L}$ is bounded as follows,
\begin{equation}
\dim_E(\mathcal{F}_{C,M,L},\epsilon)=o((\epsilon/L)^{-1/(M+1)}). \label{eq::Lipeluder}
\end{equation}
\end{theorem}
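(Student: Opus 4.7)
My plan is a localization-plus-polynomial-approximation argument. Let $a_1,\dots,a_n$ be an $\epsilon$-independent sequence with witness pairs $f_k,\tilde{f}_k \in \mathcal{F}_{C,M,L}$ such that $g_k := f_k-\tilde{f}_k$ satisfies $\sum_{i<k}g_k(a_i)^2\le \epsilon^2$ and $|g_k(a_k)|>\epsilon$. Each $g_k$ lies in the difference class of functions on $[0,1]$ bounded by $2C$ whose $M$-th derivative is $2L$-Lipschitz. I would then partition $[0,1]$ into $N=\lceil 1/\rho\rceil$ subintervals of length $\rho := c(\epsilon/L)^{1/(M+1)}$ for a small absolute constant $c$, and bound separately the number of indices $k$ with $a_k$ in each subinterval.

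On each subinterval $I_j$ centred at $x_j$, Taylor's theorem with Lipschitz remainder expresses
\begin{equation*}
g_k(x) = P_k^{(j)}(x) + R_k^{(j)}(x), \qquad x\in I_j,
\end{equation*}
where $P_k^{(j)}$ is the degree-$M$ Taylor polynomial of $g_k$ at $x_j$ and $|R_k^{(j)}(x)|\le 2L\rho^{M+1}/(M+1)!$. Choosing $c$ small makes this remainder at most a fixed fraction of $\epsilon$. Restricting the sum in the $\epsilon$-independence condition to predecessors lying in $I_j$ (this weakening only helps) and absorbing the remainder via the triangle inequality shows that the Taylor polynomials $P_k^{(j)}$ inherit an $\epsilon'$-independence property, with $\epsilon'\asymp\epsilon$, inside the $(M{+}1)$-dimensional linear space of polynomials of degree $\le M$ on $I_j$. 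At this point I would invoke the known eluder-dimension bound for bounded linear classes in $\mathbb{R}^{M+1}$ (in the spirit of Proposition~6 of \citet{RussoVanRoy2014}), after first rescaling $I_j$ to a unit interval so that the polynomial coefficient norms stay controlled in terms of $C$ and $L$. This caps the number of $a_k$ lying in $I_j$ at a quantity that is $O(M)$ up to slowly growing factors in $\epsilon^{-1}$; summing over the $N\asymp(\epsilon/L)^{-1/(M+1)}$ subintervals delivers the claimed rate.

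The main obstacle is the book-keeping in the reduction from $\epsilon$-independence of $\{g_k\}$ to $\epsilon'$-independence of the polynomial parts: the Taylor remainder appears on both sides of the defining condition, so one must be careful that both the $\epsilon'$-threshold and the parameter-space radius required by the linear eluder bound remain of the same order as $\epsilon$ and depend only polynomially on $C$ and $L$, rather than blowing up as $\rho\to 0$. The rescaling of $I_j$ to the unit interval (so that a polynomial of degree $M$ bounded by $O(C)$ on $I_j$ has rescaled coefficients bounded by a constant) is the key device for keeping those quantities under control and obtaining the exponent $-1/(M+1)$.
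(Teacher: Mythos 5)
Your route---localize to cells of width $\rho\asymp(\epsilon/L)^{1/(M+1)}$, replace each witness difference $g_k$ by its degree-$M$ Taylor polynomial on the cell, and invoke the linear eluder bound cell by cell---is genuinely different from the paper's argument, which works directly with the difference class $\mathcal{G}_{C,M,L}$: there one shows, via explicit extremal bump functions $h_{a,M}$, that any $g$ with $g(a_k)>\epsilon'$ must satisfy $g^2>(\epsilon')^2/9$ on a set of measure $\Theta((\epsilon/L)^{1/(M+1)})$ around $a_k$, so that at most eight predecessors can lie in that set (nine would already force $\sum_{i<k}g(a_i)^2>(\epsilon')^2$), and a packing argument finishes. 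Against that benchmark your proposal has two genuine gaps. The first is the one you flag, and it is more than book-keeping: the $\ell_2$ triangle inequality gives $\left(\sum_{i<k,\,a_i\in I_j}P_k^{(j)}(a_i)^2\right)^{1/2}\le\epsilon'+\sqrt{n_j}\,\delta$, where $n_j$ is the number of predecessors in $I_j$ and $\delta\asymp L\rho^{M+1}$ is the pointwise remainder bound, so the error multiplies $\sqrt{n_j}$---exactly the quantity you are trying to bound. The resulting mismatched condition (predecessor sum at level $\epsilon'+\sqrt{n_j}\delta$, value at $a_k$ only of order $\epsilon$) degrades the linear eluder count by the square of that ratio and yields a self-referential inequality $n_j\lesssim M\log(\cdot)\,(1+n_j\delta^2/\epsilon^2)$. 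It can be closed, but only by taking $\delta\lesssim\epsilon/\sqrt{M\log(C/\epsilon)}$, i.e.\ shrinking $\rho$ and inflating the number of cells by an $\epsilon$-dependent factor; none of this is carried out.

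The second gap is structural: even after that repair, the per-cell count is $\Theta\!\left(M\log(C/\epsilon)\right)$ rather than $O(M)$. The eluder dimension of a bounded $d$-dimensional linear class genuinely carries the $\log(S/\epsilon)$ factor---already for $d=2$ with features $(1,x)$, $x\in[0,1]$, a geometrically spaced sequence of $x$'s is $\epsilon$-independent for $\Theta(\log(S/\epsilon))$ steps---and restricting to a short cell makes the feature vectors nearly parallel, which is the worst case. Summing over the $\asymp(\epsilon/L)^{-1/(M+1)}$ cells therefore gives $O\!\left((\epsilon/L)^{-1/(M+1)}\log(C/\epsilon)\right)$, which is $\omega\!\left((\epsilon/L)^{-1/(M+1)}\right)$ and so does not establish \eqref{eq::Lipeluder}. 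The logarithm is an artefact of discarding the quadratic constraint when you pass to the linear eluder bound inside a cell; the paper's constant per-neighbourhood cap of eight predecessors comes precisely from using the $\ell_2$ condition directly. Your approach would still suffice for a regret bound with an extra logarithmic factor, but not for the theorem as stated.
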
  This result is a non-trivial extension of the existing bounds on the eluder dimension of simpler function classes, and is the first bound on the eluder dimension of a non-parametric class of functions. A sketch of the proof of this theorem is given in Section \ref{sec::eluderproof}, and the full proof is given in the supplementary material.


To use Theorem \ref{prop::Lipeluder} within Theorem \ref{prop::generaleluderregret} we will be considering $\dim_E(\mathcal{F}_{C,M,L},\kappa(T))$ for a nonincreasing function $\kappa$. The effect of $M$ in \eqref{eq::Lipeluder} demonstrates that, for large $M$, the influence on the regret of $\kappa$ through the eluder dimension is minimal.

Bounding the ball-width function relies in turn on a bound on the covering number of the Lipschitz function class. The covering numbers of Lipschitz function classes were amongst the first to be discovered \citep{KolmogorovTikhomirov1961}. Specifically,  for $M \in \mathbb{N}$ and $\mathcal{F}_{C,M,L}$ as defined previously, the following is known, \begin{equation*}
\log N(\alpha,\mathcal{F}_{C,M,L},||\cdot||_\infty) = \Theta(\alpha^{-\frac{1}{M+1}}).
\end{equation*} We wish to select $\alpha$ as a function of $T$ to minimise the order of $\beta_T^*(\mathcal{F}_{C,M,L},\delta,\alpha(T),\lambda)$ with respect to $T$. Choosing $\alpha(T)=T^{-(M+1)/(M+2)}$ we have, \begin{equation}
\beta^*_T(\mathcal{F}_{C,M,L},\delta,T^{-\frac{M+1}{M+2}},\lambda) = O(T^{1/(M+2)}) \label{eq::ballwidthbest}
\end{equation} as the best available result.

We then complete the proof by using the general bound of \eqref{eq::generalbayesregret}. We choose $\kappa(T)=T^{-\frac{1}{2}\frac{2M^2+3M+2}{2M^2+7M+6}}$, and bound the eluder dimension as in \eqref{eq::Lipeluder} and ball-width function as in \eqref{eq::ballwidthbest} to achieve the stated result. $\square$

\subsection{Eluder Dimension Bound} \label{sec::eluderproof}
In this section we sketch the proof of the eluder dimension bound given as Theorem \ref{prop::Lipeluder}. To aid in this we first define a related function class: \begin{equation*}
\mathcal{G}_{C,M,L} = \bigg\{g=f-f', \forall f,f' \in \mathcal{F}_{C,M,L} \bigg\},
\end{equation*} which is the class of absolute difference functions for all pairs of functions in $\mathcal{F}_{C,M,L}$. As the eluder dimension is defined in terms of difference of functions $f,f' \in \mathcal{F}_{C,M,L}$, considering the behaviour of functions in $\mathcal{G}_{C,M,L}$ will allow us to bound the eluder dimension. Functions $g \in \mathcal{G}_{C,M,L}$ also possess $M$ Lipschitz derivatives. Specifically, we have the following result, which has its proof in the supplementary material. \begin{proposition} \label{prop::GHasLipDer}
All functions $g \in \mathcal{G}_{C,M,L}$ are $[-C,C]$-bounded and possess $M$ $2L$-Lipschitz smooth derivatives.
\end{proposition}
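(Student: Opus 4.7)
The plan is to proceed directly from the definition of $\mathcal{G}_{C,M,L}$ by fixing an arbitrary $g \in \mathcal{G}_{C,M,L}$, writing it as $g = f - f'$ for some $f, f' \in \mathcal{F}_{C,M,L}$, and then verifying each of the two claimed properties in turn. The only tools needed are the triangle inequality and the linearity of differentiation, so no substantial obstacle is anticipated; the proof is effectively a bookkeeping exercise.

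First, I would establish the boundedness claim. Since both $f$ and $f'$ are members of $\mathcal{F}_{C,M,L}$, by the definition of that class they satisfy $f(a), f'(a) \in [0, C]$ for every $a \in [0,1]$. The pointwise difference $g(a) = f(a) - f'(a)$ therefore lies in $[-C, C]$, which is exactly the boundedness conclusion.

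Next, I would handle the Lipschitz derivative conditions. For each $m \in \{0, 1, \ldots, M\}$, linearity of differentiation gives $g^{(m)} = f^{(m)} - (f')^{(m)}$. Since $f, f' \in \mathcal{F}_{C,M,L}$, both $f^{(m)}$ and $(f')^{(m)}$ are $L$-Lipschitz on $[0,1]$. The triangle inequality then yields, for any $a, a' \in [0,1]$,
\begin{equation*}
|g^{(m)}(a) - g^{(m)}(a')| \leq |f^{(m)}(a) - f^{(m)}(a')| + |(f')^{(m)}(a) - (f')^{(m)}(a')| \leq 2L|a - a'|,
\end{equation*}
so $g^{(m)}$ is $2L$-Lipschitz. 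Since this holds for every $m \leq M$, the function $g$ possesses $M$ $2L$-Lipschitz derivatives, completing the proof. The only minor care point is to note that this argument applies uniformly across all $m$, and that the claim holds for arbitrary $g \in \mathcal{G}_{C,M,L}$ since the decomposition $g = f - f'$ was arbitrary.
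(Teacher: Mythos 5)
Your proof is correct and follows essentially the same route as the paper's: boundedness from the pointwise difference of two $[0,C]$-valued functions, and the $2L$-Lipschitz property of each $g^{(m)}$ from linearity of differentiation plus the triangle inequality applied to the $L$-Lipschitz derivatives of $f$ and $f'$. No gaps.
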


We may also define the eluder dimension in terms of $\mathcal{G}_{C,M,L}$, which will be useful for the proof of Theorem \ref{prop::Lipeluder}. Let $a_{1:k} \in [0,1]^k$ denote a sequence of actions $(a_1,\dots,a_k)$ and define \begin{align*}
w_k(a_{1:k},\epsilon')&= \sup_{g \in \mathcal{G}_{C,M,L}} \bigg\{g(a_k): \sqrt{{\textstyle\sum}_{i=1}^{k-1} (g(a_i))^2} \leq \epsilon' \bigg\}.
\end{align*} We then define the $\epsilon$-eluder dimension as follows: \begin{align*}
dim_E(\mathcal{F}_{C,M,L},\epsilon) &= \max_{\tau \in \mathbb{N}, \epsilon'>\epsilon} \Big\{ \tau: \exists \enspace  a_{1:\tau} \in [0,1]^\tau \text{ with } \\
&\quad \enspace w_k(a_{1:k},\epsilon') > \epsilon' \text{ for every } k \leq \tau \Big\} .
\end{align*}

Based on this definition we will sketch the proof of Theorem \ref{prop::Lipeluder} in the remainder of this section. The full proof is reserved for the supplementary material.

\emph{Sketch of Proof of Theorem \ref{prop::Lipeluder}:} The proof relies on the observation that $w_k(a_{1:k},\epsilon')>\epsilon'$ may only be satisfied if there exists a function $g \in \mathcal{G}_{C,M,L}$ which takes a relatively large value at $a_k$, i.e. with $g(a_k)> \epsilon'$, but changes rapidly enough to have relatively small absolute value at previous elements of the sequence, i.e. $\sum_{i=1}^{k-1} (g(a_i))^2 \leq (\epsilon')^2$. 

Any smooth function $g$ with $g(a)>\epsilon'$ at some $a \in [0,1]$ must have an associated region, of non-zero size, which we call $B(g) \subseteq [0,1]$ where $|g(x)| > \epsilon'/3$. The smoother $g$ is, the larger the region $B(g)$ must be. A necessary condition for satisfying $w_k(a_{1:k},\epsilon')>\epsilon'$ is that there exists a function $g \in \mathcal{G}_{C,M,L}$ with $g(a_k)> \epsilon'$ such that there are not too many among the points $a_{1:k-1}$ within $B(g)$, specifically fewer than nine (since $\sqrt{9\times(\epsilon'/3)^2}=\epsilon'$). 

It follows that a necessary condition for the $\epsilon$-eluder dimension of $\mathcal{F}_{C,M,L}$ to take value at least $\tau$ is that there exists a sequence $a_{1:\tau} \in [0,1]$ and a sequence of functions $g_1,\dots,g_\tau \in \mathcal{G}_{C,M,L}$ with $g_i(a_i)> \epsilon', \, i\leq \tau$, such that $\sum_{i=1}^{k-1} \mathbb{I}\{a_i \in B(g_k)\} <9$ for all $k \leq \tau$. We derive upper bounds on the eluder dimension by bounding the value of $\tau$ for which this necessary condition may be satisfied. This is feasible, as the size of the region $B(g)$ for any $g \in \mathcal{G}_{C,M,L}$ and $a \in [0,1]$ such that $g(a)>\epsilon'$ may be related to the smoothness of the class $\mathcal{G}_{C,M,L}$ and the largest value of $\tau$ such that the necessary condition can be satisfied may be related to the size of the $B$ regions.

For each choice of $M$ and an $a \in [0,1]$ we can identify a function $h_{M,a} \in \mathcal{G}_{C,M,L}$ which satisfies $h_{M,a}(a)>\epsilon'$ but minimises the size of $B_a$, i.e.
\begin{displaymath}
h_{M,a} \in \argmin_{h \in \mathcal{G}_{C,M,L}:h(a)>\epsilon'}\int \mathbb{I}\{|h(x)|\geq \epsilon'/3\}dx,\end{displaymath} and the minimising values \begin{displaymath}
B^*_{M,a} = \min_{h \in \mathcal{G}_{C,M,L}:h(a)>\epsilon'}\int \mathbb{I}\{|h(x)|\geq \epsilon'/3\}dx.
\end{displaymath}

The functions $h_{M,a}$ can be shown to be characterised by having zeros of their derivatives at specific locations. In particular, odd ordered derivatives should have zeros at $a$ and the points where $h_{M,a}(x)=-\epsilon/3$ and even ordered derivatives should have zeros at points where $h_{M,a}(x)=\epsilon/3$. Allowing the highest order derivative to be linear subject to these conditions ensures the region $B_a(h_{M,a})$ is as small as possible. Figure \ref{fig::eluderplots} illustrates functions $h_{a,0},h_{a,1}, h_{a,2}$ and their first derivatives. We can see the increasing width of $B^*_{a,M}$ as $M$ increases.

The minimising values $B^*_{M,a}$ are shown to be $o((\epsilon/L)^{1/(M+1)})$. In turn, this means that if there is a sequence of $\tau$ points $a_{1:\tau}$ with $\tau=o((\epsilon/L)^{-1/(M+1)})$ placed in $[0,1]$, it is impossible to satisfy $w_k(a_{1:k},\epsilon')$ for every $k \leq \tau$. By definition the eluder-dimension may then be bounded as $o((\epsilon/L)^{1/(M+1)})$.

\begin{figure*} [htbp]
\centering
\includegraphics[width=\textwidth,height=5cm]{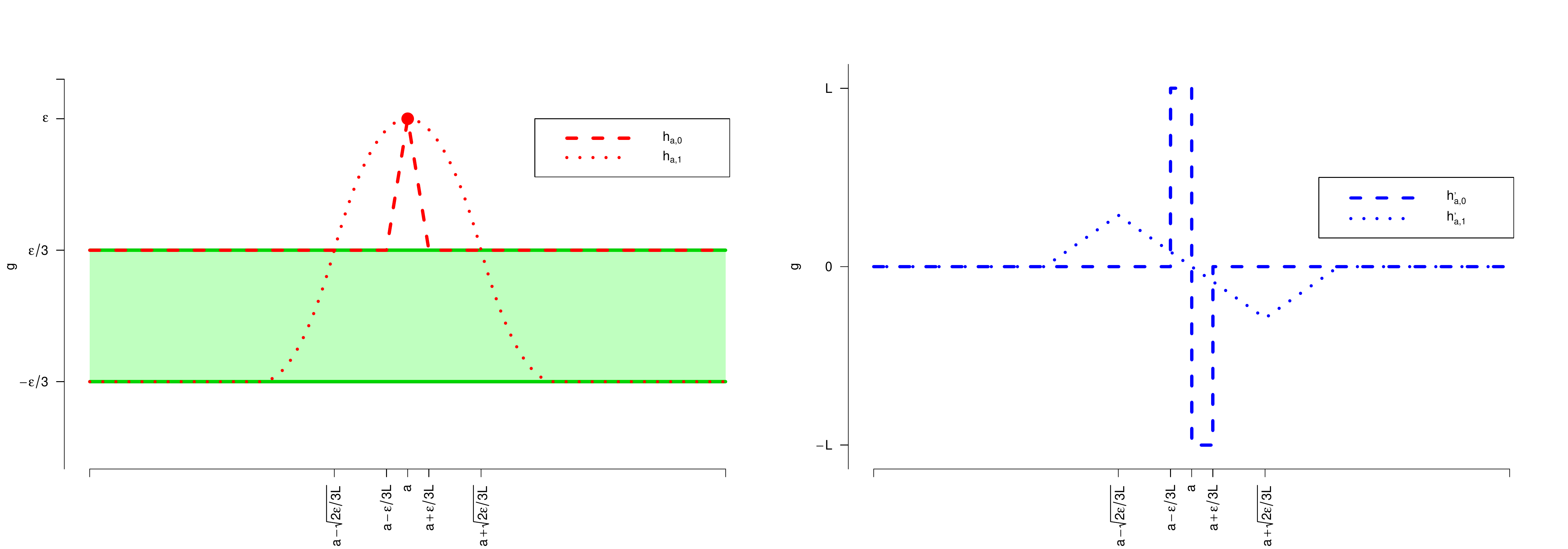}
\caption{This figure displays functions $g \in \mathcal{G}^*_{C,M,L}(a)$ for $M=0$ and $M=1$. These functions take value greater than $\epsilon$ at $a$, which is well separated from 0 and 1. The functions then decrease on the left and right in to the interval $[-\epsilon/3,\epsilon/3]$ at the quickest rate possible for functions in $\mathcal{G}_{C,M,L}$.}\label{fig::eluderplots}
\end{figure*}

\subsection{Regret Lower Bounds} \label{sec::lowerbound}
The following theorem, a restatement of Theorem 1 of \cite{BubeckEtAl2011}, gives a lower bound on the regret of any algorithm for the CAB with a Lipschitz reward function. It is an adaptation of the stronger results in \cite{Kleinberg2005,Kleinberg2008,Bubeck2011} which apply to bandits on metric spaces. For ease of exposition, and following convention, we will assume in the remainder, without loss of generality, that the bounding constant is $C=1$.

\begin{theorem} \label{thm::standard_lip_lower_bound}
Let \texttt{ALG} be any algorithm for Lipschitz continuum armed bandits with time horizon $T$, and Lipschitz constant $L$. Let $M=0$, i.e. the Lipschitz condition apply only to the reward function, not its derivatives. There exists a problem instance $\mathcal{I}=\mathcal{I}(x^*,\epsilon)$ for some $x^* \in [0,1]$ and $\epsilon>0$ such that \begin{displaymath}
\mathbb{E}(R(T)|\mathcal{I}) \geq \Omega(L^{1/3}T^{2/3}).
\end{displaymath}
\end{theorem}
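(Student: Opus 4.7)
The plan is to reduce the Lipschitz continuum-armed bandit problem to a multiple hypothesis testing problem via a classical ``needle in a haystack'' construction, following the standard approach of \cite{Kleinberg2005,Bubeck2011,BubeckEtAl2011}. First I would partition $[0,1]$ into $K$ disjoint sub-intervals of width $w=1/K$ and, for each $k \in \{1,\ldots,K\}$, define a reward function $f_k \in \mathcal{F}_{1,0,L}$ equal to $1/2$ everywhere except inside the $k$-th sub-interval, where it has a triangular bump of height $\epsilon$ supported on that sub-interval. Setting $w = 3\epsilon/L$ (i.e.\ $K=\lfloor L/(3\epsilon)\rfloor$) ensures the slope of each bump is at most $L$, so every $f_k$ is indeed $L$-Lipschitz. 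The null instance $f_0 \equiv 1/2$ lies in the same class. I would take observation noise to be Gaussian of constant variance (which is sub-exponential) so that the per-sample Kullback--Leibler divergence between the reward distributions under $f_0$ and $f_k$, restricted to actions inside the $k$-th sub-interval, is of order $\epsilon^2$.

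Second, I would apply an information-theoretic argument, specifically Le Cam's two-point method combined with the Bretagnolle--Huber inequality (or equivalently Pinsker plus the divergence decomposition lemma). Let $N_k(T)$ denote the number of rounds in which \texttt{ALG} selects an action inside the $k$-th sub-interval. Since the sub-intervals are disjoint, under $f_0$ we have $\sum_{k=1}^K \mathbb{E}_0[N_k(T)] \leq T$, so by pigeonhole there exists $k^\star$ with $\mathbb{E}_0[N_{k^\star}(T)] \leq T/K$. The divergence decomposition lemma for bandit processes gives
\begin{equation*}
\mathrm{KL}(\mathbb{P}_0 \,\|\, \mathbb{P}_{k^\star}) \;=\; \mathbb{E}_0[N_{k^\star}(T)] \cdot O(\epsilon^2),
\end{equation*}
from which Bretagnolle--Huber yields a lower bound on $\mathbb{E}_0[N_{k^\star}(T)] + \mathbb{E}_{k^\star}[T - N_{k^\star}(T)]$ of order $T(1 - \sqrt{T\epsilon^2/K})$, guaranteeing that \texttt{ALG} spends a constant fraction of its rounds outside the $k^\star$-th sub-interval under the instance $\mathcal{I}=\mathcal{I}(x^\star,\epsilon)$ whose optimal arm $x^\star$ lies in that sub-interval.

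Third, I would translate this into a regret bound. Every round with action outside the sub-interval containing $x^\star$ incurs instantaneous regret $\Omega(\epsilon)$, so
\begin{equation*}
\mathbb{E}_{k^\star}[R(T) \mid \mathcal{I}] \;\geq\; \epsilon \cdot \big(T - \mathbb{E}_{k^\star}[N_{k^\star}(T)]\big).
\end{equation*}
Combining with the bounds from the previous step and tuning $\epsilon$ to balance the ``small-bump'' and ``hard-to-detect'' error sources (i.e.\ setting the two terms equal) gives the optimal scale $\epsilon = \Theta((L/T)^{1/3})$, and hence $\mathbb{E}[R(T) \mid \mathcal{I}] \geq \Omega(L^{1/3} T^{2/3})$ as required.

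The main obstacle is the careful handling of the information-theoretic step: the algorithm's actions are adaptive random variables on a sequentially revealed filtration, so the KL divergence must be computed via the chain rule along the natural filtration of the bandit process, which is the content of the divergence decomposition lemma. Beyond this, one must verify that the constructed family $\{f_k\}_{k=0}^K$ genuinely lies in $\mathcal{F}_{1,0,L}$ (which holds by design of the triangular bumps) and that the Gaussian observation noise satisfies the sub-exponential assumption of our framework. Since the theorem is a restatement of a known result, the proof would essentially follow that of \cite{BubeckEtAl2011} with only cosmetic adaptation to our notation.
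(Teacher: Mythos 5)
Your proposal is correct, but it takes a different route from the one the paper uses. The paper does not prove Theorem \ref{thm::standard_lip_lower_bound} from first principles at all: it is stated as a restatement of Theorem 1 of \cite{BubeckEtAl2011}, with only the hard instance \eqref{eq::worstcaserewardfunc} exhibited in the text; the actual proof machinery appears in the appendix for the generalisation (Theorem \ref{thm::ext_lower_bound}), where the argument is a \emph{reduction}: the continuum of actions is bucketed into $K$ cells, a coupling is constructed so that the CAB reward $r_x$ is a mixture whose mean equals $\nu_{x^*,\delta,M}(x)$ while being stochastically dominated by the reward of the associated arm $a(x)$ in a $K$-armed instance, and then the known $\Omega(\delta T)$ minimax lower bound for $K$-armed bandits (Theorem 4.3 of \cite{Slivkins2019}) is invoked as a black box. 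You instead unroll that black box yourself: the needle-in-a-haystack family of triangular bumps, the pigeonhole step $\mathbb{E}_0[N_{k^\star}(T)]\leq T/K$, the divergence decomposition along the bandit filtration, and Le Cam/Bretagnolle--Huber, followed by the tuning $\epsilon=\Theta((L/T)^{1/3})$. Both arguments are sound and yield the same rate; the paper's reduction is more modular and extends cleanly to the smoother classes $M\geq 1$ (where only the bump width, hence $K$ and $\delta$, changes), whereas your direct argument is self-contained, makes the $L^{1/3}$ dependence explicit (which the paper's Theorem \ref{thm::ext_lower_bound} suppresses), and makes visible exactly where the adaptivity of the algorithm is handled (the chain-rule/divergence-decomposition step you correctly flag as the main technical care point). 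One cosmetic remark: the bound $T(1-\sqrt{T\epsilon^2/K})$ you quote is the Pinsker form rather than the Bretagnolle--Huber form $\tfrac{T}{4}e^{-\mathrm{KL}}$, but either suffices once $T\epsilon^2/K=O(1)$.
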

The proof of the Theorem relies on the construction of a particularly challenging CAB instance $\mathcal{I}(x^*,\delta)$ with reward function $\mu$ where \begin{equation}
\mu(x) = \begin{cases}
&0.5, \quad  \text{for } x: |x-x^*| > \delta/L, \\
&0.5+\delta - L|x-x^*|, \enspace \text{otherwise}.
\end{cases} \label{eq::worstcaserewardfunc}
\end{equation} Theorem \ref{thm::standard_lip_lower_bound} does not apply for $M>0$. This is because the reward function $\mu$ defined as in \eqref{eq::worstcaserewardfunc} used to define the worst-case problem instance, does not have a Lipschitz first derivative and thus is not a valid reward function for the problem class being considered. 

In the theorem below, we give an $M$-dependent lower bound on regret, for CABs whose reward functions have $M \geq 0$ Lipschitz derivatives.

\begin{theorem} \label{thm::ext_lower_bound}
Let \texttt{ALG} be any algorithm for the CAB problem with reward function in $\mathcal{F}_{C,M,L}$. There exists a problem instance $\mathcal{I}=\mathcal{I}(x^*,\delta)$ for some $x^* \in [0,1]$ and $\delta>0$ such that \begin{displaymath}
\mathbb{E}(R(T)|\mathcal{I}) \geq \Omega(T^{(M+2)/(2M+3)}).
\end{displaymath}
\end{theorem}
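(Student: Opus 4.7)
The plan is to generalise the hard-instance construction behind Theorem~\ref{thm::standard_lip_lower_bound} by replacing the piecewise-linear spike with a smooth bump whose width is dictated by the Lipschitz constraint on the $M$-th derivative, then rerun a standard multi-hypothesis information-theoretic argument. The only change to the regret scaling comes through the width of the bump, which for $M>0$ must be larger than in the purely Lipschitz case, reducing the number of candidate locations and forcing the height $\delta$ to be chosen larger.

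First I would construct the hard family. Fix once and for all a reference smooth bump $\phi:\mathbb{R}\to[0,1]$ supported on $[-1,1]$ with $\phi(0)=1$ and all derivatives uniformly bounded (for instance a suitably normalised $C^\infty$ bump). For a centre $x^*\in[0,1]$, height $\delta>0$, and width $w>0$, define
\begin{equation*}
\mu_{x^*}(x)=\tfrac{1}{2}+\delta\,\phi\!\left(\tfrac{x-x^*}{w}\right).
\end{equation*}
The $m$-th derivative of $\mu_{x^*}$ is $\delta w^{-m}\phi^{(m)}((x-x^*)/w)$, so its $M$-th derivative has Lipschitz constant bounded by a constant multiple of $\delta/w^{M+1}$. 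Requiring $\mu_{x^*}\in\mathcal{F}_{1,M,L}$ therefore forces
\begin{equation*}
w\geq c\,(\delta/L)^{1/(M+1)}
\end{equation*}
for a constant $c>0$ depending only on $\phi$. This scaling of $w$ in $\delta,L$ and $M$ is the one genuinely new ingredient relative to the $M=0$ case, where $w\asymp\delta/L$.

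Next I would set up the hypotheses and run the testing argument in the same style as \cite{BubeckEtAl2011}. Partition $[0,1]$ into $N=\lfloor 1/(2w)\rfloor$ disjoint intervals of length $2w$, take their centres as candidates $x_1^*,\dots,x_N^*$, let $H_0$ be the null hypothesis $\mu\equiv\tfrac{1}{2}$, and let $H_k$ be the hypothesis $\mu=\mu_{x_k^*}$. Using Bernoulli reward noise, the per-round KL divergence between $H_0$ and $H_k$ at arm $a$ is $O(\delta^2)$ when $a$ lies in the bump support $B_k$ and zero otherwise. Letting $N_k(T)$ count pulls inside $B_k$, pigeonhole under $H_0$ yields some $k^\star$ with $\mathbb{E}_{H_0}[N_{k^\star}(T)]\leq T/N$. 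Tensorisation of KL divergences combined with a Bretagnolle--Huber step then gives $\mathbb{E}_{H_{k^\star}}[T-N_{k^\star}(T)]\geq T/2$ as soon as $\delta^2 T/N$ lies below a universal constant. Each round spent outside $B_{k^\star}$ under $H_{k^\star}$ incurs instantaneous regret of order $\delta$, so the regret on instance $\mathcal{I}(x_{k^\star}^*,\delta)$ is $\Omega(T\delta)$.

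Finally, balance parameters. The feasibility condition $\delta^2 T/N\lesssim 1$ together with $N\asymp(L/\delta)^{1/(M+1)}$ rearranges to $T\delta^{(2M+3)/(M+1)}\lesssim L^{1/(M+1)}$, and choosing $\delta$ at the boundary gives
\begin{equation*}
\delta\asymp L^{1/(2M+3)}\,T^{-(M+1)/(2M+3)},\qquad T\delta\asymp T^{(M+2)/(2M+3)},
\end{equation*}
which is the claimed bound. The main obstacle is bookkeeping rather than any new idea: one must pin down a single $\phi$ for which the $C$-boundedness, $M$-Lipschitz-derivative, disjointness, KL, and pigeonhole constants all fit together uniformly in $T$ while keeping the dependence on $M$ explicit. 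Recovering Theorem~\ref{thm::standard_lip_lower_bound} as the $M=0$ special case provides a sanity check, since then $w\asymp\delta/L$ and the balance recovers $T\delta\asymp L^{1/3}T^{2/3}$.
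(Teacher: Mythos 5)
Your proposal is correct and rests on the same essential new ingredient as the paper's proof: a bump function of height $\delta$ in $\mathcal{F}_{1,M,L}$ must have width $w \gtrsim (\delta/L)^{1/(M+1)}$, which limits the number of distinguishable hypotheses to $N \asymp (L/\delta)^{1/(M+1)}$ and, after balancing $\delta^2 T \lesssim N$, yields $\Omega(T^{(M+2)/(2M+3)})$. Where you diverge is in the packaging of the statistical argument. The paper constructs an explicit reduction: it maps the continuum action space onto the arms of a $K$-armed bandit via a coupling of the reward distributions (so that any CAB algorithm induces an MAB algorithm with no smaller regret), and then invokes the off-the-shelf $\Omega(\delta T)$ lower bound of Theorem 4.3 of Slivkins. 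You instead unroll that black box, rerunning the KL-tensorisation, pigeonhole, and Bretagnolle--Huber steps directly on the continuum instance. The two are morally equivalent since the cited MAB bound is itself proved by your argument; the reduction buys the paper a shorter, citation-driven proof, while your direct route is self-contained and makes the role of the constraint $\delta^2 T/N \lesssim 1$ transparent. A second cosmetic difference is the bump itself: the paper uses the extremal minimal-width bump (piecewise-constant $(M+1)$-th derivative), whereas you rescale a fixed $C^\infty$ mollifier; for a lower bound any valid bump of the right width suffices, so this costs only constants. The one small verification you wave at but should record is that at your chosen width the rescaled bump genuinely lies in $\mathcal{F}_{1,M,L}$, i.e.\ that the Lipschitz constraints on the derivatives of orders $m<M$ do not bind --- they do not, since for $w\le 1$ and $\delta\le 1$ the constant $\delta w^{-(m+1)}$ is maximised at $m=M$.
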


The proof of this theorem is provided in the supplementary material. 

\subsection{Comparing Upper and Lower Bounds}

Firstly, we notice that for $M=\infty$, the upper and lower bounds match up to a constant, in that they are both order $\sqrt{T}$. This implies that exact TS is an order-optimal algorithm for CAB problems with reward function drawn from a prior on (any subset of) $\mathcal{F}_{C,\infty,L}$. This is a more general result than those presented in \cite{RussoVanRoy2014}, as they had similar results only for special cases within $\mathcal{F}_{C,\infty,L}$ - namely (generalised) linear reward functions and reward functions modelled as samples from Gaussian processes. Further, we even present a marginal improvement in those cases, as we remove a mutliplicative $\log(T)$ factor from the upper bounds.

Interestingly, for finite $M$, the bounds do not match. For instance, with $M=0$ the upper bound has order $O(T^{5/6})$ and the lower bound has order $\Omega(T^{2/3})$. Generally speaking there is a gap of order $T^{({3M+2})/({4M^2+14M+12})}$ between the bounds for finite $M$. This raises an interesting open question: {are the eluder-dimension based bounds simply not tight for finite $M$, or is TS inherently suboptimal?}

There would seem to be some credence to both arguments. If we consider the nature of algorithms which do achieve order optimal bounds for the Lipschitz bandit problem, such as the Zooming algorithm of \cite{Kleinberg2005}, we notice that they generally employ an adaptive discretisation component. That is to say, they limit the actions available to the algorithm to some set $\mathcal{A}_t \subset \mathcal{A}$ in each round $t \in \{1,\dots,T\}$, and in doing so force a certain level of exploration. It could be that the TS algorithm analysed here which has access to the entire action set $\mathcal{A}$ somehow carries a greater risk of conducting insufficient exploration. 

On the other hand it is possible that the true performance of the TS approach analysed here does in fact match the lower bound, and analysis of \cite{RussoVanRoy2014} which we have adapted to this setting is too loose in this framework. The contribution of the covering number term to the overall order for instance in the $M=0$ setting is $T^{1/4}$ and the $\sqrt{T}$ factor from the least squares analysis is also unavoidable. Thus, even with a $\kappa(T)$-eluder dimension of $O(1)$ the resulting bound would be suboptimal compared with the $\Omega(T^{2/3})$ lower bound. Inspection of the proof suggests that while this technique is highly versatile, it would not be possible to adapt it to achieve an optimal order bound in CAB problems whose reward function is drawn from $\mathcal{F}_{C,M,L}$, with finite $M$. 

\section{Conclusion} \label{sec::Lipbanconc}

This work extends the understanding of Thompson Sampling for stochastic bandit problems. The results are bounds on the Bayesian regret of Thompson Sampling for continuum-armed bandits where the reward function possesses $M$ Lipschitz derivatives and where the reward noise is sub-exponential. We achieved these results by extending the application of the eluder dimension technique of \cite{RussoVanRoy2014} which allows the Bayesian regret of TS to be bounded in terms of the complexity of the reward function class.

Our results represent a substantial advance on the generality of existing performance guarantees available for TS. While previous results have focussed on $d$-dimensionally parametrised functions or Gaussian process priors only, our framework captures TS based on non-parametric priors over the reward function class. As such our results are applicable in much broader settings where only limited assumptions about the reward function are possible. 

While exact sampling from the posterior distributions on which our analysis is based may be challenging, these fundamental results are useful in two regards. They provide a useful benchmarking tool for subsequent analyses, and generally inform us as to how the smoothness properties of the reward function class are likely to impact the performance of TS.

Finally, our work raises interesting open questions around the analysis of non-parametric TS. Firstly, whether the gap between the upper and lower regret bounds for finite $M$ is a feature of the eluder-dimension based analysis (i.e. it can be improved) or of TS itself (i.e. it is inherent and unavoidable). Secondly, to what extent this performance may be recovered by approximate TS algorithms, which are popular and often necessary for complex problems.

\bibliographystyle{apalike} 
\bibliography{References}

\appendix
\onecolumn

\section{Proof of Theorem \ref{prop::generaleluderregret}: General regret bound} \label{sec::Thm1proof} \label{sec::Lip_proofs}
In this section, we provide a proof of the general eluder-dimension-based bound on the Bayesian regret of Thompson Sampling (TS), and the proofs of technical lemmas which support the main proof. Certain results and definitions from the main paper will be restated for convenience.  


\textbf{Theorem \ref{prop::generaleluderregret}} \emph{
Consider Thompson sampling with prior $p_0$ on a function class $\mathcal{F}$ applied to the bandit problem $(\mathcal{A},f_0,p_\eta)$ where the reward function $f_0$ is drawn from a $p_0$, all functions $f \in \mathcal{F}$ are $f:\mathcal{A} \rightarrow [0,C]$ for some $C>0$, and the reward noise distribution $p_\eta$ is $(\sigma^2,b)$-sub-exponential.
For all problem horizons $T \in \mathbb{N}$, nonincreasing functions $\kappa:\mathbb{N} \rightarrow \mathbb{R}_+$, and parameters $\alpha>0, \delta\leq 1/(2T),$ and  $|\lambda|\leq (2Cb)^{-1}$, it is the case that} \begin{align*}
BR(T) &\leq T\kappa(T)+ (dim_E(\mathcal{F},\kappa(T))+1)C + 4\sqrt{dim_E(\mathcal{F},\kappa(T))\beta^*_T(\mathcal{F},\alpha,\delta,\lambda)T}. 
\end{align*}

We begin the proof with the following martingale concentration result, an extension of Lemma 3 of \cite{RussoVanRoy2014} (which holds for sub-Gaussian noise). The result below says that with high probability, for any function $f: \mathcal{A} \rightarrow \mathbb{R}$, its squared error $L_{2,t}(f)= \sum_{i=1}^{t-1}(f(A_i)-R_i)^2$ is lower bounded. In particular, we say that with high probability the squared error of $f$ will not fall below the sum of the squared error of the true reward generating function, $f_0$, and a measure of the distance between $f$ and $f_0$, by more than a fixed constant.

\begin{lemma} \label{lem::subexpmartingale}
For any action sequence $A_1,A_2,\dots \in \mathcal{A}$, inducing $(\sigma^2,b)$-sub-exponential reward observations $R_1, R_2, \dots$ and any function $f: \mathcal{A} \rightarrow \mathbb{R}$, we have \begin{equation}
\mathbb{P}\bigg( L_{2,n+1}(f) \geq  L_{2,n+1}(f_0) + (1-2\lambda\sigma^2)\sum_{i=1}^n (f(A_i)-f_0(A_i))^2  - \frac{\log(1/\delta)}{\lambda} , \enspace \forall n \in \mathbb{N} \bigg) \geq 1-\delta,
\end{equation} for all $\lambda$ with $|\lambda|\leq (2Cb)^{-1}$.
\end{lemma}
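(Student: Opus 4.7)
The plan is to follow the exponential-supermartingale plus Ville's maximal inequality recipe that produces Lemma 3 of Russo and Van Roy (2014), replacing the sub-Gaussian MGF bound by the sub-exponential bound of equation (\ref{eq::subexpprop}). This substitution is what forces the constraint $|\lambda|\leq (2Cb)^{-1}$ and, in turn, produces the $(1-2\lambda\sigma^2)$ scaling on the $\sum \Delta_i^2$ term that appears in the statement.

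First I would rewrite the quantity of interest in a form amenable to an MGF calculation. Setting $\Delta_i := f(A_i) - f_0(A_i)$ and $\eta_i := R_i - f_0(A_i)$, a one-line expansion of the squares gives
\[
L_{2,n+1}(f) - L_{2,n+1}(f_0) \;=\; \sum_{i=1}^n \Delta_i^2 \;-\; 2\sum_{i=1}^n \Delta_i \eta_i,
\]
so the lemma reduces to showing that $2\sum_{i=1}^n \Delta_i \eta_i \leq 2\lambda\sigma^2 \sum_{i=1}^n \Delta_i^2 + \log(1/\delta)/\lambda$ holds simultaneously for every $n$ with probability at least $1-\delta$.

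Next I would introduce the exponential process
\[
M_n \;:=\; \exp\!\Bigl(2\lambda \sum_{i=1}^n \Delta_i \eta_i \;-\; 2\lambda^2 \sigma^2 \sum_{i=1}^n \Delta_i^2\Bigr), \qquad M_0 := 1,
\]
and verify that $(M_n)$ is a non-negative $(\mathcal{H}_n)$-supermartingale. Conditioning on $\mathcal{H}_{n-1}$ and $A_n$ (under which $\Delta_n$ is a constant), the increment factor equals $\mathbb{E}[e^{2\lambda\Delta_n \eta_n} \mid \mathcal{H}_{n-1}, A_n]\cdot e^{-2\lambda^2\sigma^2 \Delta_n^2}$, and applying equation (\ref{eq::subexpprop}) with $\mu = 2\lambda\Delta_n$ bounds the conditional MGF by $e^{2\lambda^2\sigma^2 \Delta_n^2}$, making the increment factor at most $1$. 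The one thing to check is that the hypothesis of (\ref{eq::subexpprop}) is met, i.e.\ $|2\lambda \Delta_n|\leq 1/b$; but $|\Delta_n|\leq C$ because $f$ and $f_0$ map into $[0,C]$, and the standing assumption $|\lambda|\leq (2Cb)^{-1}$ then gives exactly the required bound. This MGF-domain check is the one genuinely new ingredient compared with the sub-Gaussian argument, and is where I expect essentially all of the obstruction to live; the rest is mechanical.

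Finally, I would invoke Ville's maximal inequality for the non-negative supermartingale $(M_n)_{n\geq 0}$ started at $1$, which yields $\mathbb{P}(\sup_n M_n \geq 1/\delta)\leq \delta$. Taking logarithms inside the complementary event, dividing through by $\lambda$ (here using that the intended regime has $\lambda>0$ so that the $(1-2\lambda\sigma^2)$ factor in $\beta^*_n$ of equation (\ref{eq::bwf}) is meaningful), and substituting back into the identity from step one produces the claimed uniform-in-$n$ lower bound on $L_{2,n+1}(f)-L_{2,n+1}(f_0)$ on an event of probability at least $1-\delta$.
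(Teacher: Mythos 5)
Your proposal is correct and follows essentially the same route as the paper: the same decomposition $L_{2,n+1}(f)-L_{2,n+1}(f_0)=\sum_i\Delta_i^2-2\sum_i\Delta_i\eta_i$, the same sub-exponential MGF bound applied at argument $2\lambda\Delta_i$ with the domain check $|2\lambda\Delta_i|\leq 2|\lambda|C\leq 1/b$, and the same conclusion via an exponential supermartingale and a maximal inequality (which the paper simply imports as Lemmas 6 and 7 of Russo and Van Roy rather than re-deriving). Your explicit remark that $\lambda>0$ is needed to divide through without flipping the inequality is a fair observation that the paper leaves implicit, but it is not a substantive difference.
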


\begin{proof}
The proof is based on the sub-exponential property of the reward noise. \label{proof:lemma1}
First consider arbitrary random variables $\{Z_i\}_{i \in \mathbb{N}}$ adapted to a filtration $\{\mathcal{H}_i\}_{i\in\mathbb{N}}$. Assume that $\mathbb{E}(e^{\lambda Z_i})$ is finite for $\lambda \geq 0$, and define the conditional mean $\mu_i=\mathbb{E}(Z_i|\mathcal{H}_{i-1})$ and conditional cumulant generating function of the centred random variable $[Z_i-\mu_i]$ as $\psi_i(\lambda)=\log\mathbb{E}(\exp(\lambda[Z_i-\mu_i])|\mathcal{H}_{i-1})$. 
By Lemmas 6 and 7 of \cite{RussoVanRoy2014}, 
for all $x \geq 0$, and $\lambda \geq 0$, \begin{equation}
\mathbb{P}\bigg( \sum_{i=1}^n \lambda Z_i \leq x +  \sum_{i=1}^n [\lambda \mu_i + \psi_i(\lambda)], \enspace \forall n \in \mathbb{N} \bigg) \geq 1-e^{-x}. \label{eq::Lem7}
\end{equation}

Now consider $Z_i$ defined in terms of squared error terms of both the true function $f_0$ and an arbitrary function $f$:
\begin{align*}
Z_i &= (f_0(A_i)-R_i)^2 - (f(A_i)-R_i)^2 \nonumber \\
    &= -(f(A_i)-f_0(A_i))^2 + 2(f(A_i)-f_0(A_i))\eta_i,
\end{align*}
where we have used that $R_i=f_0(A_i)+\eta_i$.
The conditional mean and conditional cumulant generating function of these $Z_i$ are
\begin{align}
\mu_i &= \mathbb{E}(Z_i|\mathcal{H}_{i-1}) = -(f(A_i)-f_0(A_i))^2, \\
\psi_i(\lambda) &= \log\mathbb{E}(\exp(\lambda[Z_i-\mu_i]|\mathcal{H}_{i-1}) = \log\mathbb{E}(\exp(2\lambda(f(A_i)-f_0(A_i))\epsilon_i)|\mathcal{H}_{i-1}). \label{eq::cumgen}
\end{align}
Therefore, by the sub-exponentiality assumption we have that \begin{displaymath}
\psi_i(\lambda)  \leq \frac{4\lambda^2(f(A_i)-f_0(A_i))^2\sigma^2}{2}, \quad \text{for } |\lambda| \leq (2Cb)^{-1},
\end{displaymath}
where the bound on $\lambda$ results from the bound in absolute value of both $f_0$ and $f$.

Noting that $\sum_{i=1}^n Z_i= L_{2,n+1}(f_0)-L_{2,n+1}(f)$, use \eqref{eq::Lem7}, \eqref{eq::cumgen}, and set 
$x=\log(1/\delta)$, to find \begin{equation}
\mathbb{P}\bigg( L_{2,n+1}(f) \geq  L_{2,n+1}(f_0) + (1-2\lambda\sigma^2)\sum_{i=1}^n (f(A_i)-f_0(A_i))^2  - \frac{\log(1/\delta)}{\lambda} , \enspace \forall n \in \mathbb{N} \bigg) \geq 1-\delta,
\end{equation} for all $\lambda$ with $|\lambda|\leq (2Cb)^{-1}$, completing the proof.
\end{proof}


Lemma \ref{lem::subexpmartingale} allows us to construct high-probability confidence sets for the true reward function, $f_0$. These sets are defined with respect to the least squares estimate of $f_0$, i.e.\ the function $\hat{f}^{LS}_t = \argmin_{f \in \mathcal{F}} L_{2,t}(f)$ with minimal squared error, in reference to the observed rewards. The following lemma gives the definition and high-confidence property of said confidence sets.

\begin{lemma} \label{prop::confball}
For all $\delta>0$, $\alpha>0$, $|\lambda| \leq (2Cb)^{-1}$, $n\in\mathbb{N}$ and $\{A_1,\dots A_n\} \in \mathcal{A}^n$, define the confidence set
\begin{equation}
\mathcal{F}_n = \bigg\{f \in \mathcal{F}: \sum_{i=1}^n (\hat{f}_n^{LS}(A_i)-f(A_i))^2 \leq \beta^*_n(\mathcal{F},\delta,\alpha,\lambda) \bigg\}. \label{eq::lssets}
\end{equation}
It is the case that \begin{displaymath}
\mathbb{P}\bigg(f_0 \in \bigcap_{n=1}^\infty \mathcal{F}_n \bigg) \geq 1-2\delta.
\end{displaymath} 
\end{lemma}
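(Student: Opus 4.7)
The plan is to combine Lemma~\ref{lem::subexpmartingale} with three ingredients: (a) a union bound over an $\alpha$-cover of $\mathcal{F}$ to uniformise the martingale inequality over many functions; (b) a discretisation argument to transfer the uniform bound from the cover to the data-dependent estimator $\hat{f}_n^{LS}$; and (c) an anytime concentration inequality for the cumulative noise $\sum_{i=1}^n |\eta_i|$, which accounts for the two sum terms in $\beta^*_n$. The target is, with high probability, an upper bound on $\sum_{i=1}^n(\hat{f}_n^{LS}(A_i)-f_0(A_i))^2$ holding for every $n$, which is exactly the event $\{f_0 \in \bigcap_{n\geq 1}\mathcal{F}_n\}$.

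First I would fix a minimal $\alpha$-cover $\mathcal{F}^\alpha \subseteq \mathcal{F}$ in the uniform norm, of cardinality $N(\alpha,\mathcal{F},\|\cdot\|_\infty)$. Applying Lemma~\ref{lem::subexpmartingale} to each $g \in \mathcal{F}^\alpha$ with confidence $\delta/N(\alpha,\mathcal{F},\|\cdot\|_\infty)$ and taking a union bound yields, with probability at least $1-\delta$, simultaneously for every $g \in \mathcal{F}^\alpha$ and every $n$,
\begin{equation*}
L_{2,n+1}(g) \geq L_{2,n+1}(f_0) + (1-2\lambda\sigma^2)\sum_{i=1}^n (g(A_i)-f_0(A_i))^2 - \frac{\log(N(\alpha,\mathcal{F},\|\cdot\|_\infty)/\delta)}{\lambda}.
\end{equation*}

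Next I would pick $\hat{g}_n \in \mathcal{F}^\alpha$ with $\|\hat{g}_n - \hat{f}_n^{LS}\|_\infty \leq \alpha$ and discretise. The identity $x^2-y^2=(x-y)(x+y)$, together with $f,g \in [0,C]$ and $R_i = f_0(A_i)+\eta_i$, gives the pair of controls $|L_{2,n+1}(\hat{g}_n)-L_{2,n+1}(\hat{f}_n^{LS})| \leq \alpha\sum_{i=1}^n (4C+2|\eta_i|)$ and $|\sum_{i=1}^n(\hat{g}_n(A_i)-f_0(A_i))^2 - \sum_{i=1}^n(\hat{f}_n^{LS}(A_i)-f_0(A_i))^2| \leq 4Cn\alpha$. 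Invoking the uniform inequality at $g=\hat{g}_n$, substituting these two discretisation bounds to replace $\hat{g}_n$ by $\hat{f}_n^{LS}$ on both sides, and using the defining property $L_{2,n+1}(\hat{f}_n^{LS}) \leq L_{2,n+1}(f_0)$, I rearrange to an upper bound on $(1-2\lambda\sigma^2)\sum_{i=1}^n(\hat{f}_n^{LS}(A_i)-f_0(A_i))^2$ whose only stochastic term is $\alpha\sum_{i=1}^n|\eta_i|$, with a deterministic remainder proportional to $n(4C+\alpha)$ and $\log(N(\alpha,\mathcal{F},\|\cdot\|_\infty)/\delta)/\lambda$.

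The remaining task is an anytime bound on $\sum_{i=1}^n|\eta_i|$. Using the two-regime sub-exponential tail (Gaussian-type for small deviations, exponential-type beyond the transition $\sigma^2/b$) at confidence $\delta/(4i^2)$ per index $i$ gives, simultaneously for all $i$ with probability at least $1-\delta$ (since $\sum i^{-2}<\infty$), $|\eta_i|\leq \sqrt{2\sigma^2\log(4i^2/\delta)}$ when $i \leq n_0$ and $|\eta_i|\leq 2b\log(4i^2/\delta)$ when $i \geq n_0$, where $n_0=\sqrt{(\delta/4)\exp(\sigma^2/(2b^2))}$ is precisely the crossover index at which the two bounds coincide. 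Summing and dividing by $(1-2\lambda\sigma^2)/(2\alpha)$ reproduces exactly the definition of $\beta^*_n(\mathcal{F},\delta,\alpha,\lambda)$ in \eqref{eq::bwf}. A final union bound over the two failure events (cover-level martingale and anytime noise control) gives the overall probability $1-2\delta$. The main technical obstacle is the anytime noise control step: unlike the sub-Gaussian setting of \citet{RussoVanRoy2014}, a single tail bound no longer suffices, and one must carefully split the summation at $n_0$ so that the slack absorbed by $\alpha$ and $(1-2\lambda\sigma^2)^{-1}$ cleanly produces the two explicit sums in $\beta^*_n$.
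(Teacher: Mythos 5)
Your proposal is correct and follows essentially the same route as the paper's proof: a union bound of Lemma~\ref{lem::subexpmartingale} over an $\alpha$-cover, a discretisation step to transfer the bound to $\hat{f}_n^{LS}$ (the paper phrases this via a minimum over the cover for arbitrary $f$, you via the covering element of $\hat{f}_n^{LS}$ directly, which is equivalent), and an anytime two-regime sub-exponential tail bound on $\sum_i|\eta_i|$ split at $n_0$, combined with $L_{2,n+1}(\hat{f}_n^{LS})\leq L_{2,n+1}(f_0)$ and a final union bound to reach $1-2\delta$. The minor constant differences in your discretisation bounds are immaterial since they are absorbed by $\beta^*_n$.
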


\begin{proof}
\label{proof::lemma2} Let $\mathcal{F}^\alpha$ be an $\alpha$-covering of $\mathcal{F}$ of size $N(\alpha,\mathcal{F},||\cdot||_\infty)$, in the sense that for any $f \in \mathcal{F}$ there is an $f^\alpha \in \mathcal{F}^\alpha$ such that $||f^\alpha-f||_\infty \leq \alpha$. By Lemma \ref{lem::subexpmartingale} and a union bound over $\mathcal{F}^\alpha$ we have, with probability at least $1-\delta$, \begin{displaymath}
L_{2,n+1}(f^\alpha)-L_{2,n+1}(f_0) \geq  (1-2\lambda\sigma^2)\sum_{i=1}^n (f^\alpha(A_i)-f_0(A_i))^2  - \frac{1}{\lambda}\log\bigg(\frac{|\mathcal{F}^\alpha|}{\delta}\bigg) , \enspace \forall n \in \mathbb{N}, \enspace \forall f^\alpha \in \mathcal{F}^\alpha.
\end{displaymath} Then, by simple addition and subtraction, we have for any $f \in \mathcal{F}$, with probability at least $1-\delta$, \begin{align*}
&L_{2,n+1}(f)-L_{2,n+1}(f_0) \geq (1-2\lambda\sigma^2)\sum_{i=1}^n (f(A_i)-f_0(A_i))^2 - \frac{1}{\lambda}\log\bigg( \frac{|\mathcal{F}^\alpha|}{\delta}\bigg) \\
& \quad + L_{2,n+1}(f)-L_{2,n+1}(f^\alpha)  + (1-2\lambda\sigma^2)\sum_{i=1}^n \left\{(f^\alpha(A_i)-f_0(A_i))^2-(f(A_i)-f_0(A_i))^2\right\} , \enspace \forall n \in \mathbb{N}, \enspace \forall f^\alpha \in \mathcal{F}^\alpha.
\end{align*}
The probability this statement holds for all $f_\alpha$ is no larger than the probability it holds for the minimising $f_\alpha$. So, for arbitrary $f\in\mathcal{F}$, with probability at least $1-\delta$,
\begin{align*}
&L_{2,n+1}(f)-L_{2,n+1}(f_0) \geq (1-2\lambda\sigma^2)\sum_{i=1}^n (f(A_i)-f_0(A_i))^2 - \frac{1}{\lambda}\log\bigg( \frac{|\mathcal{F}^\alpha|}{\delta}\bigg) \\
&\quad + \min_{f^\alpha \in \mathcal{F}^\alpha} \left[L_{2,n+1}(f)-L_{2,n+1}(f^\alpha)  + (1-2\lambda\sigma^2)\sum_{i=1}^n \left\{(f^\alpha(A_i)-f_0(A_i))^2-(f(A_i)-f_0(A_i))^2\right\} \right] , \enspace \forall n \in \mathbb{N}.
\end{align*}
We refer to the term in the second line of this expression as the \emph{discretisation error}. Lemma \ref{lem::discretisationerror} gives a probability $1-\delta$ bound of $2\alpha n(4C+\alpha)(1-\lambda\sigma^2) + 2\alpha\sum_{i \leq \lfloor n_0 \rfloor}\sqrt{2\sigma^2\log(4i^2/\delta)} + 2\alpha\sum_{i \geq \lceil n_0 \rceil}^n 2b\log(4i^2/\delta)$ on the absolute value of the discretisation error, where $n_0=\sqrt{\frac{\delta}{4}\exp\frac{\sigma^2}{2b^2}}$.

We now set $f$ equal to the least squares estimator, $\hat{f}^{LS}_n$. Noting that $L_{2,n+1}(\hat{f}^{LS}_n) \leq L_{2,n+1}(f_0)$, and recalling that $|\mathcal{F}^\alpha|=N(\alpha,\mathcal{F},||\cdot||_\infty)$, with probability at least $1-2\delta$ \begin{align*}
(1-2\lambda\sigma^2)\sum_{i=1}^n (\hat{f}_n^{LS}(A_i)-f_0(A_i))^2 &\leq \frac{1}{\lambda}\log\bigg( \frac{N(\alpha,\mathcal{F},||\cdot||_\infty)}{\delta}\bigg) +  2\alpha n(4C+\alpha)(1-\lambda\sigma^2)  \\
&\quad \quad + 2\alpha\sum_{i \leq \lfloor n_0 \rfloor}\sqrt{2\sigma^2\log(4i^2/\delta)} + 2\alpha\sum_{i \geq \lceil n_0 \rceil}^n 2b\log(4i^2/\delta)\enspace \forall n \in \mathbb{N}.
\end{align*}
Dividing throughout by $(1-2\lambda\sigma^2)$, and recalling the formula \eqref{eq::bwf} for $\beta^*$ and the definition \eqref{eq::lssets} of the $\mathcal{F}_n$, this shows that $\mathbb{P}\left(f_0\in \bigcap_{n=1}^\infty \mathcal{F}_n\right)\geq 1-2\delta$ as required.
\end{proof}

We now prove the discretisation error result required for the proof.

\begin{lemma} \label{lem::discretisationerror}
If $f^\alpha$ satisfies $||f-f^\alpha||_\infty \leq \alpha$, and $|\lambda|\leq (2Cb)^{-1}$, then with probability at least $1-\delta$,
\begin{align*}
\bigg| L_{2,n+1}(f)-L_{2,n+1}(f^\alpha) +& (1-2\lambda\sigma^2)\sum_{i=1}^n (f^\alpha(A_i)-f_0(A_i))^2-(f(A_i)-f_0(A_i))^2 \bigg| \\
&\leq 2\alpha n(4C+\alpha)(1-\lambda\sigma^2) + 2\alpha\sum_{i \leq \lfloor n_0 \rfloor}\sqrt{2\sigma^2\log(4i^2/\delta)} + 2\alpha\sum_{i \geq \lceil n_0 \rceil}^n 2b\log(4i^2/\delta),
\end{align*} where $n_0=\sqrt{\frac{\delta}{4}\exp\frac{\sigma^2}{2b^2}}$.
\end{lemma}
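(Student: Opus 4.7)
My plan is to expand both squared-error terms into an algebraic identity in terms of $d_i = f(A_i) - f^\alpha(A_i)$ and $s_i = f(A_i) + f^\alpha(A_i) - 2f_0(A_i)$, apply the triangle inequality with $|d_i|\le\alpha$ to obtain the deterministic contribution, and then control the remaining sum $\sum_i|\eta_i|$ via a two-regime sub-exponential tail bound combined with a union bound.

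First I would use the identity $u^2 - v^2 = (u-v)(u+v)$ to write $L_{2,n+1}(f) - L_{2,n+1}(f^\alpha) = \sum_{i=1}^n d_i(s_i - 2\eta_i)$ after substituting $R_i = f_0(A_i) + \eta_i$, and similarly obtain $(f^\alpha(A_i)-f_0(A_i))^2 - (f(A_i)-f_0(A_i))^2 = -d_i s_i$. Using $|d_i|\le\alpha$ from the covering property, and the crude bound $|s_i| \le |f(A_i)|+|f^\alpha(A_i)|+2|f_0(A_i)| \le 4C+\alpha$ coming from $f,f_0\in[0,C]$ and $|f^\alpha|\le C+\alpha$, the triangle inequality applied separately to the two pieces gives
\begin{displaymath}
\text{(expression)} \;\le\; n\alpha(4C+\alpha) \,+\, 2\alpha\sum_i|\eta_i| \,+\, (1-2\lambda\sigma^2)\,n\alpha(4C+\alpha),
\end{displaymath}
where $1-2\lambda\sigma^2\ge 0$ holds in the parameter range of the ambient theorem. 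The two deterministic contributions combine into $n\alpha(4C+\alpha)\bigl[1+(1-2\lambda\sigma^2)\bigr] = 2n\alpha(4C+\alpha)(1-\lambda\sigma^2)$, matching the first term of the stated bound.

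The main obstacle is then the high-probability control of $2\alpha\sum_{i=1}^n|\eta_i|$. For this I would invoke the standard two-regime Bernstein-style tail for $(\sigma^2,b)$-sub-exponential variables: $\mathbb{P}(|\eta_i|\ge t)\le 2e^{-t^2/(2\sigma^2)}$ when $t\le\sigma^2/b$, and $\mathbb{P}(|\eta_i|\ge t)\le 2e^{-t/(2b)}$ when $t\ge\sigma^2/b$. Choosing per-index failure probability $\delta_i=\delta/(2i^2)$ and inverting each regime yields individual thresholds $t_i=\sqrt{2\sigma^2\log(4i^2/\delta)}$ in the Gaussian-like regime and $t_i=2b\log(4i^2/\delta)$ in the exponential-like regime. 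The crossover index is determined by the equation $\sqrt{2\sigma^2\log(4i^2/\delta)}=\sigma^2/b$, which solving for $i$ gives exactly $i = n_0 = \sqrt{(\delta/4)\exp(\sigma^2/(2b^2))}$ as appearing in the statement.

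Finally a union bound over $i\in\mathbb{N}$, using $\sum_{i\ge 1}1/(2i^2)=\pi^2/12 < 1$, ensures $|\eta_i|\le t_i$ simultaneously for every $i$ with probability at least $1-\delta$. Splitting the sum at $n_0$ and multiplying by $2\alpha$ then gives precisely the piecewise expression $2\alpha\sum_{i\le\lfloor n_0\rfloor}\sqrt{2\sigma^2\log(4i^2/\delta)} + 2\alpha\sum_{\lceil n_0\rceil\le i\le n}2b\log(4i^2/\delta)$ in the statement, completing the argument. The technical heart of the proof is the sub-exponential tail inversion and the identification of $n_0$; the algebraic reduction in the first step is essentially routine.
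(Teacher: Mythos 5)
Your proposal is correct and follows essentially the same route as the paper's proof: a deterministic bound on the difference-of-squares terms via $|f - f^\alpha|\le\alpha$ and boundedness of $f,f^\alpha,f_0$ (the paper quotes Lemma 8 of Russo and Van Roy where you expand the factorisation explicitly, but the computation is identical and both yield $2\alpha n(4C+\alpha)(1-\lambda\sigma^2)+2\alpha\sum_i|\eta_i|$), followed by the two-regime sub-exponential tail inversion with per-index failure probability $\delta/(2i^2)$, the crossover at $n_0=\sqrt{(\delta/4)\exp(\sigma^2/(2b^2))}$, and a union bound. No gaps.
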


\begin{proof}
As in the proof of Lemma 8 of \cite{RussoVanRoy2014} we have \begin{align*}
|(f^\alpha(a)-f_0(a))^2-(f(a)-f_0(a))^2| &\leq 4C\alpha+\alpha^2 \\
|(R_i-f(a))^2-(R_i-f^\alpha(a))^2| &\leq 2\alpha|R_i|+2C\alpha+\alpha^2
\end{align*} for all $a \in \mathcal{A}$ and $\alpha \in [0,C]$. Then summing over time, we have that 
\begin{align*}
\bigg| L_{2,n+1}(f)-L_{2,n+1}(f^\alpha) + (1-2\lambda\sigma^2)&\sum_{i=1}^n (f^\alpha(A_i)-f_0(A_i))^2-(f(A_i)-f_0(A_i))^2 \bigg| \\
&\leq \sum_{i=1}^n (1-2\lambda\sigma^2)(4C\alpha+\alpha^2)+ 2\alpha|R_i|+2C\alpha+\alpha^2 \\
&\leq \sum_{i=1}^n (1-2\lambda\sigma^2)(4C\alpha+\alpha^2)+ 2\alpha(C+|\eta_i|)+2C\alpha+\alpha^2 \\
&=\sum_{i=1}^n 2(4C\alpha+\alpha^2)(1-\lambda\sigma^2)+2\alpha|\eta_i|.
\end{align*}
Since $\eta_i$ is $(\sigma^2,b)$-sub-exponential we have the following exponential bound \begin{displaymath}
\mathbb{P}(|\eta_i|\geq x) \leq \begin{cases}
&2\exp(-x^2/2\sigma^2) \quad \text{if } 0\leq x \leq \sigma^2/b \\
&2\exp(-x/2b) \quad\quad  \text{if } x> \sigma^2/b. \end{cases}
\end{displaymath}
Then, by the independence of reward noises, and union bound: 
\begin{align*}
&\mathbb{P}\bigg(\exists i \in \mathbb{N}: |\eta_i| \geq \sqrt{2\sigma^2\log(4i^2/\delta)}\mathbb{I}\{i:\sqrt{2\sigma^2\log(4i^2/\delta)}\leq \sigma^2/b\} \\
& \quad \quad \quad \quad \quad \quad \quad \quad \quad \quad \quad + 2b\log(4i^2/\delta)\mathbb{I}\{i:2b\log(4i^2/\delta)>\sigma^2/b\}\bigg) \\
&\leq \frac{\delta}{2}\sum_{i=1}^\infty \frac{1}{i^2} \leq \delta.
\end{align*}
Thus, with probability at least $1-\delta$, \begin{align*}
\bigg| L_{2,n+1}(f)-&L_{2,n+1}(f^\alpha) + (1-2\lambda\sigma^2)\sum_{i=1}^n (f^\alpha(A_i)-f_0(A_i))^2-(f(A_i)-f_0(A_i))^2 \bigg| \\
&\leq \sum_{i=1}^n 2(4C\alpha+\alpha^2)(1-\lambda\sigma^2) \\
&\quad \quad + 2\alpha\bigg(\sqrt{2\sigma^2\log\bigg(\frac{4i^2}{\delta}\bigg)}\mathbb{I}\bigg\{\log\bigg(\frac{4i^2}{\delta}\bigg) \leq\frac{\sigma^2}{2b^2}\bigg\} + 2b\log\bigg(\frac{4i^2}{\delta}\bigg)\mathbb{I}\bigg\{\log\bigg(\frac{4i^2}{\delta}\bigg)>\frac{\sigma^2}{2b^2}\bigg\}\bigg) \\
&= 2\alpha n(4C+\alpha)(1-\lambda\sigma^2) \\
&\quad + 2\alpha\sum_{i=1}^n \bigg(\sqrt{2\sigma^2\log\bigg(\frac{4i^2}{\delta}\bigg)}\mathbb{I}\bigg\{i \leq\sqrt{\frac{\delta}{4}\exp\frac{\sigma^2}{2b^2}}\bigg\} + 2b\log\bigg(\frac{4i^2}{\delta}\bigg)\mathbb{I}\bigg\{i >\sqrt{\frac{\delta}{4}\exp\frac{\sigma^2}{2b^2}}\bigg\}\bigg)
\end{align*} and the required result follows.
\end{proof}

The confidence sets $\{\mathcal{F}_n\}_{n=1}^\infty$ defined in Lemma \ref{prop::confball}, allow us to bound the Bayesian regret of TS. Specifically, we can decompose the Bayesian regret in terms of a notion of the width of these confidence intervals. 

By Lemma 4 of \cite{RussoVanRoy2014}, we have for all problem horizons $T \in \mathbb{N}$, that if sets $\{\mathcal{F}\}_{t=1}^T$ are such that $\inf_{f \in \mathcal{F}_t}f(a) \leq f_0(a) \leq \sup_{f \in \mathcal{F}_t} f(a)$ for all $t \leq T$ and $a \in \mathcal{A}$ with probability at least $1-1/T$ then \begin{equation}
BR(T)
\leq C+ \mathbb{E}\bigg(\sum_{t=1}^T \sup_{f \in \mathcal{F}_t} f(A_t) - \inf_{f \in \mathcal{F}_t}f(A_t) \bigg). \label{eq::BRbound}
\end{equation} It is clear from Lemmas \ref{lem::subexpmartingale} and \ref{prop::confball} that the sets defined in \eqref{eq::lssets} satisfy this property. Therefore, the proof of Theorem \ref{prop::generaleluderregret} can then be completed by bounding the widths of the confidence sets, defined as \begin{displaymath}
w_{\mathcal{F}_t}(a)=\sup_{f \in\mathcal{F}_t}f(a)-\inf_{f \in \mathcal{F}_t}f(a).
\end{displaymath} The following Lemma provides such a result by bounding the sum of the widths in terms of the $\kappa(T)$-eluder dimension, $\dim_{E}(\mathcal{F},\kappa(T))$. It is a generalisation of Lemma 5 of \cite{RussoVanRoy2014} which fixes $\kappa(t)=t^{-1}$.
\begin{lemma} \label{prop::lemma5RussoVanRoy}
If $\{\beta_t\}_{t \in \mathbb{N}}$ is a non-negative, non-decreasing sequence and $\mathcal{F}_t$ is \begin{displaymath}
\mathcal{F}_t := \bigg\{f \in \mathcal{F}: {{\textstyle\sum}_{i=1}^t(\hat{f}^{LS}_i(A_i)-f(A_i))^2} \leq {\beta_t} \bigg\}
\end{displaymath} then for all $T \in \mathbb{N}$, and nonincreasing functions $\kappa: \mathbb{N} \rightarrow \mathbb{R}_+$ \begin{equation}
\sum_{t=1}^T w_{\mathcal{F}_t}(A_t) \leq T\kappa(T)+ dim_E(\mathcal{F},\kappa(T))C +4\sqrt{dim_E(\mathcal{F},\kappa(T))\beta_TT}.
\end{equation}
\end{lemma}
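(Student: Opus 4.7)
The plan is to follow the scheme of Lemma 5 in \cite{RussoVanRoy2014}, modified to accommodate an arbitrary nonincreasing $\kappa$ rather than the specific choice $\kappa(t)=t^{-1}$ used there. First I would split
\begin{equation*}
\sum_{t=1}^T w_{\mathcal{F}_t}(A_t) = \sum_{t:w_{\mathcal{F}_t}(A_t)\leq \kappa(T)} w_{\mathcal{F}_t}(A_t) + \sum_{t:w_{\mathcal{F}_t}(A_t)> \kappa(T)} w_{\mathcal{F}_t}(A_t),
\end{equation*}
and bound the first sum trivially by $T\kappa(T)$. The work is then entirely in controlling the ``large width'' sum in terms of $\dim_E(\mathcal{F},\kappa(T))$ and $\beta_T$.

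The central step is a counting inequality: for any $\epsilon>0$,
\begin{equation*}
\sum_{t=1}^T \mathbb{I}\{w_{\mathcal{F}_t}(A_t)>\epsilon\} \leq \left(\tfrac{4\beta_T}{\epsilon^2}+1\right)\dim_E(\mathcal{F},\epsilon).
\end{equation*}
To establish this, I would first observe that whenever $w_{\mathcal{F}_t}(A_t)>\epsilon$ there exist $f,\tilde f\in\mathcal{F}_t$ realising this width; since $\mathcal{F}_t$ is a ball around $\hat{f}^{LS}_t$ of squared radius $\beta_t$, the triangle inequality applied to $(f-\hat{f}^{LS}_t)$ and $(\tilde f-\hat{f}^{LS}_t)$ yields $\sum_{i=1}^{t-1}((f-\tilde f)(A_i))^2\leq 4\beta_t\leq 4\beta_T$. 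If $A_t$ were $\epsilon$-dependent on $K$ disjoint subsequences of its predecessors, each such subsequence would (by the definition of $\epsilon$-dependence applied to this specific witnessing pair) have to contribute strictly more than $\epsilon^2$ to the above sum, forcing $K<4\beta_T/\epsilon^2$. I would then greedily partition the large-width subsequence $(A_t)_{t:w_{\mathcal{F}_t}(A_t)>\epsilon}$ into subsequences each of which is an ``eluder witness'', i.e.\ each element is $\epsilon$-independent of its predecessors inside the subsequence. By definition each such subsequence has length at most $\dim_E(\mathcal{F},\epsilon)$, and the previous bullet bounds the number of subsequences by $\lfloor 4\beta_T/\epsilon^2\rfloor+1$ (a new subsequence must be opened only when the current point is $\epsilon$-dependent on every existing subsequence). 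Multiplying the two caps yields the counting inequality.

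To conclude the proof of Lemma \ref{prop::lemma5RussoVanRoy}, I would rank the large widths in decreasing order as $w_{(1)}\geq\cdots\geq w_{(L)}$ with $L=\sum_t\mathbb{I}\{w_{\mathcal{F}_t}(A_t)>\kappa(T)\}$. Applying the counting inequality at $\epsilon$ just below $w_{(j)}$, and using that $\dim_E(\mathcal{F},\cdot)$ is nonincreasing together with $w_{(j)}>\kappa(T)$, gives $j\leq (4\beta_T/w_{(j)}^2+1)\dim_E(\mathcal{F},\kappa(T))$, hence
\begin{equation*}
w_{(j)}\leq 2\sqrt{\frac{\beta_T\,\dim_E(\mathcal{F},\kappa(T))}{j-\dim_E(\mathcal{F},\kappa(T))}}\quad\text{for } j>\dim_E(\mathcal{F},\kappa(T)),
\end{equation*}
while $w_{(j)}\leq C$ for $j\leq \dim_E(\mathcal{F},\kappa(T))$ by boundedness of functions in $\mathcal{F}$. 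Summing the ordered widths and using $\sum_{k=1}^n k^{-1/2}\leq 2\sqrt{n}$ with $L\leq T$ produces the remaining $\dim_E(\mathcal{F},\kappa(T))C+4\sqrt{\dim_E(\mathcal{F},\kappa(T))\beta_T T}$ terms. The main obstacle is the partition step in the counting inequality: the bookkeeping must simultaneously respect the length cap from the eluder dimension and the count cap from the $\beta_T/\epsilon^2$ budget, and verifying that the greedy procedure genuinely delivers both caps is the only place the argument strays beyond routine calculation.
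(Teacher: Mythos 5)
Your proposal is correct and follows essentially the same route as the paper: the split at $\kappa(T)$, the counting inequality (Proposition 8 of Russo and Van Roy, which the paper simply cites rather than re-proves), the reordering of widths, and the $\int t^{-1/2}\,dt$ summation are all identical in structure. Your version of the counting inequality with $4\beta_T/\epsilon^2$ is in fact the correct form (the paper's display writes $4\beta_T/\epsilon$, an apparent typo), and your sketch of its proof via the disjoint-subsequence partition is the standard and valid argument.
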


\begin{proof}
The proof of Lemma \ref{prop::lemma5RussoVanRoy} depends on Proposition 8 of \cite{RussoVanRoy2014}, which tells us that the definition of $\mathcal{F}_t$ in the lemma implies that \begin{equation}\label{prop::RVRProp8}
\sum_{t=1}^T \mathbb{I}\{w_{\mathcal{F}_t}(A_t)>\epsilon\} \leq \bigg(\frac{4\beta_T}{\epsilon}+1 \bigg)\dim_E(\mathcal{F},\epsilon)
\end{equation} for all $T \in \mathbb{N}$ and $\epsilon>0$.

Now, define $w_t=w_{\mathcal{F}_t}(A_t)$ and reorder the sequence $(w_1,\dots,w_T) \rightarrow (w_{i_1},\dots,w_{i_T})$ in descending order such that $w_{i_1}\geq w_{i_2} \geq \dots \geq w_{i_T}$. We have \begin{align*}
\sum_{t=1}^T w_{\mathcal{F}_t}(A_t) &= \sum_{t=1}^T w_{i_t} \\
				&=\sum_{t=1}^T w_{i_t}\mathbb{I}\{w_{i_t}\leq \kappa(T)\} + \sum_{t=1}^T w_{i_t} \mathbb{I}\{w_{i_t}>\kappa(T)\} \\
				&\leq T\kappa(T)  + \sum_{t=1}^T w_{i_t} \mathbb{I}\{w_{i_t}>\kappa(T)\}.
\end{align*}
As a consequence of $(w_{i_1},\dots,w_{i_T})$ being arranged in descending order we have for $t \in [T]$ that $w_{i_t}>\epsilon$ $\Rightarrow$ $\sum_{k=1}^t \mathbb{I}\{w_{\mathcal{F}_k}(A_k)>\epsilon\} \geq t$. By \eqref{prop::RVRProp8}, $w_{i_t}>\epsilon$ is only possible if $t \leq \big(\frac{4\beta_T}{\epsilon}+1 \big)\dim_E(\mathcal{F},\epsilon)$. Furthermore, $\epsilon\geq \kappa(T)$ $\Rightarrow$ $\dim_E(\mathcal{F},\epsilon) \leq \dim_E(\mathcal{F},\kappa(T))$ since $\dim_E(\mathcal{F},\epsilon')$ is non-increasing in $\epsilon'$. Therefore if $w_{i_t}>\epsilon\geq \kappa(T)$ we have that $t < \big(\frac{4\beta_T}{\epsilon}+1 \big)\dim_E(\mathcal{F},\epsilon)$, i.e. $\epsilon^2 \leq \sqrt{\frac{4\beta_T\dim_E(\mathcal{F},\kappa(T))}{t-\dim_E(\mathcal{F},\kappa(T))}}$. Thus, if $w_{i_t} > \kappa(T)$ $\Rightarrow$ $w_{i,t} \leq \min(C, \sqrt{\frac{4\beta_T\dim_E(\mathcal{F},\kappa(T))}{t-\dim_E(\mathcal{F},\kappa(T))}})$, and finally \begin{align*}
\sum_{t=1}^T w_{i_t} \mathbb{I}\{w_{i_t}>\kappa(T)\} &\leq \dim_{E}(\mathcal{F},\kappa(T))C+ \sum_{t=\dim_{E}(\mathcal{F},\kappa(T))+1}^T \sqrt{\frac{4\beta_T\dim_E(\mathcal{F},\kappa(T))}{t-\dim_E(\mathcal{F},\kappa(T))}} \\
&\leq \dim_{E}(\mathcal{F},\kappa(T))C+ 2\sqrt{\beta_T\dim_E(\mathcal{F},\kappa(T))}\int_{t=0}^T \frac{1}{\sqrt{t}}dt \\
&\leq  \dim_{E}(\mathcal{F},\kappa(T))C + 4\sqrt{\beta_T\dim_E(\mathcal{F},\kappa(T))T}. \qedhere
\end{align*}
\end{proof}

The conclusions of Lemmas \ref{prop::confball} and \ref{prop::lemma5RussoVanRoy}, along with \eqref{eq::BRbound}, combine to give the bound on Bayesian regret which comprises Theorem \ref{prop::generaleluderregret}, \begin{displaymath}
BR(T) \leq T\kappa(T)+ (dim_E(\mathcal{F},\kappa(T))+1)C 
+ 4\sqrt{dim_E(\mathcal{F},\kappa(T))\beta^*_T(\mathcal{F},\alpha,\delta,\lambda)T}. 
\end{displaymath}

\newpage
\section{Further Proofs for the Eluder Dimension Bound}
In this section, we provide a proof of the bound on the eluder dimension of the function classes $\mathcal{F}_{C,M,L}$ of functions with $M \in \mathbb{N}$ Lipschitz derivatives, and the proofs of technical results which support the main proof. Again, where necessary, we will restate results and definitions from the main paper.

\subsection{Proof of Proposition \ref{prop::GHasLipDer}} 
\textbf{Proposition 1} \emph{
All functions $g \in \mathcal{G}_{C,M,L}$ are $[-C,C]$-bounded and possess $M$ $2L$-Lipschitz smooth derivatives.
}

\emph{Proof of Proposition \ref{prop::GHasLipDer}:} We have that any function $g \in \mathcal{G}_{C,M,L}$ is bounded since, $f(a) \in [0,C]$ for all $a \in [0,1]$. The Lipschitz-smoothness of the $m^{th}$ derivatives can be shown as follows. For any function $g=f-f'$ where $f,f' \in \mathcal{F}_{C,M,L}$, $m =0,\dots,M$, and pair of actions $a,a' \in [0,1]$, \begin{align*}
|g^{(m)}(a)-g^{(m)}(a')|&= |f^{(m)}(a)-{f'}^{(m)}(a)-f^{(m)}(a')+{f'}^{(m)}(a')| \\
&\leq |f^{(m)}(a)-f^{(m)}(a')|+|{f'}^{(m)}(a')-{f'}^{(m)}(a)| \\
&\leq 2L||a-a'||,
\end{align*} where the first inequality holds by the triangle inequality, and the second by the $L$-Lipschitz smoothness of the $M^{th}$ derivatives of functions in $\mathcal{F}_{C,M,L}$. $\square$ 

\subsection{Proof of Theorem \ref{prop::Lipeluder}}  
\textbf{Theorem \ref{prop::Lipeluder}} \emph{For $M \in \mathbb{N}$, and $C,L,\epsilon>0$ the $\epsilon$-eluder dimension of $\mathcal{F}_{C,M,L}$ is bounded as follows,
\begin{equation*}
\dim_E(\mathcal{F}_{C,M,L},\epsilon)=o((\epsilon/L)^{-1/(M+1)}). 
\end{equation*}}

\emph{Proof of Theorem \ref{prop::Lipeluder}:} For any $k \in \mathbb{N}$ and sequence $a_{1:k} \in [0,1]^k$,  the event $\{w_{k}(a_{1:k},\epsilon')>\epsilon'\}$ by definition implies that there exists $g \in \mathcal{G}_{C,M,L}$ such that $g(a_k)>\epsilon'$ and $\sum_{i=1}^{k-1} (g(a_i))^2 \leq (\epsilon')^2$. Conversely if for all $g \in \mathcal{G}_{C,M,L}$ the event $\{g(a_k)> \epsilon'\}$ is known to imply $\sum_{i=1}^{k-1} (g(a_i))^2 > (\epsilon')^2$, then  $w_{k}(a_{1:k},\epsilon')\leq\epsilon'$. This second idea will be central to proving Theorem \ref{prop::Lipeluder}. 

We will show that for functions $g \in \mathcal{G}_{C,M,L}$ if $g(a_k)>\epsilon'$ then $g^2(b)>(\epsilon')^2/9$ for all $b$ in a certain region around $a_k$. This is a consequence of functions in $\mathcal{G}_{C,M,L}$ having $M$ smooth derivatives. If $g$ takes value greater than $\epsilon'$ at a given point, then it must take relatively large values within a certain neighbourhood of that given point. The size of this neighbourhood is a function of the level of smoothness of $g$. As $M$ increases, the size of this region where $g^2(b)> (\epsilon')^2/9$ increases. It follows that as $M$ increases, the previous actions $a_{1:k-1}$ must be increasingly far from $a_k$ for $\sum_{i=1}^{k-1} (g(a_i))^2 \leq (\epsilon')^2$ to be satisfied. Thus as $M$ increases, the eluder dimension decreases, since the condition that $\sum_{i=1}^{k-1}(g(a_i))^2 \leq (\epsilon')^2$ can only be satisfied for smaller $k$.

To be precise about this behaviour and derive the required bound on the eluder dimension, we will first lower bound the size of the neighbourhood in which $g$ must take large absolute values if $g(a)> \epsilon'$ for some $a \in [0,1]$. To aid in this we introduce the following additional notation. For a function $g: [0,1] \rightarrow[-C,C]$ define the region where it takes absolute value greater than $\epsilon/3$ as \begin{equation}
B(g) := |\{b \in [0,1]  :g(b)^2 > \epsilon^2/9\}|.
\end{equation} Then for an action $a \in [0,1]$ define the minimum size of the set such that $g^2$ must exceed $\epsilon^2/9$ if $g(a)>\epsilon$ and $g \in \mathcal{G}_{C,M,L}$ as \begin{equation}
B^*_{C,M,L}(a) := \min_{g \in \mathcal{G}_{C,M,L}: g(a)> \epsilon} B(g),
\end{equation} and the set of functions attaining this minimum as \begin{equation}
\mathcal{G}^*_{C,M,L}(a) = \argmin_{g \in \mathcal{G}_{C,M,L}: g(a)> \epsilon} B(g).
\end{equation} Bounds on $B^*_{C,M,L}(a)$, derived by identifying and considering the form of functions in $\mathcal{G}^*_{C,M,L}(a)$, will allow us to bound the eluder dimension.

We will first provide lower bounds on $B^*_{C,M,L}$ for the special cases of $M=0$ and $M=1$, and then show a general result for $M\geq 2$. In the case of $M=0$ the lower bound follows from the Lipschitz property of all functions $g \in \mathcal{G}_{C,M,L}$. We give the lower bound on $B^*_{C,0,L}(a)$ for all $a \in [0,1]$ in the following lemma. 

\begin{lemma} \label{lem::LBifMaximum}
For $a \in [0,1]$, and $C,L>0$ we have $B^*_{C,0,L}(a) \geq \frac{\epsilon}{3L}$.
\end{lemma}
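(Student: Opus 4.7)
The plan is to exploit the Lipschitz smoothness of functions in $\mathcal{G}_{C,0,L}$ directly. By Proposition 1, every $g \in \mathcal{G}_{C,0,L}$ is $2L$-Lipschitz on $[0,1]$. Fix $a \in [0,1]$ and any $g \in \mathcal{G}_{C,0,L}$ with $g(a) > \epsilon$. The Lipschitz property gives $g(b) \geq g(a) - 2L|b-a| > \epsilon - 2L|b-a|$ for every $b \in [0,1]$, so in particular $g(b) > \epsilon/3$ (and hence $g(b)^2 > \epsilon^2/9$) whenever $|b-a| < \epsilon/(3L)$.

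The remaining work is therefore purely geometric: measure the intersection of the open interval $(a - \epsilon/(3L),\, a + \epsilon/(3L))$ with $[0,1]$ and take the worst case over $a$. When $a$ lies in the interior of $[0,1]$ far from both endpoints, this intersection has length $2\epsilon/(3L)$, so $B(g) \geq 2\epsilon/(3L)$; the pessimistic case arises when $a$ coincides with $0$ or $1$, in which case only a one-sided neighbourhood of length $\epsilon/(3L)$ remains inside $[0,1]$. In either situation the bound $B(g) \geq \epsilon/(3L)$ holds (assuming $\epsilon/(3L) \leq 1$; otherwise the whole of $[0,1]$ is covered and the claim is trivial since we would then be in the degenerate regime where $\epsilon > 3L$ forces $g$ to be essentially constantly large). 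Since this lower bound holds for every admissible $g$, it transfers to the infimum $B^*_{C,0,L}(a)$.

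I do not expect any genuine obstacle here: the proof is a one-line consequence of the Lipschitz estimate together with a boundary-case check. The only thing to be careful about is to state explicitly that the bound is driven by the worst-case placement of $a$ at the endpoints of $[0,1]$, which is why the advertised constant is $\epsilon/(3L)$ rather than $2\epsilon/(3L)$, and to note that the argument implicitly assumes $\epsilon/(3L) \le 1$ so that the neighbourhood still fits nontrivially inside the unit interval.
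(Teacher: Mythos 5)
Your proof is correct and takes essentially the same route as the paper's: both use the $2L$-Lipschitz bound on $g \in \mathcal{G}_{C,0,L}$ (from Proposition \ref{prop::GHasLipDer}) to conclude $g(b) > \epsilon/3$ whenever $|b-a| < \epsilon/(3L)$, and then observe that the intersection of this neighbourhood with $[0,1]$ has measure at least $\epsilon/(3L)$, the worst case being $a$ at an endpoint. Your explicit handling of the endpoint placement and the implicit assumption $\epsilon/(3L) \leq 1$ is, if anything, slightly more careful than the paper's own one-line argument.
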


\noindent \emph{Proof of Lemma \ref{lem::LBifMaximum}:} 
We have that $|g(b)-g(b')| \leq 2L||b-b'||$ for all $g \in \mathcal{G}_{C,M,L}$ and $b,b' \in [0,1]$. Thus if $g(a)>\epsilon$ for some $a \in [0,1]$ we have that $(g(b))^2 >\epsilon^2/9$ for all $b \in [0,1]: (\min(0,\epsilon-2L|a-b|))^2 \geq \epsilon^2/9$, equivalently $b \in [0,1]: |a-b| \geq \frac{\epsilon}{3L}$. The conclusion that $B_{C,0,L} \geq \frac{\epsilon}{3L}$ then follows immediately. $\square$

The following lemma gives a similar result for the case of $M=1$. In this case the proof relies on the observation that $g'$, the gradient of a function $g \in \mathcal{G}^*_{C,M,L}(a)$, should satisfy $g'(a)=0$, i.e. $a$ should be a maximiser of $g$. The bound on the size of  $B^*_{C,1,L}(a)$ then follows from the Lipschitz property of $g'$. The result holds only for $a$ sufficiently from the edges of $[0,1]$, since $g'(a)$ need not take value 0 to minimise $|\{b:g^2(b)> (\epsilon')^2/9\}|$ if $a$ is close to an edge. Fortunately, however, the impact of these special edge cases is negligible when it comes to bounding the eluder dimension.

\begin{lemma} \label{lem::LBhigherM}
For $a \in [0,1]$ such that $a> \sqrt{\frac{2\epsilon}{3L}}$ and $1-a> \sqrt{\frac{2\epsilon}{3L}}$, and $C,L>0$ we have $B^*_{C,1,L}(a) \geq 2\sqrt{\frac{2\epsilon}{3L}}$. 
\end{lemma}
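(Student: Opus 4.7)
The plan is to reduce to the case $g'(a) = 0$, where $a$ is a local maximum of $g$, and then to apply the Lipschitz property of $g'$ to obtain a parabolic lower bound on $g$ near $a$.

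First I would argue that any $g$ approaching the infimum defining $B^*_{C,1,L}(a)$ may be assumed to satisfy $g'(a) = 0$: a nonzero slope at $a$ means that $g$ exceeds $g(a)$ on one side of $a$, which extends rather than shrinks the set where $|g|$ is large, so a competitor in $\mathcal{G}_{C,1,L}$ with zero slope at $a$ can be constructed with a no-larger value of $B$. Once this reduction is made, Proposition \ref{prop::GHasLipDer} tells us that $g'$ is $2L$-Lipschitz, so $|g'(x)| = |g'(x) - g'(a)| \leq 2L|x - a|$ for all $x \in [0,1]$. Integrating from $a$ yields $|g(x) - g(a)| \leq L(x-a)^2$, and hence $g(x) \geq g(a) - L(x-a)^2 > \epsilon - L(x-a)^2$. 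Consequently $g(x) > \epsilon/3$ on the interval $(a - \sqrt{2\epsilon/(3L)},\, a + \sqrt{2\epsilon/(3L)})$, which is contained in $[0,1]$ precisely under the two-sided assumptions on $a$, contributing its full length $2\sqrt{2\epsilon/(3L)}$ to $B(g)$.

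The main obstacle is making the first step rigorous. The coupled constraints on $g$ and $g'$ within $\mathcal{G}_{C,1,L}$ (both must be $2L$-Lipschitz) mean that a naive perturbation of $g$ may leave the class, so care is needed to produce a competitor of genuinely smaller $B$-value. A cleaner alternative that avoids the reduction is to keep $\beta = g'(a)$ general and use the parabolic lower bound $g(x) \geq g(a) + \beta(x-a) - L(x-a)^2$ directly, then combine it with the global bound $|g'| \leq 2L$ (from Proposition \ref{prop::GHasLipDer} applied to the zeroth derivative) to show that the intersection of the resulting parabolic super-level set with $[0,1]$ has length at least $2\sqrt{2\epsilon/(3L)}$, with the bound tight in the limit $\beta \to 0$, $g(a) \to \epsilon^+$.
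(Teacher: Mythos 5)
Your proposal is correct and follows essentially the same route as the paper: reduce to $g'(a)=0$, use the $2L$-Lipschitz property of $g'$ to obtain the parabolic lower bound $g(b) > \epsilon - L(a-b)^2$, and read off the interval of half-width $\sqrt{2\epsilon/(3L)}$. The paper resolves the reduction step you flag as the main obstacle without any perturbation: if $g'(a)\neq 0$ it simply applies the same parabolic bound at a critical point $c$ with $g(c)>g(a)>\epsilon'$, where the resulting region is even wider, so $B(g)\geq 2\sqrt{2\epsilon/(3L)}$ holds for every admissible $g$ directly.
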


\noindent \emph{Proof of Lemma \ref{lem::LBhigherM}:} We have that $|g'(b)-g'(b')| \leq 2L||b-b'||$ for all $g \in \mathcal{G}_{C,1,L}$ and $b,b' \in [0,1]$. Thus, for $g$ with $g'(a)=0$, we have $|g'(b)| \leq 2L||a-b||$ for all $b \in [0,1]$. For any $b'<b \in [0,1]$ we have $g(b)-g(b')= \int_{b'}^b g'(x)dx$. It follows that for $0 \leq b<a$ \begin{align*}
g(b)=g(a)-g(a)+g(b) &=g(a)-\int_{b}^a g'(x) dx \\
	&\geq g(a) - \int_b^a 2L(a-x) dx \\
	&= g(a) - La^2 +2Lab - Lb^2 \\
	&> \epsilon' - L(a-b)^2.
\end{align*} A similar argument follows for $a<b\leq 1$ and thus $g(b) > \epsilon'-L||a-b||^2$ for all $b \in [0,1]$ given $g(a)>\epsilon'$ and $g'(a)=0$. It follows that under these conditions we have $g^2(b)>\epsilon^2/9$ for all $b \in [0,1]:(\min(0,\epsilon-L|a-b|^2))^2 \geq \epsilon^2/9$, equivalently $b  \in [0,1]: |a-b| \leq \sqrt{\frac{2\epsilon}{3L}}$. 

If $g'(a)\neq 0$ then $\exists \, c \in [0,1]$ with $g(c)> g(a)>\epsilon'$ and $g'(c)=0$. Then by the logic used for the case with $g'(a)=0$ it follows that $g^2(b) > \epsilon^2/9$ for all $b \in [0,1]: ||b-c|| \leq \sqrt{\frac{1}{L}(g(c)-\epsilon/3)}$. Since $g(c)> \epsilon'$ it follows that if $g(a)>\epsilon'$ then the region such that $g^2(b)>\epsilon^2/9$ is larger if $g'(a)\neq 0$ than if $g(a)=0$. Thus we have $g'(a)=0$ for all $g \in \mathcal{G}^*_{C,1,L}(a)$ and $B_{C,1,L}(a) \geq \sqrt{\frac{2\epsilon}{3L}}$ for all $a \in [0,1]$ such that $a> \sqrt{\frac{2\epsilon}{3L}}$ and $1-a> \sqrt{\frac{2\epsilon}{3L}}$. $\square$

Bounding $B^*_{C,M,L}$ for larger values of $M$ is more involved. To do so we will first define a particular function $\ham \in \mathcal{G}_{C,M,L}$ for each $M \geq 2$ and $a \in [0,1]$ and bound $\Bham $, the size of the region where $\ham$ takes absolute value greater than $\epsilon/3$. We will then show that $\ham$ is in the  set of $B$-minimising functions $\mathcal{G}^*_{C,M,L}$, and thus that $B^*_{C,M,L}(a) = \Bham$. The form of $\ham$ will vary depending on whether $M$ is even or odd. We will first specify $\ham$ for $M$ even.

For $M \geq 2$ even, let $\ham$ be maximised at $a$ with $\ham(a)>\epsilon'$, and let $x_{1,M} = x_{1,a,M} = \max_{x < a, \ham(x)=\epsilon/3} x$ be the point closest to $a$ on the left where $\ham$ takes value $\epsilon/3$. Define $\Delta_M = a-x_{1,M}$, and then further points $y_{1,M} = x_{1,M} - \Delta_M$, $x_{2,M} = a+ \Delta_M$, and $y_{2,M} = a+ 2\Delta_M$. We then specify $\ham$ as a function with $M^{th}$ derivative given as \begin{equation}
\ham^{(M)}(z)=\begin{cases} &2L(x_{1,M}-z), \quad z \in (y_{1,M},a), \\
&2L(z-x_{2,M}), \quad z \in [a,y_{2,M}),
\end{cases} \label{eq::Mderiveven}
\end{equation} and whose lower order derivatives satisfy the following properties: \begin{align}
\ham^{(m)}(x_1)=\ham^{(m)}(x_2)&=0, 2\leq m \leq M, m \text{ even}, \label{eq::lowordercond1} \\
\ham^{(m)}(y_1)=\ham^{(m)}(a)=\ham^{(m)}(y_2)&=0 , m \leq M, m \text{ odd}. \label{eq::lowordercond2}
\end{align} Since $\ham^{(M)}$ is necessarily Lipschitz (by $\ham$'s membership of $\mathcal{G}_{C,M,L}$) this defines the function that can have $\ham^{(M)}(x)=0$ where it crosses $\epsilon/3$ and change most rapidly elsewhere. To bound $B(h_M)$ we first require expressions for the lower order derivatives of $h_M$. Having the restricted behaviour on $\{y_{1,M},x_{1,M},a,x_{2,M},y_{2,M}\}$ means that these functions can be identified from $\ham^{(M)}$ alone. The following lemma specifies the form of these lower order derivatives. We focus on the left of $a$, as a symmetry argument will give an analogous result for the right.

\begin{lemma} \label{lem::lowerderivseven} For the function $\ham$ with $M^{th}$ derivative given by \eqref{eq::Mderiveven}, and whose lower order derivatives satisfy conditions \eqref{eq::lowordercond1} and \eqref{eq::lowordercond2} where $M$ is even, the lower order derivatives are of the form \begin{equation}
\frac{1}{2L}\ham^{(M-m)}(z) = \begin{cases} &j_{m+1}(x_{1,M})-j_{m+1}(z), \quad m \in \{0,2,4,\dots, M\} \\
&j_{m+1}(a)-j_{m+1}(z), \quad m \in \{1,3,\dots,M-1\} \end{cases}
\quad z \in (y_{1,M},a) \label{eq::lowerderivs}
\end{equation}  where \begin{align*}
j_k(z)&=\sum_{i=1}^k \frac{z^i}{i!}(-1)^{k-i}J_{k-i}, \quad k \in \{1,\dots,M+1\}, \\
J_k &= j_k(a\mathbb{I}\{ k \text{ even}\} + x_1\mathbb{I}\{ k \text{ odd}\}),
\end{align*} and $j_0(z)=1$ for all $z \in (y_1,a)$. 
\end{lemma}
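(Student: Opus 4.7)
The plan is to prove this by induction on $m$, exploiting the fact that the conditions \eqref{eq::lowordercond1}--\eqref{eq::lowordercond2} provide exactly one boundary point (alternating between $a$ and $x_1$ with the parity of $m$) at which each lower-order derivative vanishes, so that integrating $h_{a,M}^{(M-m)}$ upwards in order determines $h_{a,M}^{(M-m-1)}$ uniquely. The base case $m = 0$ is just a direct reading of \eqref{eq::Mderiveven}: on $(y_{1,M},a)$ we have $\frac{1}{2L} h_{a,M}^{(M)}(z) = x_{1,M} - z$, while $j_1(z) = z \cdot J_0 = z$ and $J_1 = j_1(x_{1,M}) = x_{1,M}$ by definition, so $j_1(x_{1,M}) - j_1(z) = x_{1,M} - z$ as required.

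For the inductive step, assume the formula for some $m<M$, which I will write in the unified form $\frac{1}{2L} h_{a,M}^{(M-m)}(z) = J_{m+1} - j_{m+1}(z)$, noting that $J_{m+1}$ equals $j_{m+1}(x_{1,M})$ when $m$ is even and $j_{m+1}(a)$ when $m$ is odd, consistently with both the lemma statement and the definition of $J_k$. Since $h_{a,M}^{(M-m-1)}$ is an antiderivative of $h_{a,M}^{(M-m)}$ on $(y_{1,M},a)$, I then write
\begin{equation*}
\frac{1}{2L} h_{a,M}^{(M-m-1)}(z) = \int_{c_m}^{z} \bigl[ J_{m+1} - j_{m+1}(w) \bigr]\, dw ,
\end{equation*}
where $c_m = a$ if $M-m-1$ is odd and $c_m = x_{1,M}$ if $M-m-1$ is even, selected so that the required zero of $h_{a,M}^{(M-m-1)}$ from \eqref{eq::lowordercond1} or \eqref{eq::lowordercond2} is enforced automatically. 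The parity of $M-m-1$ is opposite to that of $m$ (since $M$ is even), so the reference point indeed alternates each step, which is precisely the alternation encoded in the indicator function defining $J_k$.

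The algebraic core of the step is a short identity for the derivative of the polynomials $j_k$: differentiating the defining sum term by term and reindexing gives $j_{k}'(z) = j_{k-1}(z) + (-1)^{k-1} J_{k-1}$, so that an antiderivative of $j_{m+1}$ is $j_{m+2}$ minus a linear correction in $z$ with coefficient proportional to $J_{m+1}$. Substituting this into the integral above and collecting terms, the linear correction combines with the $J_{m+1}\, z$ contribution from the constant part of the integrand; evaluating at the boundary $c_m$ and using that $c_m$ coincides with the reference point of $J_{m+2}$, the constant of integration collapses exactly to $J_{m+2} = j_{m+2}(c_m)$, yielding $\frac{1}{2L} h_{a,M}^{(M-m-1)}(z) = J_{m+2} - j_{m+2}(z)$ on $(y_{1,M}, a)$, which is the claim at index $m+1$. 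The main obstacle is strictly bookkeeping: I must verify separately for even and odd $m$ that the parity-matched choice of $c_m$ in the boundary conditions is the same parity-matched choice that appears in the definition of $J_{m+2}$, since this alignment is what makes the constant of integration reduce to a single $J$-term rather than leaving a residual linear-in-$z$ error term.
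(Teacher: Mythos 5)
Your strategy is the same as the paper's: induct on $m$, integrate the previous derivative over $(y_{1,M},a)$, and fix the constant of integration using the parity-matched vanishing point supplied by \eqref{eq::lowordercond1}--\eqref{eq::lowordercond2}. Your base case is fine, and your derivative identity $j_k'(z) = j_{k-1}(z) + (-1)^{k-1}J_{k-1}$ is correct and is exactly the right tool --- but if you apply it honestly, the cancellation you assert in your final paragraph happens for only half of the inductive steps. Writing $c_m$ for your reference point, the constant part of the integrand contributes $J_{m+1}(z-c_m)$, while the antiderivative of $-j_{m+1}$ contributes $-\bigl[j_{m+2}(z)-j_{m+2}(c_m)\bigr] + (-1)^{m+1}J_{m+1}(z-c_m)$. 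The total linear-in-$z$ residue is therefore $\bigl(1+(-1)^{m+1}\bigr)J_{m+1}(z-c_m)$, which vanishes when $m$ is even but equals $2J_{m+1}(z-c_m)$ when $m$ is odd. Concretely, at the step $m=1 \to m=2$ one finds
\begin{equation*}
\frac{d}{dz}\bigl[J_3 - j_3(z)\bigr] = -j_2(z) - J_2 \neq J_2 - j_2(z), \qquad J_2 = \frac{a^2}{2} - a\,x_{1,M},
\end{equation*}
so the claimed expression for $\ham^{(M-2)}$ cannot be an antiderivative of the (verified) expression for $\ham^{(M-1)}$ unless $J_2=0$, which fails in general. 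The ``bookkeeping'' you defer is precisely where the argument breaks: the inductive step as described does not close for odd $m$.

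You should not feel too bad about this, because the paper's own proof contains the same defect: in the long display of its inductive step, the term $zJ_{m'+1}$ produced by integrating the constant is silently merged into $-\sum_{s}\frac{z^s}{s!}(-1)^{m'+2-s}J_{m'+2-s}$ with the wrong sign whenever $m'$ is odd. The statement as written, with that particular sign pattern inside $j_k$, is not self-consistent under differentiation beyond $m=1$; a correct version needs the sign attached to $J_{k-1}$ in the recurrence adjusted so that $j_k' = j_{k-1} - J_{k-1}$ holds for every $k$, or the conclusion restated to allow the residual linear terms. What survives --- and what Lemma \ref{lem::Mevenhlargeeregion} actually uses downstream --- is that $\ham^{(M-m)}$ is a degree-$(m+1)$ polynomial in $z$ whose coefficients are built from $a$ and $x_{1,M}$; but as a proof of the lemma as stated, your argument, like the paper's, has a genuine gap at every odd $m$.
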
 

\emph{Proof of Lemma \ref{lem::lowerderivseven}:} We prove this Lemma via an induction argument over $m$. Firstly,  for $m=1$, we have $\frac{1}{2L}h^{(M-m)}(z)=\frac{1}{2L}h^{(M-1)}(z) = \int x_1 - z dz = x_1z - z^2/2 +D$. Since $M-1$ is odd and $h \in \mathcal{G}_{C,M,L}^0(a)$ we have that $h^{(M-1)}(a)=0$ and the integration constant, $D$, must be $a^2/2-x_1a$, i.e. we have 
\begin{displaymath}
\frac{1}{2L}h^{(M-1)}(z) = x_1z - z^2/2 + a^2/2 - ax_1 = j_{2}(a)-j_2(z).
\end{displaymath} Second, for some $m'$ with $2 \leq m' < M$ let us assume that 
\begin{displaymath}
\frac{1}{2L}h^{(M-m')}(z) = J_{m'+1}-j_{m'+1}(z)
\quad z \in (y_1,a).
\end{displaymath}
Finally we consider $h^{(M-m'-1)}$.  We have, \begin{align*}
&\enspace \frac{1}{2L}h^{(M-m'-1)}(z) \\
&= \int J_{m'+1}-j_{m'+1}(z) dz \\
&= \int \sum_{i=1}^{m'+1}\frac{\big(x_1 \mathbb{I}\{m'+1 \text{ odd}\} + a \mathbb{I}\{m'+1 \text{ even}\} \big)^i-z^i}{i!}(-1)^{m'+1-i}J_{m'+1-i}dz \\
&= \sum_{i=1}^{m'+1}\frac{z\big(x_1 \mathbb{I}\{m'+1 \text{ odd}\} + a \mathbb{I}\{m'+1 \text{ even}\} \big)^i}{i!}(-1)^{m'+1-i}J_{m'+1-i} \\
&\quad \quad - \sum_{i=1}^{m'+1} \frac{z^{i+1}}{(i+1)!}(-1)^{m'+1-i}J_{m'+1-i} + D \\
&= \sum_{i=1}^{m'+1}\frac{z\big(x_1 \mathbb{I}\{m'+1 \text{ odd}\} + a \mathbb{I}\{m'+1 \text{ even}\} \big)^i}{i!}(-1)^{m'+1-i}J_{m'+1-i} \\
&\quad \quad - \sum_{i=1}^{m'+1} \frac{z^{i+1}}{(i+1)!}(-1)^{m'+1-i}J_{m'+1-i}  + \sum_{i=1}^{m'+1} \frac{\big(x_1 \mathbb{I}\{m' \text{ odd}\} + a \mathbb{I}\{m' \text{ even}\} \big)^{i+1}}{(i+1)!}(-1)^{m'+1-i}J_{m'+1-i} \\
&\quad \quad - \sum_{i=1}^{m'+1}\frac{(x_1\mathbb{I}\{m' \text{ odd}\} + a\mathbb{I}\{m' \text{ even}\})\big(x_1 \mathbb{I}\{m'+1 \text{ odd}\} + a \mathbb{I}\{m'+1 \text{ even}\} \big)^{i}}{i!}(-1)^{m'+1-i}J_{m'+1-i} \\
&= zJ_{m'+2-1} - \sum_{s=2}^{m'+2} \frac{z^s}{s!}(-1)^{m'+2-s}J_{m'+2-s} - (x_1 \mathbb{I}\{m'+2 \text{ odd}\} + a \mathbb{I}\{m'+2 \text{ even}\})J_{m'+2-1}\\
&\quad \quad + \sum_{s=2}^{m'+2}\frac{\big(x_1 \mathbb{I}\{m'+2 \text{ odd}\} + a \mathbb{I}\{m'+2 \text{ even}\} \big)^{s}}{s!}(-1)^{m'+2-s}J_{m'+2-s} \\
&= \sum_{s=1}^{m'+2} \frac{\big(x_1 \mathbb{I}\{m'+2 \text{ odd}\} + a \mathbb{I}\{m'+2 \text{ even}\} \big)^s-z^s}{s!}(-1)^{m'+2-s}J_{m'+2-s} \\
&= J_{m'+2}- j_{m'+2}(z)
\end{align*} The first equality uses the assumed form of $h^{(M-m')}$, the fourth evaluates the integration constant $D$ based on the knowledge that if $m'+1$ is odd, we will have $h^{(M-m'-1)}(a)=0$ and if $m'+1$ is even, we will have $h^{(M-m'-1)}(x_1)=0$, and the fifth uses a change of variable $s=i+1$.
$\square$

 Since $\ham$ is unimodal, and symmetric about $a$, we have $\Bham > x_{2,M}-x_{1,M}=2(a-x_{1,M})=2\Delta_M$. In the following lemma, we determine the order of $\Bham$ by bounding $\Delta_M$ for each even $M\geq 2$.

\begin{lemma} \label{lem::Mevenhlargeeregion}
For the function $\ham$ with $M^{th}$ derivative given by \eqref{eq::Mderiveven} where $M$ is even, there exist finite constants $K_{1,M},K_{2,M}>0$ such that \begin{displaymath}
K_{1,M} (\epsilon/L)^{1/(M+1)} \leq \Bham \leq  K_{2,M}(\epsilon/L)^{1/(M+1)}.
\end{displaymath}
\end{lemma}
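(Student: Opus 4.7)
The plan is to pin down $\Delta_M = a - x_{1,M}$ up to multiplicative constants by combining the explicit formulae in Lemma \ref{lem::lowerderivseven} with a dimensional rescaling, and then observe that $\Bham$ is controlled by $\Delta_M$ via the same rescaling. Concretely, I will introduce the dimensionless function $H(u) := \ham(a+\Delta_M u)/\epsilon$ on $u \in [-2,2]$. Under this change of variables, the piecewise definition \eqref{eq::Mderiveven} becomes $H^{(M)}(u) = -2\tau(1+u)$ for $u \in (-2,0)$ and $H^{(M)}(u) = 2\tau(u-1)$ for $u \in [0,2)$, where $\tau := L\Delta_M^{M+1}/\epsilon$ is dimensionless. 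Crucially, the vanishing conditions \eqref{eq::lowordercond1}--\eqref{eq::lowordercond2} become conditions on $H$ and its derivatives at the fixed points $u \in \{-2,-1,0,1,2\}$, independent of all physical parameters.

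Integrating $H^{(M)}$ down to $H$ using these rescaled boundary conditions (at each step the antiderivative is determined up to a constant fixed by the relevant vanishing condition), I get $H(u) = \tau\,\Phi_M(u) + 1/3$, where $\Phi_M$ is a universal polynomial of degree $M+1$ depending only on $M$, and the $1/3$ encodes the normalization $H(-1) = 1/3$ (i.e.\ $\ham(x_{1,M})=\epsilon/3$). The minimizing condition $\ham(a)=\epsilon$ now reads $H(0)=1$, i.e.\ $\tau = 2/(3\Phi_M(0))$. Hence $\Delta_M^{M+1} = \tau\epsilon/L$ is a fixed constant multiple of $\epsilon/L$, giving $\Delta_M = K_M(\epsilon/L)^{1/(M+1)}$ for some $K_M > 0$ depending only on $M$, provided $\Phi_M(0)>0$.

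For the conclusion on $\Bham$, observe that under the rescaling, $\{z : |\ham(z)| > \epsilon/3\}$ corresponds bijectively to the set $S_M := \{u : |\tau\Phi_M(u)+1/3| > 1/3\}$, so $\Bham = \Delta_M \cdot |S_M|$. Since the extension of $\ham$ outside $(y_{1,M},y_{2,M})$ by $\ham^{(M)}\equiv 0$ yields $|\ham|\leq\epsilon/3$ there (using that odd-order derivatives vanish at $y_{1,M},y_{2,M}$), the set $S_M$ lies in $[-2,2]$ and has measure $\mu_M \in (0,4)$ depending only on $M$; positivity of $\mu_M$ follows since $H(0)=1>1/3$ and $H$ is continuous. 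This gives $\Bham = \mu_M K_M (\epsilon/L)^{1/(M+1)}$, establishing both inequalities with $K_{1,M}=K_{2,M}=\mu_M K_M$.

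The main obstacle will be verifying (i) $\Phi_M(0)>0$, so the dimensionless equation for $\tau$ has a positive solution; and (ii) that the natural extension of $\ham$ outside $(y_{1,M},y_{2,M})$ really keeps $|\ham|\leq\epsilon/3$ so $S_M\subseteq[-2,2]$. For (i), one argues inductively: the alternation of vanishing conditions in \eqref{eq::lowordercond1}--\eqref{eq::lowordercond2} combined with the fixed sign of $H^{(M)}$ on $(-2,0)$ propagates through the integrations to fix the sign of $H(0)-H(-1) = \tau\Phi_M(0)$ as positive. For (ii), evaluating the formulae of Lemma \ref{lem::lowerderivseven} at $z=y_{1,M}$ (and using symmetry at $z=y_{2,M}$) shows $\ham(y_{1,M})=\ham(y_{2,M})=-\epsilon/3$, and unimodality of $\ham$ on each of the intervals $(y_{1,M},x_{1,M})$, $(x_{1,M},x_{2,M})$, $(x_{2,M},y_{2,M})$ (implied by the derivative sign pattern) prevents further excursions. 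Both points reduce to polynomial identities at a finite set of rescaled points and are essentially combinatorial.
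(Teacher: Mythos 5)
Your proposal is correct in its essentials and reaches the two-sided bound by a genuinely different route from the paper. The paper's proof integrates $\ham'$ from $x_{1,M}$ to $a$ using the explicit recurrences $j_k$ of Lemma \ref{lem::lowerderivseven}, expands the result as a homogeneous degree-$(M+1)$ polynomial in $a$ and $x_{1,M}$, and then asserts that this polynomial is of order $(a-x_{1,M})^{M+1}$ --- a step that is stated rather than proved, since a general homogeneous polynomial need not vanish to full order on the diagonal $a=x_{1,M}$. Your rescaling $H(u)=\ham(a+\Delta_M u)/\epsilon$ makes exactly this point transparent: all the defining data (the piecewise-linear $M$th derivative with slope $\pm 2\tau$ and the vanishing conditions at $u\in\{-2,-1,0,1,2\}$) become universal up to the single parameter $\tau=L\Delta_M^{M+1}/\epsilon$, so $H=\tau\Phi_M+1/3$ with $\Phi_M$ depending only on $M$, and the consistency condition $H(0)=1$ pins down $\tau$, hence $\Delta_M$, exactly. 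This delivers the $\Theta$ bound actually claimed in the lemma statement, whereas the paper's own proof ends with a (mis-stated) little-$o$. Your two flagged obstacles are the right ones, and (i) is verifiable: for $M=2$, for instance, one finds $\Phi_2(u)=-(u^2+u^3/3)+2/3$, so $\Phi_2(0)=2/3>0$ and $\tau=1$, and the sign-propagation induction goes through in general. One correction on (ii): the extension of $\ham$ by $\ham^{(M)}\equiv 0$ outside $(y_{1,M},y_{2,M})$ is not admissible, because \eqref{eq::Mderiveven} gives $\ham^{(M)}(y_{1,M}^{+})=2L\Delta_M\neq 0$, so that extension makes $\ham^{(M)}$ discontinuous and hence not Lipschitz; you need an extension whose $M$th derivative returns to zero continuously at rate at most $2L$ while keeping $|\ham|\leq\epsilon/3$. (The paper waves at this with a one-line symmetry remark and is no more rigorous.) This gap affects only the upper bound on $\Bham$, through the measure of $\{|\ham|>\epsilon/3\}$ outside $[y_{1,M},y_{2,M}]$, and not the core scaling $\Delta_M=\Theta((\epsilon/L)^{1/(M+1)})$, which your argument establishes more cleanly than the original.
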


\emph{Proof of Lemma \ref{lem::Mevenhlargeeregion}:} Firstly observe that since $\ham(x_{1,M})=\epsilon/3$ we have by definition that \begin{displaymath}
\ham(a)-\ham(x_{1,M})=\int_{x_{1,M}}^a \ham'(z)dz > \frac{2\epsilon}{3}.
\end{displaymath} Using the definition of $\ham'$ in \eqref{eq::lowerderivs}, we expand the centre term of the above display as follows, \begin{align*}
&\quad \int_{x_{1,M}}^a \ham'(z)dz = \int_{x_{1,M}}^a \ham^{(M-(M-1))}(z)dz \\
&= 2L\int_{x_{1,M}}^a j_M(a)-j_M(z) dz  \\
&= 2L\int_{x_{1,M}}^a j_M(a) - \sum_{i=1}^M \frac{z^i}{i!}(-1)^{M-i}j_{M-i}\big( x_{1,M}\mathbb{I}\{M-i \text{ odd}\} + a\mathbb{I}\{ M-i \text{ even}\}\big) dz \\
&= 2L\Bigg[j_M(a)z - \sum_{i=1}^M \frac{z^{i+1}}{(i+1)!}(-1)^{M-i}j_{M-i}\big( x_{1,M}\mathbb{I}\{M-i \text{ odd}\} + a\mathbb{I}\{ M-i \text{ even}\}\big) \bigg]_{x_{1,M}}^a \\
&= 2L\sum_{i=1}^M \bigg(\frac{a^{i+1}}{i!} - \frac{a^{i+1}}{(i+1)!} - \frac{x_{1,M}a^i}{i!}+ \frac{x_{1,M}^{i+1}}{(i+1)!} \bigg)(-1)^{M-i}j_{M-i}\big( x_{1,M}\mathbb{I}\{M-i \text{ odd}\} + a\mathbb{I}\{ M-i \text{ even}\}\big) \\
&=2L \sum_{i \in \{2,4,\dots,M\}} \bigg(\frac{a^{i+1}}{i!} - \frac{a^{i+1}}{(i+1)!} - \frac{x_{1,M}a^i}{i!}+ \frac{x_{1,M}^{i+1}}{(i+1)!}  \bigg)j_{M-i}\big( a \big) \\
&\quad \quad - 2L\sum_{i \in \{1,3,\dots,M-1\}} \bigg(\frac{a^{i+1}}{i!} - \frac{a^{i+1}}{(i+1)!} - \frac{x_{1,M}a^i}{i!}+ \frac{x_{1,M}^{i+1}}{(i+1)!}  \bigg)j_{M-i}\big( x_{1,M}\big)
\end{align*}

From the definition of the recurrence relation $j$, we have that for $k$ even $j_k(a)$ may be written, for some $\kappa_{l,k}$, $l=1,\dots k$ as $j_k(a)=\sum_{l=1}^k \kappa_{l,k} a^l x_{1,M}^{k-l}$, i.e. for $k$ even $j_k(a)$ is $O(a^k)$ and $O(x_{1,M}^{k-1})$. Similarly for $k$ odd $j_{k}(x_{1,M})$ may be written, for some $\tau_{l,k}$, $l=1,\dots,k$ as $j_k(x_{1,M})=\sum_{l=1}^k \tau_{l,k} x_{1,M}^l a^{k-l}$, i.e. for $k$ odd $j_k(x_{1,M})$ is $O(x_{1,M}^{k})$ and $O(a^{k-1})$.

It follows from this and the above display, that we may write \begin{align*}
\int_{x_{1,M}}^a \ham'(z)dz &=2L \sum_{i \in \{2,4,\dots,M\}} \bigg(\frac{a^{i+1}}{i!} - \frac{a^{i+1}}{(i+1)!} - \frac{x_{1,M}a^i}{i!}+ \frac{x_{1,M}^{i+1}}{(i+1)!}  \bigg) \sum_{l=1}^{M-i} \kappa_{l,M-i}a^{l}x_{1,M}^{M-i-l}  \\
&\quad \quad - 2L\sum_{i \in \{1,3,\dots,M-1\}} \bigg(\frac{a^{i+1}}{i!} - \frac{a^{i+1}}{(i+1)!} - \frac{x_{1,M}a^i}{i!}+ \frac{x_{1,M}^{i+1}}{(i+1)!}  \bigg)   \sum_{l=1}^{M-i} \tau_{l,M-i}x_{1,M}^l a^{M-i-l},
\end{align*} and that there exist constants $H_{M,L,i}$, $i =0, \dots, M+1$ such that \begin{displaymath}
\ham(a)-\ham(x_1)= \sum_{i=0}^{M+1} H_{M,L,i} a^{M+1-i} x_{1,M}^i = O((a-x_{1,M})^{M+1}).
\end{displaymath}

Since $\ham(a)-\ham(x_{1,M})=2\epsilon/(3L)$ we have that $x_{1,M}=a-o((\epsilon/L)^{1/(M+1)})$. By a symmetry argument about $a$ we will also have that $x_{2,M}=a+o((\epsilon/L)^{1/M+1})$. Furthermore, by symmetry of $g'$ about $x_{1,M}$ and $x_{2,M}$ we have that $\ham$ need not fall below $-\epsilon/3$, as $y_{1,M}$ and $y_{2,M}$ may be global minimisers of $\ham$ Thus for $\ham$ as described above, and $M\geq 2$ even, we have \begin{displaymath}
\Bham=2\Delta_M=o((\epsilon/L)^{1/(M+1)})
\end{displaymath} for all $a$ sufficiently far from the edges of $[0,1]$. $\square$

Lemmas \ref{lem::lowerderivseven} and \ref{lem::Mevenhlargeeregion} pertain only to the case where $M$ is even. We must now consider the complementary case of $M$ odd. The function $\ham$ is different, but the argument used to bound $\Bham$ is very similar.

For $M \geq 3$ odd let $\ham$ be a function in $\mathcal{G}^0_{C,M,L}(a)$ with $M^{th}$ derivative specified as 
\begin{equation}
\frac{1}{2L}\ham^{(M)}(z)=\begin{cases} &z-y_{1,M}, \quad z \in (y_{1,M},x_{1,M}), \\
&a-z, \quad z \in [x_{1,M},x_{2,M}), \\
&z-y_{2,M}, \quad z \in [x_{2,M},y_{2,M}),
\end{cases} \label{eq::hModd}
\end{equation} and whose lower order derivatives satisfy conditions \eqref{eq::lowordercond1} and \eqref{eq::lowordercond2}. This is chosen similarly to in the case of $M$ even as the fastest varying function which meets the constraints on the derivatives on $\{y_{1,M},x_{1,M},a,x_{2,M},y_{2,M}\}$. Again, we derive expressions for the lower order derivatives of $\ham$ and focus on the left of $a$, since similar expressions follow for the right hand side by symmetry.

\begin{lemma} \label{lem::derivsModd}
For the function $\ham$ with $M^{th}$ derivative given by \eqref{eq::hModd}, and whose lower order derivatives satisfy conditions \eqref{eq::lowordercond1} and \eqref{eq::lowordercond2} where $M$ is odd, the lower order derivatives are of the form \begin{equation}
\frac{1}{2L}\ham^{(M-m)}(z) = \begin{cases}
&j_{m+1}(z)-J_{m+1}, \quad z \in (y_{1,M},x_{1,M}), \\
&L_{m+1} - l_{m+1}(z), \quad z \in [x_{1,M},a),
\end{cases}
\end{equation} where \begin{align*}
j_k(z) &= \sum_{i=1}^k \frac{z^i}{i!}(-1)^{k-i}J_{k-i}, \quad z \in (y_{1,M},x_{1,M}), \\
J_k &= j_k(y_{1,M}\mathbb{I}\{k \text{ odd}\} + x_{1,M} \mathbb{I}\{ k \text{ even}\}), \\
l_k(z) &= \sum_{i=1}^k \frac{z^i}{i!}(-1)^{k-i}L_{k-i}, \quad z \in [x_{1,M},a) \\
L_k &= l_k(a\mathbb{I}\{k \text{ odd}\} + x_1 \mathbb{I}\{ k \text{ even}\}),
\end{align*} for $k \in \{1,\dots M+1\}$ and where $j_0(z)=l_0(z)=1$ for all $z \in (y_{1,M},a)$.
\end{lemma}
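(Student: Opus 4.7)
The plan is to argue by induction on $m$, closely paralleling the proof of Lemma \ref{lem::lowerderivseven}, but now running parallel inductions on the two subintervals $(y_{1,M}, x_{1,M})$ and $[x_{1,M}, a)$ dictated by the piecewise specification \eqref{eq::hModd} of $\ham^{(M)}$. Since $M$ is odd, the parities that drive the switching between the boundary conditions in \eqref{eq::lowordercond1}--\eqref{eq::lowordercond2} run opposite to those in Lemma \ref{lem::lowerderivseven}, so the recurrences $J_k$ and $L_k$ alternate their evaluation point between $y_{1,M}$ and $x_{1,M}$ (respectively $a$ and $x_{1,M}$) with the parity shifted accordingly.

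I will take the base case to be $m = 0$: using $J_0 = L_0 = 1$, a direct unpacking of the definitions yields $j_1(z) - J_1 = z - y_{1,M}$ and $L_1 - l_1(z) = a - z$, which coincides with \eqref{eq::hModd} on the two respective subintervals. For the inductive step, assume the formula holds for $m = m'$. Integrating the claimed polynomial expression term by term on each subinterval, and re-indexing the resulting sums via $s = i+1$ exactly as in the proof of Lemma \ref{lem::lowerderivseven}, yields $j_{m'+2}(z) + C$ on $(y_{1,M}, x_{1,M})$ and $-l_{m'+2}(z) + C'$ on $[x_{1,M}, a)$ for integration constants $C, C'$. These constants are pinned by the appropriate boundary condition from \eqref{eq::lowordercond1}--\eqref{eq::lowordercond2}: when $m' + 2$ is even (equivalently $M - m' - 1$ is even, since $M$ is odd), the condition $\ham^{(M - m' - 1)}(x_{1,M}) = 0$ forces $C = -j_{m'+2}(x_{1,M}) = -J_{m'+2}$ and $C' = l_{m'+2}(x_{1,M}) = L_{m'+2}$; when $m'+2$ is odd, the conditions $\ham^{(M - m' - 1)}(y_{1,M}) = 0$ and $\ham^{(M - m' - 1)}(a) = 0$ yield the analogous identifications $C = -J_{m'+2}$ and $C' = L_{m'+2}$. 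In either case the inductive formula is recovered, closing the induction.

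The main obstacle is careful parity bookkeeping: at every step one must identify which of the three candidate boundary conditions applies to $\ham^{(M - m' - 1)}$, and then confirm that the resulting constant of integration matches the indicator-defined values in the recursive definitions of $J_{m'+2}$ and $L_{m'+2}$. The underlying integrations and algebraic manipulations are essentially a transcription of the even-$M$ calculation from Lemma \ref{lem::lowerderivseven}, so no new analytical content is needed beyond adapting that technique to the odd-$M$, two-subinterval setting with its flipped parities.
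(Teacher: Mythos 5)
Your proposal is correct and follows essentially the same route as the paper: an induction over $m$ run in parallel on the two subintervals, with term-by-term integration, the re-indexing $s=i+1$, and the integration constants pinned down by the parity-dependent boundary conditions \eqref{eq::lowordercond1}--\eqref{eq::lowordercond2}. The only cosmetic difference is that you anchor the induction at $m=0$ (a consistency check against \eqref{eq::hModd}) whereas the paper starts at $m=1$ after one explicit integration; your parity bookkeeping for which of $J_{m'+2}$, $L_{m'+2}$ is evaluated at $y_{1,M}$, $x_{1,M}$, or $a$ matches the stated definitions.
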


\emph{Proof of Lemma \ref{lem::derivsModd}:} As in the case of $M$ even, we prove this lemma via an induction argument over $m$. Firstly, for $m=1$ we have for $z \in (y_1,x_1)$, $\frac{1}{2L}h^{(M-1)}(z) = \int z-y dz = z^2/2 - yz +D$. Since $M-1$ is even and $h \in \mathcal{G}_{C,M,L}^0(a)$ we have that $h^{(M-1)}(x_1)=0$ and the integration constant, $D$, must be $yx_1-x_1^2/2=-J_2$. For $z \in [x_1,a)$, $\frac{1}{2L}h^{(M-1)}(z) = \int a-z dz = az - z^2/2 + D$, and $D=x_1^2/2 - ax_1=L_2$. Thus, \begin{displaymath}
\frac{1}{2L}h^{(M-1)}(z) = \begin{cases} &j_2(z)-J_2, \quad z \in (y_1,x_1) \\
&L_2 - l_2(z), \quad z \in [x_1,a).
\end{cases}
\end{displaymath} 
Secondly, for some $m'$, $2 \leq m'< M$ we assume that \begin{displaymath}
\frac{1}{2L}h^{(M-m')}(z) = \begin{cases} &j_{m'+1}(z)-J_{m'+1}, \quad z \in (y_1,x_1) \\
&L_{m'+1} - l_{m'+1}(z), \quad z \in [x_1,a).
\end{cases}
\end{displaymath} We now consider $h^{(M-m'-1)}$. For $z \in (y_1,x_1)$ we have, \begin{align*}
&\enspace \frac{1}{2L}h^{(M-m'-1)}(z) \\
&= \int j_{m'+1}(z)-J_{m'+1} dz \\
&= \int \sum_{i=1}^{m'+1}\frac{z^i-\big(y_1 \mathbb{I}\{m'+1 \text{ odd}\} + x_1 \mathbb{I}\{m'+1 \text{ even}\} \big)^i}{i!}(-1)^{m'+1-i}J_{m'+1-i}dz \\
&= \sum_{i=1}^{m'+1} \bigg(\frac{z^{i+1}}{(i+1)!} - \frac{z\big(y_1 \mathbb{I}\{m'+1 \text{ odd}\} + x_1 \mathbb{I}\{m'+1 \text{ even}\} \big)^i}{i!}\bigg)(-1)^{m'+1-i}J_{m'+1-i}  + D \\
&= \sum_{s=2}^{m'+2} \frac{z^s}{s!}(-1)^{m'+2-s}J_{m'+2-s} - zJ_{m'+2-1} + (y_1 \mathbb{I}\{m'+2 \text{ odd}\} + x_1 \mathbb{I}\{m'+2 \text{ even}\})J_{m'+2-1}\\
&\quad \quad - \sum_{s=2}^{m'+2}\frac{\big(y_1 \mathbb{I}\{m'+2 \text{ odd}\} + x_1 \mathbb{I}\{m'+2 \text{ even}\} \big)^{s}}{s!}(-1)^{m'+2-s}J_{m'+2-s} \\
&= j_{m'+2}(z)-J_{m'+2}
\end{align*} This follows the same steps as the proof for $M$ even, but with the opposite sign and slightly different definition of $j$. The proof for $z \in [x_1,a)$ follows the same steps as the above and the proof for $M$ even. The required result follows by induction. $\square$

\begin{lemma} \label{lem::Moddhlargeeregion}
For the function $\ham$ with $M^{th}$ derivative given by \eqref{eq::hModd} where $M$ is odd, there exist finite constants $K_{3,M}, K_{4,M}>0 $ such that  \begin{displaymath}
K_{3,M}(\epsilon/L)^{1/(M+1)} \leq \Bham \leq K_{4,M}(\epsilon/L)^{1/(M+1)}
\end{displaymath}
\end{lemma}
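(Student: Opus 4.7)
\emph{Proof proposal.} My plan is to mirror the argument used in Lemma \ref{lem::Mevenhlargeeregion} for $M$ even, adapting the computation to the new piecewise form of $\ham$ supplied by Lemma \ref{lem::derivsModd}. The first step is to relate $\Bham$ to the displacement $\Delta_M = a - x_{1,M}$. By construction $\ham(x_{1,M}) = \epsilon/3$ and $\ham(a) > \epsilon$, so by the fundamental theorem of calculus,
\[
  \ham(a) - \ham(x_{1,M}) \;=\; \int_{x_{1,M}}^{a} \ham'(z)\, dz \;>\; \tfrac{2\epsilon}{3}.
\]
On $[x_{1,M}, a)$, Lemma \ref{lem::derivsModd} gives the explicit expression $\ham'(z) = 2L\bigl(L_M - l_M(z)\bigr)$ with $l_k(z) = \sum_{i=1}^k \frac{z^i}{i!}(-1)^{k-i}L_{k-i}$ and $L_k = l_k(a\mathbb{I}\{k \text{ odd}\} + x_{1,M}\mathbb{I}\{k \text{ even}\})$.

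Substituting this expression and integrating term by term yields, after exactly the bookkeeping carried out in the even case (with the parities of the indices reversed, since $M$ is now odd), an expansion of the form
\[
  \ham(a) - \ham(x_{1,M}) \;=\; L\sum_{i=0}^{M+1} H'_{M,i}\, a^{M+1-i} x_{1,M}^{i}
\]
for some coefficients $H'_{M,i}$ that do not depend on $\epsilon$ or $L$. The key structural observation — which carries over verbatim from the even case — is that, by the recursive definitions of $l_k$ and $L_k$, the right-hand side is a homogeneous polynomial of total degree $M+1$ in $(a, x_{1,M})$ whose value is $\Theta\bigl(L\Delta_M^{M+1}\bigr)$ as $\Delta_M \downarrow 0$. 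Inserting this into the display above and solving for $\Delta_M$ gives both $\Delta_M \geq K'_{1,M}(\epsilon/L)^{1/(M+1)}$ (from the fact that $\ham(a) - \ham(x_{1,M}) > 2\epsilon/3$ with an $O(L\Delta_M^{M+1})$ upper bound on the left side) and the matching upper bound $\Delta_M \leq K'_{2,M}(\epsilon/L)^{1/(M+1)}$ (from the lower bound on the polynomial, using that $\ham(a) \leq C$ so that $\ham(a) - \ham(x_{1,M})$ is bounded above in a way that forces $\Delta_M$ to be at least the claimed order for the equation $\ham(x_{1,M}) = \epsilon/3$ to be consistent). By symmetry of $\ham$ about $a$ (the definition \eqref{eq::hModd} and the boundary conditions \eqref{eq::lowordercond1}--\eqref{eq::lowordercond2} are symmetric under $z \mapsto 2a - z$), we have $x_{2,M} - a = \Delta_M$, so $\Bham \geq x_{2,M} - x_{1,M} = 2\Delta_M$, while $\ham$ never falls below $-\epsilon/3$ on $(y_{1,M}, y_{2,M})$ so $\Bham \leq 2\Delta_M$. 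Setting $K_{3,M} = 2K'_{1,M}$ and $K_{4,M} = 2K'_{2,M}$ yields the stated bounds.

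The main obstacle I anticipate is precisely the same one faced in the even case: verifying that the combinatorial expansion of $\int_{x_{1,M}}^a \ham'(z)\, dz$ really does produce a polynomial of degree exactly $M+1$ in $\Delta_M$ with a nonvanishing leading coefficient, rather than degenerating due to cancellation. Because the odd case involves two distinct branches of $\ham^{(M)}$ on $(y_{1,M}, x_{1,M})$ and on $[x_{1,M}, x_{2,M})$, one must verify that the corresponding branches of the lower derivatives match smoothly at $z = x_{1,M}$ (which is forced by the Lipschitz smoothness requirement) and that the polynomials $l_k$ admit a decomposition of the type $l_k(x_{1,M}) = \sum_{\ell} \lambda_{\ell,k} x_{1,M}^{\ell} a^{k-\ell}$ analogous to the one used in the even case. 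I would establish this by induction on $k$, using Lemma \ref{lem::derivsModd} to propagate the expansion, and then check the leading coefficient by hand at small $M$ to confirm its sign and hence the lower bound on $\Bham$.
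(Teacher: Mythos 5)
Your proposal follows essentially the same route as the paper: the paper likewise reduces the odd case to the identity $\ham(a)-\ham(x_{1,M})=2L\int_{x_{1,M}}^a \bigl(L_M - l_M(z)\bigr)dz$, evaluates the integral to obtain the same degree-$(M+1)$ homogeneous polynomial in $(a,x_{1,M})$ as in the even case, and then invokes the even-case conclusion together with the symmetry of $\ham$ about $a$. Your additional remarks about checking the nonvanishing leading coefficient and the matching of branches at $x_{1,M}$ are reasonable diligence but do not change the argument, which is correct as proposed.
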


\emph{Proof of Lemma \ref{lem::Moddhlargeeregion}:} By the definition of $x_{1,M}$ we have $\ham(a)-\ham(x_{1,M}) = \int_{x_{1,M}}^a \ham'(z)dz > 2\epsilon/3$. We rewrite the LHS of this relation as follows, \begin{align*}
\int_{x_{1,M}}^a \ham'(z)dz &= 2L\int_{x_{1,M}}^a L_{M} - l_M(z)dz \\
&= 2L \bigg[L_M z - \sum_{i=1}^M \frac{z^{i+1}}{(i+1)!}(-1)^{M-i}L_{m-i} \bigg]_{z=x_{1,M}}^a \\
&= 2L\sum_{i=1}^M \bigg( \frac{a^{i+1}}{i!}- \frac{a^{i+1}}{(i+1)!} -\frac{x_{1,M}a^i}{i!} + \frac{x_{1,M}^{i+1}}{(i+1)!}\bigg) (-1)^{M-i} L_{M-i}.
\end{align*} This is the same expression derived for $h_{a,M}(a) -h_{a,M}(x_{1,M})$ as in the $M$ even case, and thus the same conclusion follows. $\square$

The combined insight from Lemmas \ref{lem::Mevenhlargeeregion} and \ref{lem::Moddhlargeeregion} is that for any $M \geq 2$ and $a \in [2\Delta_M,1-2\Delta_M]$ there exists a function $\ham \in \mathcal{G}_{C,M,L}$ with $\Bham = o((\epsilon/L)^{1/(M+1)})$. We will demonstrate that this $o((\epsilon/L)^{1/(M+1)})$ result is optimal, in the sense that $B^*_{C,M,L}(a)=o((\epsilon/L)^{1/(M+1)})$ also.

Firstly, notice that $g'(a)=0$ necessarily for all $g \in \mathcal{G}^*_{C,M,L}(a)$. If for some $g \in \mathcal{G}_{C,M,L}$ with $g(a)> \epsilon'$, $g'(a) \neq 0$ then either there exists $c \in [0,1]$ such that $g(c)> g(a)$ and $g'(c)=0$ or else $g(b)>g(a)$ for all $b$ in either $[0,a)$ or $(a,1]$. If the first event happens, by the same theory that says $\Delta_M$ is increasing in $g(a)$, there will be a region of width greater than $2\Delta_M$ centred $c$ where $g(b)>\epsilon/3$. If the second event happens, $B(g)$ is plainly greater than $2\Delta_M$ since $a > 2\Delta_M$ and $1-a > 2\Delta_M$. We therefore deduce that $g'(a)=0$ for all $g \in \mathcal{G}^*_{C,M,L}(a)$ since $\Bham < B(g)$ for any $g$ with $g(a)>\epsilon'$ and $g'(a) \neq 0$.

Next we observe that $\Bham$ is the optimal value of $B(g)$ among functions $g \in \mathcal{G}_{C,M,L}$ with $g(a)>\epsilon'$ and derivatives constrained as in \eqref{eq::lowordercond1} and \eqref{eq::lowordercond2}. For any such $g \in \mathcal{G}_{C,M,L}$ it is true that $B(g) = x_{2,g}-x_{1,g}$ where $x_{1,g} = \max_{x < a: g(x)=\epsilon/3} x$ and similarly $x_{2,g} = \min_{x > a: g(x)=\epsilon/3} x$. For $h_{a,M}$, we know that $x_{1,h_{a,M}}= a - \Delta_M$ and $x_{2,h_{a,M}}=a + \Delta_M$, thus that $x_{2,h_{a,M}}-x_{1,h_{a,M}} = 2\Delta_M$. The value of $\Delta_M$ is determined by $\ham'$, which we have previously pointed out changes at the fastest rate possible for a function with derivatives constrained according to \eqref{eq::lowordercond1} and \eqref{eq::lowordercond2}. Thus for any other function $g$ with derivatives constrained according to \eqref{eq::lowordercond1} and \eqref{eq::lowordercond2}, $x_{2,g}-x_{1,g} \geq 2\Delta_M$ and $B(g) \geq \Bham$. 

On the other hand, functions whose derivatives are not constrained according to \eqref{eq::lowordercond1} and \eqref{eq::lowordercond2} may have $x_{2,g}-x_{1,g}<2\Delta_M$. However, such functions will take value less than $-\epsilon/3$ at some points in $[0,1]$. That is to say $B(g) \neq x_{2,g}-x_{1,g}$ for such functions, since $y_{1,g}$ and $y_{2,g}$ cannot not be global minimisers. We will show that $B(g)> \Bham$ for functions $g \in \mathcal{G}_{C,M,L}$ with $g(a)> \epsilon$ and $x_{2,g}-x_{1,g}>2\Delta_M$.

As before, we will consider the left hand side of $a$ and allow the behaviour on the right hand to be explained by a symmetry argument. If, for a function $g \in \mathcal{G}_{C,M,L}$ with $g(a)>\epsilon'$ and $g'(a)=0$ (otherwise it would not be optimal anyway) we have $x_{1,g} > x_{1,M}$ - i.e. the point on the left where $g$ takes value $\epsilon/3$ is nearer to $a$ than under $\ham$ - then we have that $\int_{x_{1,g}}^a g'(z)dz > \int_{x_{1,g}}^a h'_M(z)dz$. Since $g'(a)=\ham'(a)=0$, this implies that $g''(z)<\ham''(z)$ over $[x_{1,g},a]$ and that  $g'(y_{1,g})=0$ is not possible. There instead exists a point $y_{1,min}<y_{1,g}$ with $g(y_{1,min})< -\epsilon/3$ and $g'(y_{1,min})=0$. The contribution to $B(g)$ from the left side of $a$ is then at least $a-x_{1,g} + 2(y_{1,g}-y_{1,min})$. $y_{1,g}-y_{1,min} =x_{1,g}-x_{1,M}$ by the smoothness properties of functions in $\mathcal{G}_{C,M,L}$ and thus the contribution to $B(g)$ from the left of $a$ will be greater than that of $\Bham$. A similar result follows on the right of $a$, and we thus have that $B(g)> \Bham$ for functions with $x_{2,g}-x_{1,g}< 2\Delta_M$. If $x_{2,g}-x_{1,g} > 2\Delta_M$ then the function $g$ is obviously not optimal.

By showing that $\ham$ is optimal amongst functions with similarly constrained derivatives, and that $\Bham \leq B(g)$ for functions $g$ without these constraints, we have therefore demonstrated that $B^*_{C,M,L}(a) = o((\epsilon/L)^{1/(M+1)})$ for $a \in [2\Delta_M,1-2\Delta_M]$.

We complete the proof of Theorem \ref{prop::Lipeluder} by noticing that if $k = 9/B_{C,M,L}^*+2$ then for any sequence $a_{1:k} \in [0,1]$ there must exist an index $j \in \{1,\dots,k\}$ such that $a_j \in [2\Delta_M,1-2\Delta_M]$ and there exist distinct at least 9 distinct points $a_{l_i}$, $l_i \in \{1,\dots,j-1\}$, $i=1,\dots,9$ with $|a_j-a_{l_i}| \leq B^*_{C,M,L}/2$. Then if $g(a_j)> \epsilon'$ and $g \in \mathcal{G}_{C,M,L}$ it follows that $(g(a_{l_i}))^2 > (\epsilon')^2/9$ for $i \in \{1,\dots 9\}$ and $\sum_{i=1}^{j-1} (g(a_i))^2 > (\epsilon')^2$.

Therefore if $k \geq 9/B^*_{C,M,L}+2$ there exists no sequence $a_{1:k} \in [0,1]^k$ such that $w_\tau(a_{1:\tau},\epsilon')>\epsilon'$ for every $\tau \leq k$, and thus $dim_{E}(\mathcal{F}_{C,M,L},\epsilon) \leq k =o((\epsilon/L)^{1/(M+1)})$. $\square$

\newpage
\section{Proof of the Regret Lower Bound}
In this section we provide a proof of the lower bound on regret for CABs whose reward functions have $M>0$ Lipschitz derivatives, restated below.

\textbf{Theorem 5} \emph{Let \texttt{ALG} be any algorithm for the CAB problem with reward function in $\mathcal{F}_{C,M,L}$. There exists a problem instance $\mathcal{I}=\mathcal{I}(x^*,\delta)$ for some $x^* \in [0,1]$ and $\delta>0$ such that} \begin{displaymath}
\mathbb{E}(R(T)|\mathcal{I}) \geq \Omega(T^{(M+2)/(2M+3)}).
\end{displaymath} 

We first state a lower bound on regret for stochastic $K$-armed bandits, on which the proof of Theorem \ref{thm::ext_lower_bound} relies. This result, presented below, is a generalisation of the well-known $\Omega(\sqrt{KT})$ problem independent regret lower bound in Theorem 5.1 of \cite{AuerEtAl2002non}, and its proof can be extracted from the proof of the original result. The version we state is from \cite{Slivkins2019}, but a very similar generalisation of Auer et al's theorem was originally presented in \cite{BubeckEtAl2011}.

\textbf{Theorem 6} (Theorem 4.3 of \cite{Slivkins2019}) \emph{Consider stochastic bandits with $K$ arms and horizon $T$. Let \texttt{ALG} be any algorithm for this problem. Pick any positive $\delta \leq \sqrt{c_0K/T}$, where $c_0$ is a small universal constant. Then there exists a problem instance $\mathcal{J}=\mathcal{J}(a^*,\delta)$, $a^* \in [K]$, such that} \begin{displaymath}
\mathbb{E}(R(T)|\mathcal{J}) \geq \Omega(\delta T).
\end{displaymath}

By relating the regret of algorithms for the CAB problems of interest to that of algorithms for particular MAB problems, we will be able to utilise Theorem 6 to prove Theorem \ref{thm::ext_lower_bound}. 

\emph{Proof of Theorem \ref{thm::ext_lower_bound}:} 
We define the CAB problem instance $\mathcal{I}(x^*,\delta,M)$ as that with reward function $\nu_{x^*,\delta,M} \in \mathcal{F}_{1,M,L}$, whose form we shall specify below. The function $\nu_{x^*,\delta,0}$ is identical to the function $\mu$ used in the original lower bound proof for Lipschitz bandits, and stated as equation \eqref{eq::worstcaserewardfunc} in the main text. For clarity we define, \begin{displaymath}
\nu_{x^*,\delta,0}(x)=  \begin{cases}\begin{alignedat}{2} 
&0.5 + \delta -L|x^*-x| \quad \quad &x: |x^*-x| \leq \delta/L, \\
&0.5 \quad &\text{ otherwise.} \end{alignedat}\end{cases}
\end{displaymath}

For general $M \geq 1$, $\nu_{x^*,\delta,M}:[0,1] \rightarrow [0,5,0.5+\delta]$ are symmetric (around $x^*$), unimodal, \emph{bump functions} with  $\nu_{x^*,\delta,M}(x^*)=0.5+\delta$, and whose $(M+1)^{th}$ derivatives, $\nu_{x^*,\delta,M}^{(M+1)}$, are piecewise-constant functions from $[0,1]$ to $\{-L,0,L\}$. In particular, they are the functions which minimise the width of such a \emph{bump}, the region where the function takes value greater than 0.5. Let $\mathcal{F}_{C,M,L}^{[a,b]}$ be the restriction of $\mathcal{F}_{C,M,L}$ to its elements which are defined $[0,1] \rightarrow [a,b]$ for $0\leq a \leq b \leq C$. 
The functions of interest may then be defined as follows:\begin{equation}
    \nu_{x^*,\delta,M} \in \argmin_{\nu \in \mathcal{F}_{C,M,L}^{[0.5,0.5+\delta]}: \nu(x^*)=0.5+\delta} \int_0^1 |0.5-\nu(x)|dx. \label{eq::nudef}
\end{equation} 

We do not require the exact form of the functions $\nu_{x^*,\delta,M}$ for the analysis that follows, and as they are complex to write in closed form we will not do so. Their key property, however, is given in the following lemma. 
\begin{lemma}
For any $M \in \mathbb{N}$, function $\nu_{x^*,\delta,M}$ as defined in \eqref{eq::nudef} there exists a finite constant $c_{1,M}>0$ such that \begin{displaymath}
\nu_{x^*,\delta,M}(x) \begin{cases}\begin{alignedat}{2} 
&=0.5  \quad \quad &x: |x^*-x| > c_{1,M}(\delta/L)^{1/(M+1)}, \\
&>0.5 \quad &\text{ otherwise.} \end{alignedat}\end{cases}
\end{displaymath}
\end{lemma}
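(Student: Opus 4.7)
The plan is to transfer the argument that bounded $B(h_{a,M})$ in Lemmas 6 and 8 of Appendix B to the present setting. Writing $g(x) := \nu_{x^*,\delta,M}(x) - 0.5 \geq 0$, the preceding paragraph asserts that the support $\{g > 0\}$ is a symmetric interval $[x^* - w, x^* + w]$ about $x^*$; establishing $w = \Theta((\delta/L)^{1/(M+1)})$ then proves the lemma, since unimodality gives $\nu > 0.5$ throughout the open support and $\nu \equiv 0.5$ outside. The constant $c_{1,M}$ is the ratio emerging from these matching bounds.

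First I would establish the lower bound $w = \Omega((\delta/L)^{1/(M+1)})$. Since $g \equiv 0$ on $[x^* + w, 1]$ and $g \in C^M([0,1])$ (inherited from $\nu \in \mathcal{F}_{C,M,L}$), continuity of each derivative forces $g^{(m)}(x^* + w) = 0$ for every $m = 0, \ldots, M$. Using that $g^{(M)}$ is $L$-Lipschitz, so that $|g^{(M+1)}| \leq L$ almost everywhere, I would iterate the fundamental theorem of calculus $M+1$ times starting from $x^* + w$ and working leftward to obtain, for $x \in [x^*, x^*+w]$,
\begin{equation*}
|g^{(m)}(x)| \leq \frac{L}{(M+1-m)!}(x^* + w - x)^{M+1-m}, \qquad m = 0, \ldots, M.
\end{equation*}
Taking $m = 0$ and $x = x^*$ yields $\delta = g(x^*) \leq L w^{M+1}/(M+1)!$, equivalently $w \geq ((M+1)!\,\delta/L)^{1/(M+1)}$.

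Next comes the matching upper bound $w = O((\delta/L)^{1/(M+1)})$. Because $\nu_{x^*,\delta,M}$ minimizes $\int (\nu - 0.5)$ over admissible $\nu$, it suffices to exhibit one admissible candidate whose support has width $O((\delta/L)^{1/(M+1)})$; the minimizer's support can then be no larger. I would fix once and for all a smooth template $\phi \in C^M([-1,1])$ with $\phi(0) = 1$, $\phi \geq 0$, and $\phi \equiv 0$ near $\pm 1$, set $K_\phi := \|\phi^{(M+1)}\|_\infty$, and take $w_0 = (K_\phi \delta/L)^{1/(M+1)}$. The rescaled bump $\nu(x) := 0.5 + \delta\,\phi((x - x^*)/w_0)$ then has $M$-th derivative with Lipschitz constant $\delta K_\phi/w_0^{M+1} = L$, so lies in $\mathcal{F}_{C,M,L}^{[0.5,0.5+\delta]}$, attains $0.5+\delta$ at $x^*$, and is supported in $[x^*-w_0, x^*+w_0]$. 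Combining with the lower bound finishes the proof.

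The main obstacle is the vanishing-derivative step underlying the lower bound, which implicitly assumes the minimizer has a \emph{single} connected support. I would handle this by a direct variational argument: any extra disconnected bump could be set to $0.5$ while preserving $C^M$-regularity (all derivatives at the deleted boundary points already vanish, by the same continuity reasoning) and strictly decreases the objective, contradicting minimality. The core calculation that follows is then essentially the same iterated-integration bound that drives the derivation of $\Delta_M = o((\epsilon/L)^{1/(M+1)})$ in Lemmas 6 and 8 of Appendix B, and I would appeal to that argument rather than redoing it in full.
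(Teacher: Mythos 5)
Your lower bound on the half-width of the bump (iterated integration from the edge of the support, where $g^{(0)},\dots,g^{(M)}$ all vanish, giving $\delta \leq L w^{M+1}/(M+1)!$) is correct and in fact cleaner than the corresponding step in the paper. The gap is in your upper bound. The minimiser $\nu_{x^*,\delta,M}$ is defined as an argmin of the $L^1$ mass $\int_0^1|0.5-\nu(x)|\,dx$, not of the support width. Exhibiting an admissible rescaled template with support of width $O((\delta/L)^{1/(M+1)})$ therefore only tells you that the minimiser's \emph{mass} is at most $O(\delta(\delta/L)^{1/(M+1)})$; it does not follow that ``the minimizer's support can then be no larger.'' A connected bump with a long, thin tail can have arbitrarily small excess mass while having arbitrarily wide support, and your variational deletion argument does not dispose of it: it only removes \emph{disconnected} components, where all boundary derivatives already vanish so that setting the component to $0.5$ preserves $C^M$ regularity. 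A tail attached to the main bump cannot be trimmed at an interior point without violating the smoothness constraints there, so minimality of the mass does not immediately force minimality of the width. Since the downstream use of this lemma in the proof of Theorem 5 (the reduction to a $K$-armed instance with $\mu_a=0.5$ for $a\neq a^*$) relies precisely on $\nu$ being \emph{exactly} $0.5$ outside a window of width $O((\delta/L)^{1/(M+1)})$, this is the load-bearing half of the statement.

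The paper closes this by arguing directly about the form of the minimiser: its $(M+1)^{th}$ derivative must be piecewise constant with values in $\{-L,0,L\}$ (a bang--bang extremality claim), whence by the Cauchy repeated-integration formula $\nu'$ is a degree-$M$ piecewise polynomial and $\nu$ a degree-$(M+1)$ one, pinning the half-width at $\Theta((\delta/L)^{1/(M+1)})$ exactly. To repair your argument you would need an analogous extremality step — for instance, showing that any admissible function whose connected support strictly exceeds the bang--bang width has strictly larger mass than the bang--bang function, by comparing the two on the core interval where both must exceed $\delta/2$ and on the excess region where the wider function contributes strictly positive additional mass — rather than appealing to the candidate construction alone.
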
  

\emph{Proof.} Two properties are apparent from the definition of $\nu_{x^*,\delta,M}$. Firstly that the $(M+1)^{th}$ derivative of $\nu_{x^*,\delta,M}$ is piecewise-constant on $\{-L,0,L\}$, since otherwise the rate of change of lower order derivatives could be more rapid, and the width of the bump could be smaller. Secondly, by the fundamental theorem of calculus, we have that the first derivative satisfies $\int_0^{x^*} \nu_{x^*,\delta,M}'(x)dx=\delta$. However, since the function $\nu_{x^*,\delta,M}$ is constant on a large proportion of the unit interval, we also have $\int_{y}^{x^*} \nu_{x^*,\delta,M}'(x)dx = \int_0^{x^*} \nu_{x^*,\delta,M}'(x)dx=\delta$ for all $y \in [0,x_{max}]$ for some $x_{max} < x^*$. The width of the bump is $2(x^*-x_{max})$.

The Cauchy formula for repeated integration tells us that we may write the first derivative in terms of an antiderivative of a higher order derivative, specifically, to relate the first and $(M+1)^{th}$ derivatives, we have \begin{displaymath}
\nu_{x^*,\delta,M}'(x) = \frac{1}{(M-1)!}\int_0^x (x-t)^{M-1}\nu_{x^*,\delta,M}^{(M+1)}(t)dt.
\end{displaymath} As the $(M+1)^{th}$ derivative is piecewise constant, it follows that $\nu_{x^*,\delta,M}'$ is an $O(x^M)$ piecewise polynomial, identifiable given $\nu_{x^*,\delta,M}^{(M+1)}$ by the property that $\nu_{x^*,\delta,M}'(x^*)=0$ (which follows from the unimodality of $\nu_{x^*,\delta,M}$). Similarly, $\nu_{x^*,\delta,M}$ must be a $O(x^{M+1})$ piecewise polynomial and $x_{max}$ may be written as being $x^*-O((\delta/L)^{1/(M+1)})$, completing the proof. $\square$

The idea of the proof of Theorem \ref{thm::ext_lower_bound} is to derive a reward distribution such that the expected reward is given by $\nu$ but that the regret of any algorithm applied to the problem with that reward distribution is bounded below by that incurred when playing a related $K$-armed bandit problem. This is the same approach used to prove Theorem \ref{thm::standard_lip_lower_bound}, but here the proof is adapted to handle the more complex reward functions.

Fix $K \in \mathbb{N}$ to be defined later, and let $\delta=L(1/2c_{1,M}K)^{M+1}$. We introduce a function $f_{\delta}: [K] \rightarrow [0,1]$ which will be used to associate arms of a particular $K$-armed bandit problem with points in the CAB action space. We define this function as follows, \begin{equation}
f_{\delta}(a) :=(2a-1)\delta
\end{equation} Now let $\mathcal{J}(a^*,\delta,M)$ be the $K$-armed bandit problem instance where for $a \in [K]$ we have $\mu_a = \nu_{x^*,\delta,M}(f_\delta(a))$. By the definition of $f_\delta$ we we have that $\mu_{a^*}=0.5+\delta$ and that $\mu_a=0.5$ for $a \in [K]$, $a\neq a^*$.

Let \texttt{ALG} be any algorithm for the CAB problem instance $\mathcal{I}(x^*,\delta,M)$ - i.e. a rule which selects actions $x_1,x_2,\dots \in [0,1]$. Then define \texttt{ALG'} as an associated algorithm which for the MAB problem instance $\mathcal{J}(a^*,\delta)$ which makes decisions on the basis of those of \texttt{ALG} as follows. When \texttt{ALG} selects an action $x_t \in [0,1]$, \texttt{ALG'} selects an action $a_t=a(x_t) \in [K]$ such that $x_t \in (f_{x^*,\delta,M}(a_t)-1/2K,f_{x^*,\delta,M}(a_t)+1/2K]$. By the definition of the MAB problem, \texttt{ALG'} receives reward $r$ which is a Bernoulli random variable with parameter $\mu_{a_t}$. \texttt{ALG} receives reward $r_x$ defined as follows, \begin{equation}
r_x = \begin{cases} &r \quad \text{ with probability } p_x \in [0,1], \\
&X \quad \text{ otherwise,} \end{cases}
\end{equation} where $X$ is a Bernoulli variable with parameter $0.5$.

Choosing the probability $p_x$ as follows, \begin{displaymath}
p_x = \frac{0.5-\nu_{x^*,\delta,M}(x)}{0.5-\mu_{a(x)}}
\end{displaymath} we then have \begin{align*}
\mathbb{E}(r_x| x) &=(1-p_x)\mathbb{E}(X) + p_x \nu_M(f(a(x))) \\
&= 0.5 -0.5p_x + p_x\nu_{x^*,\delta,M}(f(a(x)))         \\
&= \nu_{x^*,\delta,M}(x) 
\end{align*} 

The construction of \texttt{ALG} and \texttt{ALG'} ensures that \begin{displaymath}
\nu_{x^*,\delta,M}(x_t) = \mathbb{E}(r_{x_t}|x_t) \leq \mathbb{E}(r|a_t) = \mu_{a_t}.
\end{displaymath} It follows that $\sum_{t=1}^T \nu_{x^*,\delta,M}(x_t) \leq \sum_{t=1}^T \mu_{a_t}$ and since $\nu_{x^*,\delta,M}(x^*)=\mu_{a^*}$ we have \begin{displaymath}
\mathbb{E}(R(T)|\mathcal{I}) \geq \mathbb{E}(R'(T)|\mathcal{J}).
\end{displaymath} Thus any lower bound on the regret of \texttt{ALG'} on $\mathcal{J}$ serves as a lower bound on the regret of \texttt{ALG} on $\mathcal{I}$. Recall, that Theorem 6 can be used to lower bound the regret of any algorithm for $\mathcal{J}$, and thus all that remains is to specify a choice of $\delta$ to achieve the required bound.

Theorem 6 requires $\delta \leq \sqrt{c_0K/T}$, so we select \begin{displaymath}
K=\Bigg(\frac{T}{c_0}\bigg(\frac{1}{(2c_{1,M})^{2M+2}}\bigg)\Bigg)^{{1}/{(2M+3)}}
\end{displaymath} so that this is satisfied. Then by Theorem 6, there exists an instance $\mathcal{J}$ such that \begin{displaymath}
\mathbb{E}(R'(T)|\mathcal{J}) \geq \Omega(\delta T) = \Omega\bigg(T^{1-\frac{M+1}{2M+3}}\bigg)
\end{displaymath} and therefore $\mathbb{E}(R(T)|\mathcal{I}) \geq \Omega(T^{(M+2)/(2M+3)})$ as required. $\square$

\newpage
\section{Finite and (Generalised) Linear Function Classes} \label{sec::finitegenlin}
Equipped with the general bound of Theorem \ref{prop::generaleluderregret}, providing regret bounds for specific function classes and action sets is a matter of bounding the eluder dimension $dim_E(\mathcal{F},\kappa(T))$ and ball width function $\beta^*_t(\mathcal{F},\delta,\alpha,\lambda)$. 
In the setting of sub-Gaussian reward noise, \cite{RussoVanRoy2014} provide bounds for $dim_E(\mathcal{F},T^{-1})$ and the sub-Gaussian version of the ball-width function for three simple function settings: finitely many actions, linear function classes, and generalised linear function classes. We present analogous results for these settings under sub-exponential reward noise.

\subsection{Eluder Dimension}
The eluder dimension does not depend on the reward noise, and thus translates directly from the work of \cite{RussoVanRoy2014}. Thus for finite function classes, we may bound the eluder dimension as $dim_E(\mathcal{F},\epsilon) \leq |\mathcal{A}|$ for all $\epsilon>0$. For linear reward functions $f_0(a) = \theta^T\phi(a)$ where $\theta \in \Theta \subset \mathbb{R}^d$ such that $\mathcal{F}=\{f_\rho, \rho \in \Theta\}$. If there exist constants $S$ and $\gamma$, such that $||\rho||_2 \leq S$ and $||\phi(a)||_2\leq \gamma$ for all $a \in \mathcal{A}$ then the eluder dimension may be bounded as $dim_E(\mathcal{F},\epsilon) \leq 3d\frac{e}{e-1}\log(3+3(\frac{2S}{\epsilon})^2)+1$. Finally, consider generalised linear reward functions $f_0(a)=g(\theta^T\phi(a))$ where again $\theta \in \Theta \subset \mathbb{R}^d$ and $\mathcal{F}=\{f_\rho,\rho \in \Theta\}$, and where $g(\cdot)$ is a differentiable and strictly increasing function. If there exist constants $\underline{h}, \overline{h}, S$ and $\gamma$ such that for all $\rho \in \Theta$ and $a \in \mathcal{A}$, $0 \leq \underline{h} \leq g'(\rho^T\phi(a)) \leq \overline{h}$, $||\rho||_2 \leq S$, and $||\phi(a)||_2 \leq \gamma$  then the eluder dimension can be bounded as $\dim_E(\mathcal{F},\epsilon)\leq 3dr^2\frac{e}{e-1}\log(3r^2+3r^2(\frac{2S\overline{h}}{\epsilon})^2)+1$, where $r=\sup_{\tilde\theta,a}g'(<\phi(a),\tilde\theta>)/\inf_{\tilde\theta,a}g'(<\phi(a),\tilde{\theta}>)$ bounds the ratio between the maximal and minimal slope of $g$.

\subsection{Ball Width Function}
For finite function classes, and $\alpha=0$ we have $\beta^*_n(\mathcal{F},\delta,0,\lambda)=\frac{\log(|\mathcal{F}|/\delta)}{\lambda(1-2\lambda\sigma^2)}$. For both the class of linear and generalised linear reward functions we have $\log N(\alpha,\mathcal{F},||\cdot||_\infty)=O(d\log(1/\alpha))$ from \cite{RussoVanRoy2014}. It follows from the definition \eqref{eq::bwf} that in both cases $\beta^*_T(\mathcal{F},\delta,1/T^2,\lambda)=O(d\log(T/\delta))$.

\subsection{Regret Bounds}
As a result, for finite function classes we have,  \begin{equation*}
BR(T) \leq 1+ (|\mathcal{A}|+1)C + 4\sqrt{\frac{|\mathcal{A}|\log(2|\mathcal{F}|T)}{\lambda(1-2\lambda\sigma^2)}T}.
\end{equation*} For linear and generalised linear function classes we have, for $\delta \leq 1/2T$, \begin{equation*}
BR(T) = O\Big(d\log(T)+\sqrt{d^2\log(T+T/\delta)T}\enspace\Big).
\end{equation*}

The orders, with respect to $T$, of these bounds match those of Russo and Van Roy's bounds for the sub-Gaussian case, and are optimal up to the small contribution of the logarithmic factors.

\end{document}